\def\eqref#1{equation~\ref{#1}}
\def\1{\bm{1}}
\DeclareMathAlphabet{\mathsfit}{\encodingdefault}{\sfdefault}{m}{sl}
\SetMathAlphabet{\mathsfit}{bold}{\encodingdefault}{\sfdefault}{bx}{n}
\newcommand{\R}{\mathbb{R}}
\newtheorem{theorem}{Theorem}[section]
\newtheorem{corollary}{Corollary}[theorem]
\newtheorem{lemma}[theorem]{Lemma}
\newtheorem{proposition}[theorem]{Proposition}
\theoremstyle{definition}
\newtheorem*{remark}{Remark}
\newcommand{\N}{\mathbb{N}}
\newcommand{\Rd}{\mathbb{R}^d}
\newcommand{\EE}{\mathbb E}
\newcommand{\Prob}{\mathcal{P}}
\newcommand{\eProb}{\mathcal{P}_{\mathrm{e}}}
\renewcommand{\d}{\, \mathrm{d}}
\newcommand{\bmu}{\boldsymbol{\mu}}
\newcommand{\bgamma}{\boldsymbol{\gamma}}
\newcommand{\bnu}{\boldsymbol{\nu}}
\newcommand{\bpi}{\boldsymbol{\pi}}
\newcommand{\IdxSet}{\operatorname{Idx}}
\newcommand{\BX}{\mathcal X}
\newcommand{\BU}{U}
\newcommand{\PX}{\mathcal X}
\newcommand{\PY}{\mathcal Y}
\newcommand{\e}{\mathrm e}
\newcommand{\I}{\mathrm i}
\newcommand{\Sph}{\mathbb{S}}
\newcommand{\SW}{\operatorname{SW}}
\newcommand{\bSW}{{\operatorname{\mathbf{SW}}}}
\newcommand{\SQW}{{\operatorname{\mathbf{SQW}}}}
\newcommand{\bW}{{\operatorname{\mathbf{W}}}}
\newcommand{\DSW}{{\operatorname{\mathbf{DSW}}}}
\newcommand{\W}{\operatorname{W}}
\newcommand{\std}{\operatorname{std}}
\newcommand{\supp}{\operatorname{supp}}
\newcommand{\Span}{\operatorname{span}}
\newcommand{\IsoMap}{\operatorname{IM}}
\newcommand{\highRB}[1]{{\color{black}#1}}
\newcommand{\revise}[1]{{#1}}
\title{Slicing Wasserstein over Wasserstein\\ via Functional Optimal Transport}
\author{Moritz Piening and Robert Beinert\\
Institut für Mathematik,
Technische Universität Berlin, Germany \\
\texttt{\{piening, beinert\}@tu-berlin.de}
}
\begin{document}

\maketitle

\begin{abstract}
Wasserstein distances define a metric between probability measures on arbitrary metric spaces, 
including \emph{meta-measures} (measures over measures).
The resulting \emph{Wasserstein over Wasserstein} (WoW) distance is a powerful, but computationally costly tool for comparing datasets or distributions over images and shapes.
Existing sliced WoW accelerations rely on parametric meta-measures or the existence of high-order moments, leading to numerical instability. As an alternative, we propose to leverage the isometry between the 1d Wasserstein space and the quantile functions in the function space $L_2([0,1])$.
For this purpose, we introduce a general sliced Wasserstein framework for arbitrary Banach spaces. 
Due to the 1d Wasserstein isometry, 
this framework defines a sliced distance between 1d meta-measures via infinite-dimensional $L_2$-projections, 
parametrized by Gaussian processes. 
Combining this 1d construction with classical integration over the Euclidean unit sphere yields the \emph{double-sliced Wasserstein} (DSW) metric for general meta-measures. We show that DSW minimization is equivalent to WoW minimization for discretized meta-measures, while avoiding unstable higher-order moments and  computational savings. Numerical experiments on datasets, shapes, and images validate DSW as a scalable substitute for the WoW distance.
\end{abstract}

\section{Introduction}

Optimal transport (OT) enables geometrically meaningful Wasserstein distances between probability measures on arbitrary Polish spaces $\mathcal{X}$ \citep{villani2003topics}, while remaining computationally tractable. This has had a profound impact on machine learning, where Wasserstein distances are used to train neural networks \citep{tong2024improving_ot_flowmatching} and to compare data distributions  \citep{yang2019pointflow_ot_nn}.
A key feature of OT is its applicability to non-Euclidean spaces, even allowing the definition of Wasserstein distances on Wasserstein spaces $\Prob_2(\mathcal{X})$ \citep{bonet2025flowing}. This is particularly useful for comparing distributions over non-Euclidean objects. For example, Euclidean distances as ground metric between two images often yield poor results \citep{stanczuk2021wasserstein}, whereas Wasserstein distances are robust to small image perturbations \citep{beckmann2025normalized}. Similarly, comparing point clouds \citep{nguyen2021pointcloud_sliced} is natural with OT but not even well-defined with Euclidean distances.
While most OT applications focus either on comparing pairwise objects or distributions over Euclidean spaces, recent work leverages Wasserstein distances on Wasserstein spaces for non-Euclidean domains, such as image \citep{dukler2019wasserstein} or point cloud  \citep{haviv2025wasserstein} distributions.

% The underlying concept of a multilevel OT framework has been concurrently introduced by several researchers, leading to distances known as hierarchical OT \citep{schmitzer2013hierarchical, lee2019hierarchical, yurochkin2019hierarchical}, mixture Wasserstein~\citep{chen2018optimal,chen2019aggregated,delon2020wasserstein} or Wasserstein over Wasserstein (WoW)~\citep{dukler2019wasserstein, bonet2025flowing}. 
% %This distance is constructed by replacing the ground metric in the standard Wasserstein distance with a Wasserstein distance on the base spaces.
%  Beyond images and shapes, such distances have been employed in domain adaptation \citep{lee2019hierarchical, el2022hierarchical}, single-cell analysis \citep{lin2023multisource}, point cloud registration \citep{steuernagel2023point}, generative modelling \citep{atanackovic2025meta, haviv2025wasserstein}, document analysis \citep{yurochkin2019hierarchical}, Gromov--Wasserstein approximations \citep{memoli2011gromov, piening2025novel}, and reinforcement learning \citep{ziesche2023wasserstein}.
% Combining this framework with another metric space $\mathcal{Y}$ representing dataset labels, an extension to the metric product space {$\mathcal{Y}  \times \Prob_2(\mathcal{X})$ }compares datasets via the OT dataset distance (OTDD) on 
% {$\Prob_2( \mathcal{Y}  \times \Prob_2(\mathcal{X})$} \citep{alvarez2020geometric}. 
% However, all these approaches come at a high computational burden due to pairwise Wasserstein calculations required by all these distances.
The underlying concept of multilevel OT has been introduced under various names, including hierarchical OT \citep{schmitzer2013hierarchical, lee2019hierarchical}, mixture Wasserstein \citep{chen2018optimal, chen2019aggregated, delon2020wasserstein}, and Wasserstein over Wasserstein (WoW) \citep{bonet2025flowing}. It has applications beyond images and shapes, including domain adaptation \citep{lee2019hierarchical, el2022hierarchical}, single-cell analysis \citep{lin2023multisource}, point cloud registration \citep{steuernagel2023point}, Bayesian inference \citep{nguyen2024summarizing}, generative modelling \citep{atanackovic2025meta, haviv2025wasserstein}, document analysis \citep{yurochkin2019hierarchical}, Gromov--Wasserstein approximations \citep{memoli2011gromov, piening2025novel}, and reinforcement learning \citep{ziesche2023wasserstein}.
Extending this framework with another Polish space $\mathcal{Y}$ for dataset labels yields the OT dataset distance (OTDD), defined on $\Prob_2(\mathcal{Y} \times \Prob_2(\mathcal{X}))$ \citep{alvarez2020geometric}. However, all these approaches incur high computational cost due to repeated pairwise Wasserstein evaluations.

% Due to the complexity of OT, sliced Wasserstein distances  \citep{bonneel2015slicedbarycenters} play an important role as efficient OT-based alternatives to standard Wasserstein distances. 
% Originally developed for probability measures on Euclidean spaces, they have been extended to the sphere \citep{bonet2023spherical,quellmalz2023slicedspherical}, 
% manifolds \citep{bonet2025sliced}, functions \citep{garrett2024validating_function}, hyperbolic spaces \citep{bonet2023hyperbolic},
% the rotation group \citep{quellmalz2024parallelly_rotation_group},
% and positive semi-definite matrices \citep{bonet2023sliced_psd_mat}
% As for WoW-type distances, recent papers have proposed sliced accelerations for the special case of Gaussian mixtures \citep{nguyen2024summarizing, piening2025slicing_gaussian}, but also for general measures over measures (or \emph{metameasures}) via the sliced OTDD (s-OTDD, \cite{nguyen2025sotdd}). 
% Enabling efficient distance calculations between such metameasures, 
% the s-OTDD utilizes a hierarchical slicing approach based on the method of moments.
% While this approach leads to good empirical results, the s-OTDD is not well-defined for arbitrary probability distributions since it requires all moments to be finite.
% Moreover, since s-OTDD computations rely on quickly growing higher-order moments,
% practical implementations are limited to a small number of moments to avoid numerical overflow.
% In particular, the original s-OTDD implementation only employs the first five moments.

Due to the complexity of OT, sliced Wasserstein distances \revise{\citep{bonneel2015slicedbarycenters, nguyen2025intro}} provide efficient OT-based alternatives to standard Wasserstein distances. Initially developed for probability measures on Euclidean spaces, they have since been extended to the sphere \citep{bonet2023spherical, quellmalz2023slicedspherical}, manifolds \citep{bonet2025sliced}, functions \citep{garrett2024validating_function}, hyperbolic spaces \citep{bonet2023hyperbolic}, the rotation group \citep{quellmalz2024parallelly_rotation_group}, and %positive semi-definite 
matrices \citep{bonet2023sliced_psd_mat}.
\revise{Repeated sliced Wasserstein distances can be linearized by mapping measures onto function spaces \citep{naderializadeh2021pooling,nguyen2025intro}.
}
For WoW-type distances, sliced accelerations have been proposed for Gaussian mixtures \citep{nguyen2024summarizing, piening2025slicing_gaussian} and more generally for measures over measures (\emph{meta-measures}) via \revise{sliced Wasserstein Busemann Gaussian (SWBDG/SWB1DG) distances \citep{bonet2025busemann} and} the sliced OTDD (s-OTDD) \citep{nguyen2025sotdd}. 
\revise{The SWBDG and SWB1DG, which appeared after the original submission of our work, are based on Busemann functions whose level sets act as a natural generalization of affine hyperplanes in the space of meta-measures.
%The SWB1DG approach has been proposed shortly after our approach and relies on so-called Busemann functions,
% whose level sets can be interpreted 
%as a natural generalization of affine hyperplanes 
%in the space of meta-measures.
The numerical implementation 
relies on  Gaussian approximations
and a closed-form of the Busemann function.}
\revise{As an alternative, the} s-OTDD employs a hierarchical slicing approach based on the method of moments. However, it is only well-defined for finite moments, and practical implementations are limited to the first few moments because of numerical instability -- originally, the first five.
% \revise{The SWB1DG approach has been proposed shortly after our approach and relies on so-called Busemann functions,
% which plays an essential role in the projection onto geodesic rays,
% and whose level sets can be interpreted 
% as a natural generalization of affine hyperplanes 
% in the space of meta-measures.
% The numerical implementation 
% relies on Gaussian approximations
% and 
% exploits a closed-form expression
% of the Busemann function
% for measures of this kind.}

In this paper, we aim to circumvent the issues of the s-OTDD.
Therefore, we build on theoretical ideas originally proposed in \citep{han2023sliced} to develop a computable sliced Wasserstein metric on general Banach spaces. 
Employing the isometry between 1d probability measures in the Wasserstein space and 
quantile functions embedded in the space of square-integrable functions $L_2([0, 1])$, we utilize this
to define a sliced Wasserstein metric on the space of 1d meta-measures 
via $L_2$-projections. 
Due to the lack of a uniform distribution on the unit ball of infinite-dimensional function spaces, 
we parametrize our projection directions as Gaussian processes \citep{kanagawa2018gaussian}.
To extend this idea to multi-dimensional meta-measures, 
we combine this approach with a classical slicing approach, mapping these meta-measures to 1d meta-measures. 
Lastly, we prove that the minimization of our sliced distance results in WoW minimization. 
This leads to the following contributions:
\begin{itemize}
    \item We generalize the sliced Wasserstein distance to arbitrary Banach spaces. 
    Moreover, we show how two distinct parameterizations of the random projections may result in equivalent metrics. As a special case, this allows for a sliced distance between 1d meta-measures.
    \item Beyond 1d meta-measures, we extend our approach to the multivariate case by introducing the double-sliced Wasserstein (DSW%
    \footnote{\revise{Note that
    the acronym DSW is also used for the distributional sliced Wasserstein distance
    \citep{nguyen2021distributional},
    which modifies the slicing distribution of the classical sliced Wasserstein distance.}}%
    ) metric between meta-measures. 
    Illustrating the usefulness of our DSW metric as a WoW replacement, we
    prove a form of topological metric equivalence between the two for discretized meta-measures. 
    \item Lastly, we present various numerical experiments showcasing the advantages of our approach, allowing for meaningful distribution comparisons for
    datasets, shapes, and images.
\end{itemize}
% \todo{Wasserstein distance on the unit sphere $\Sph^{d-1}$ 
% \citep{quellmalz2023sliced_optimal_transport},
% Wasserstein distance on a Riemannian manifold
% \citep{bonet2025sliced}}

% Content:
% 1. First computationally tractable sliced functional Wasserstein distance
% 2. Leverage to tackle sliced Wasserstein (SW) distances with OT cost instead of Euclidean cost
% 3. Euclidean ground cost has several disadvantages in imaging applications, see "Wasserstein GANs succeed because they fail" and work on perceptual
% 4. Optimal Transport (OT) cost often good in imaging see Wasserstein on Wasserstein and whatnot and other applications
% % Such distances have been employed in imaging \citep{dukler2019wasserstein, luzi2023evaluating}, domain adaptation \citep{lee2019hierarchical, el2022hierarchical}, single-cell analysis \citep{lin2023multisource}, point cloud registration \citep{steuernagel2023point}, generative modelling \citep{atanackovic2025meta, haviv2025wasserstein}, document analysis \citep{yurochkin2019hierarchical}, Gromov--Wasserstein approximations \citep{memoli2011gromov, piening2025novel} and reinforcement learning \citep{ziesche2023wasserstein}.\citep{han2023sliced}.

%-----------------------------------
\section{Wasserstein Distances}
%-----------------------------------

\highRB{The so-called Wasserstein distance or Kantorovich--Rubinstein metric
is an optimal transport-based similarity gauge
between probabilities on a common Polish space.
To this end,}
let $\PX$ be a Polish space,
\highRB{let} $\Prob(\mathcal X)$ \highRB{be} the space of Borel probability measures on $\PX$
\highRB{with respect to the Borel $\sigma$-algebra induced by the underlying metric,}
and \highRB{let} 
\revise{$\Prob_p(\PX) 
\coloneqq 
\{\mu \in \Prob(\PX) \mid \exists x_1 \in \PX :
\int_\PX d^p(x_1,x_2) \d \mu(x_2) < \infty \}$},
$p \in [1,\infty)$,
\highRB{be} the subset of measures with finite $p$th moment.
\highRB{For $\mu \in \Prob(\PX)$ and a second Polish space $\PY$,}
the \emph{push-forward} by a measurable map 
$T\colon \PX \to \PY$ is defined by
$T_\sharp \, \mu \coloneqq \mu \circ T^{{-}1} \in \Prob(\PY)$.
The set of \emph{transport plans}
between \highRB{$\mu \in \Prob(\PX), \nu \in \Prob(\mathcal Y)$} is given by
\begin{equation}
    \Gamma(\mu, \nu) 
    \coloneqq
    \bigl\{
    \gamma \in\Prob(\mathcal X \times \mathcal Y)
    \bigm\vert
    \pi_{1,\sharp} \, \gamma = \mu,
    \pi_{2,\sharp} \, \gamma = \nu
    \bigr\},
\end{equation}
where \highRB{$\pi_{i}$ denotes the canonical projection onto the $i$th component}.
\highRB{For a complete, separable metric space $(\BX,d)$,
the \emph{(2-)Wasserstein distance}}
\begin{equation}
    \label{eq:wasserstein}
    \W(\mu, \nu; \mathcal X)
    \coloneqq
    \inf_{\gamma \in\Gamma(\mu, \nu)}
    \biggr(
    \int_{\mathcal X \times \mathcal X}
    d^2(x_1, x_2)
    \, \d \gamma(x_1, x_2) 
    \biggl)^{\frac{1}{2}}
\end{equation}
\highRB{defines a metric on $\Prob_2(\PX)$.
More precisely,
$(\Prob_2(\PX), \W)$ is again a complete separable metric space,
allowing the construction of the so-called \emph{Wasserstein over Wasserstein \emph{(WoW)} distance}
$\bW(\cdot,\cdot; \PX) \coloneqq \W(\cdot, \cdot;\Prob_2(\PX))$,
which is studied in \citep{bonet2025flowing}.}

\highRB{In difference to other similarity gauges like}
the Kullback–Leibler divergence or the total variation, 
\highRB{Wasserstein distances} leverage the underlying geometry,
allowing for meaningful comparisons between empirical measures. 
\highRB{Although the Wasserstein distance relies on a linear program,
the actual calculation is computationally costly.
For two empirical measures} supported at $n$ points
\highRB{in $\R^d$ equipped with the Euclidean metric},
the exact computation has complexity $\mathcal{O}(n^3 \log n)$.
\highRB{The approximate computation} 
\highRB{based on entropic regularization} 
still has complexity $\mathcal{O}(n^2 \log n)$,
see \citep{PeyreCuturi2019}.
Notably, 
this computational burden becomes even more involved
for \highRB{non-Euclidean metric spaces,
where the computation of the underlying distance itself is challenging.
For instance,
the computation of the WoW distance relies on 
the pointwise evaluations of Wasserstein distances.
If the empirical meta-measures in $\Prob_2(\Prob_2(\Rd))$
are supported on $N$ empirical measures on $\Prob_2(\R^d)$,
each with $n$ support points,
then the approximate calculation of the required distance matrix
already has complexity $\mathcal{O}(N^2 n^2 \log n)$.}

\highRB{From a computational point of view,
the Wasserstein distance
on $(\R, \lvert \cdot - \cdot \rvert)$ 
is a notable exception
since this may be evaluated analytically.
To this end,
for $\mu \in \Prob(\R)$,
its \emph{quantile function} $Q_\mu \colon (0,1) \to \R$ is given by}
$Q_\mu (s) 
\coloneqq 
\inf \, \bigl\{ x \in \R
\bigm\vert
\mu((-\infty, x]) \ge s
\bigr\}.$
\highRB{The Wasserstein distance now becomes}
\begin{equation}    
    \label{eq:1d_continuous_quantile_formula}
    \W(\mu, \nu; \, \R)
    = 
    \biggl( 
    \int_0^1 
    \lvert Q_\mu(s) - Q_{\nu}(s) \rvert^2 \, \d s
    \biggr)^{\frac{1}{2}},
    \quad
    \mu,\nu \in \Prob_2(\R),
\end{equation}
\highRB{meaning that the \emph{quantile mapping}}
\begin{equation} \label{eq:iso_embedding}
    q\colon \Prob_2(\R) \to L_2([0,1]),
    \quad 
    \mu \mapsto Q_\mu
\end{equation}
is an isometric embedding into
\highRB{the space of square-integrable functions,}
see \citep{villani2003topics}. 
For empirical  measures, 
the quantile functions are piecewise constant
and \highRB{can be efficiently computed
by sorting the support points.}

%---------------------------------------------
\section{Sliced Wasserstein Distances}
%---------------------------------------------

\highRB{At their core,
all sliced Wasserstein distances exploit easy-to-compute, 1d optimal transports
to define} efficient alternatives to the standard Wasserstein distance.
\highRB{Originally,
the sliced Wasserstein distance has been studied for measures in $\Prob_2(\R^d)$
and is based on the \emph{slicing operator}}
\begin{equation}
    \label{eq:projection}
    \pi_\theta
    \colon
    \R^d \to \R
    ,\;
    x \mapsto \langle \theta, x \rangle,
    \qquad
    \theta \in
    \Sph^{d{-}1} \coloneqq \{x \in \R^d \mid \lVert x \rVert = 1\},
\end{equation}
\highRB{with respect to the Euclidean inner product and norm.
The \emph{sliced (2-)Wasserstein distance} reads as}
\begin{equation}   
    \label{eq:sliced_wasserstein}
    \SW(\mu, \nu) 
    \coloneqq
    \biggl(
    \int_{\Sph^{d{-}1}}
    \W^2(\pi_{\theta,\sharp} \, \mu, \pi_{\theta,\sharp} \, \nu; \, \R)
    \d\Sph^{d{-}1}(\theta)
    \biggr)^{\frac{1}{2}},
\end{equation}
\highRB{where we integrate with respect to the uniform probability on $\Sph^{d-1}$.}
Similar to the Wasserstein distance,
SW metricizes the weak convergence 
\citep{bonnotte2013unidimensional, nadjahi2020statistical}.
The spherical integral \highRB{is usually} approximated
by Monte Carlo \highRB{schemes} \citep{bonneel2015slicedbarycenters, nguyen2024quasi, hertrich2025qmc_slicing}
or Gaussian approximation \citep{nadjahi2021fast}.

\subsection{Slicing Infinite Dimensional Banach Spaces}
\label{sec:slice-banach}

\highRB{As preliminary step towards an implementable sliced WoW distance,
we consider the slicing on an infinite dimensional, separable Banach space $U$
\revise{with norm $\lVert \cdot \rVert$.
We denote the continuous dual by $U^*$
and equip it with the dual norm,
i.e.,
$\lVert v \rVert \coloneqq \sup \{\lvert v(u) \rvert \mid u \in U, \lVert u\rVert \le 1 \}$
for all $v \in U^*$}.
Relying on the dual pairing,
we generalize the slicing operator \eqref{eq:projection} by}
\begin{equation} 
    \label{eq:proj-Banach}
    \pi_v 
    \colon
    \BU \to \R
    ,\;
    u \mapsto \langle v, u \rangle
    \revise{\coloneqq v(u)},
    \qquad
    v \in \BU^*.
\end{equation} 
\highRB{The crucial point
in defining a sliced distance on 
\revise{$\Prob_2(\BU) \coloneqq \{\mu \in \Prob(U) \mid \int_U \lVert u \rVert^2 \d \mu(x) < \infty \}$}.
is that there exists no uniform probability measure
on the infinite dimensional sphere.
As remedy,
we choose an arbitrary 
\revise{$\xi \in \Prob_2(\BU^*) \coloneqq \{\zeta \in \Prob(U^*) \mid \int_{U^*} \lVert v \rVert^2 \d \zeta(v) < \infty \}$}
and define the $\xi$-based} \emph{sliced Wasserstein distance} as
\begin{equation}     
    \label{eq:sw-Banach}
    \SW(\mu, \nu; \, \xi) 
    \coloneqq
    \biggl(
    \int_{\BU^*}
    \W^2(\pi_{v,\sharp} \, \mu, \pi_{v,\sharp} \, \nu; \, \R)
    \d\xi(v)
    \biggr)^{\frac{1}{2}},
    \quad
    \mu, \nu \in \Prob(\BU).
\end{equation}
This approach extends the slicing on Hilbert spaces \citep{han2023sliced}.
\highRB{However, 
unlike \citep{han2023sliced},
we do not construct specific measures on the sphere.
This} allows the use of easy-to-sample slicing measures. 
\highRB{If the support of $\xi$ covers all directions in $\BU^*$,
the $\xi$-based SW distance becomes a metric.
\revise{Here, the crucial point is the definiteness.
For this, we show the Lipschitz continuity of the integrant in \eqref{eq:sw-Banach}
and exploit the uniqueness of the characteristic function, see Appendix~\ref{app:banach}.}}

\begin{theorem}
    \label{thm:SW-metric}
    \highRB{For $\xi \in \Prob_2(\BU^*)$,
    the $\xi$-based SW distance defines a well-defined pseudo-metric.
    If}
    $\supp \xi \cap \Span v 
    \not \in
    \bigl\{ \emptyset, \{0\} \bigr\}$ 
    for all \revise{$v \in \BU^* \setminus\{0\}$},
    then \eqref{eq:sw-Banach} defines a metric on $\Prob_2(\BU)$.
\end{theorem}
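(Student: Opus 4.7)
The plan is to first establish well-definedness and the pseudo-metric axioms, and then to exploit the support condition together with a Cram\'er--Wold type argument to upgrade the pseudo-metric to a metric. For each $u \in \BU$, the map $v \mapsto \pi_v(u) = \langle v, u \rangle$ is continuous on $\BU^*$, and since $\lvert \pi_v(u) \rvert \leq \|v\| \, \|u\|$, dominated convergence yields that $v \mapsto \pi_{v,\sharp}\mu$ is continuous from $\BU^*$ into $(\Prob_2(\R), \W)$. In particular, $v \mapsto \W^2(\pi_{v,\sharp}\mu, \pi_{v,\sharp}\nu; \R)$ is continuous and hence Borel measurable. Using the product coupling together with $(x-y)^2 \leq 2(x^2 + y^2)$, one obtains
\begin{equation*}
\W^2(\pi_{v,\sharp}\mu, \pi_{v,\sharp}\nu; \R) \;\leq\; 2 \, \|v\|^2 \, \bigl( M_2(\mu) + M_2(\nu) \bigr),
\end{equation*}
where $M_2$ denotes the second moment, so the integral in \eqref{eq:sw-Banach} is finite as soon as $\xi \in \Prob_2(\BU^*)$. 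Non-negativity, symmetry, and vanishing on the diagonal are inherited directly from $\W$ on $\R$, while the triangle inequality follows from the pointwise triangle inequality of $\W(\cdot, \cdot; \R)$ combined with Minkowski's inequality in $L_2(\BU^*, \xi)$.

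For the metric part, suppose $\SW(\mu, \nu; \xi) = 0$. Then the continuous function $v \mapsto \W(\pi_{v,\sharp}\mu, \pi_{v,\sharp}\nu; \R)$ vanishes $\xi$-almost everywhere, and since its zero set is closed, it contains $\supp \xi$. Set $S \coloneqq \{ v \in \BU^* \mid \pi_{v,\sharp}\mu = \pi_{v,\sharp}\nu \}$. Since $\pi_{cv,\sharp}\mu$ is the pushforward of $\pi_{v,\sharp}\mu$ under scalar multiplication by $c$, the set $S$ is invariant under rescaling, i.e.\ $v \in S$ implies $\Span v \subseteq S$. The support hypothesis guarantees that every line $\Span v \subset \BU^*$ meets $\supp \xi$ in a point different from $0$; combined with $\supp \xi \subseteq S$ and the scale-invariance of $S$, this yields $S = \BU^*$, so that all one-dimensional projections of $\mu$ and $\nu$ agree.

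To finish, I would take the one-dimensional Fourier transform of the identity $\pi_{v,\sharp}\mu = \pi_{v,\sharp}\nu$ at the argument $t = 1$, obtaining equality of the characteristic functionals $\hat\mu(v) = \hat\nu(v)$ on all of $\BU^*$. Since $\BU$ is a separable Banach space, the characteristic functional uniquely determines a Borel probability measure, a classical result I would cite rather than reprove, yielding $\mu = \nu$. I expect the main subtlety to lie not in the support-plus-homogeneity step, which is essentially algebraic, but in carefully assembling the standard measure-theoretic ingredients, namely the joint continuity needed to pass from the $\xi$-almost-everywhere statement to a pointwise one on $\supp \xi$, and the infinite-dimensional Cram\'er--Wold theorem used in the final step.
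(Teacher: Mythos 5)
Your proposal is correct and follows essentially the same route as the paper: measurability of the integrand via (Lipschitz) continuity in $v$, finiteness from a second-moment bound, the pseudo-metric axioms inherited from $\W(\cdot,\cdot;\R)$, and definiteness by combining the closedness of the zero set with the scale-invariance of the set $\{v : \pi_{v,\sharp}\mu = \pi_{v,\sharp}\nu\}$ and the uniqueness of characteristic functionals on separable Banach spaces. The only cosmetic differences are that the paper bounds the integrand by $\lVert v\rVert^2\,\W^2(\mu,\nu;\BU)$ using an optimal coupling rather than the product coupling, and folds the rescaling step directly into the characteristic-function computation instead of first proving $S=\BU^*$.
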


\highRB{If the slicing measure $\xi$ has full support,
then the assumption in Theorem~\ref{thm:SW-metric} is fulfilled,
and the $\xi$-based SW distance is a metric.}
Two measures $\xi_1, \xi_2 \in \Prob_2(\BU^*)$ are equivalent
if they are mutually absolutely continuous.
If their Radon--Nikodým derivatives 
${\d\xi_1}/{\d \xi_2}$ and ${\d\xi_2}/{\d \xi_1}$
are bounded,
then the resulting SW distances are metrically equivalent.
The proof is given in Appendix~\ref{app:banach}.

\begin{proposition}
    \label{prop:equivalance}
    Let $\xi_1, \xi_2 \in \Prob_2(\BU^*)$ be equivalent.
    If $\d\xi_1 / \d \xi_2$ and $\d\xi_2/ \d\xi_1$ are bounded,
    then we find $c_1, c_2 > 0$ such that
    \begin{equation}
        c_1 \SW(\mu, \nu; \, \xi_1)
        \le 
        \SW(\mu, \nu; \, \xi_2)
        \le
        c_2 \SW(\mu, \nu; \, \xi_1)
        \quad
        \forall \mu, \nu \in \Prob_2(\BU).
    \end{equation}
\end{proposition}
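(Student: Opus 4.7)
The plan is to reduce the claim to a direct application of the Radon--Nikod\'ym theorem inside the squared sliced distance. Since both SW distances are defined as integrals against the corresponding slicing measures of the same non-negative integrand
\begin{equation}
    F(v) \coloneqq \W^2(\pi_{v,\sharp}\mu, \pi_{v,\sharp}\nu; \, \R),
\end{equation}
the equivalence boils down to bounding $F$-integrals against $\xi_2$ by $F$-integrals against $\xi_1$ and vice versa.

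Concretely, I would first note that $F\colon \BU^* \to [0,\infty)$ is measurable (which follows from the measurability assertion already needed to make the $\xi$-based SW distance well defined in Theorem~\ref{thm:SW-metric}) and that, by equivalence of $\xi_1$ and $\xi_2$, the Radon--Nikod\'ym derivatives $h_{21} \coloneqq \d\xi_2/\d\xi_1$ and $h_{12} \coloneqq \d\xi_1/\d\xi_2$ exist. Let $M_{21} \coloneqq \esssup_{\xi_1} h_{21}$ and $M_{12} \coloneqq \esssup_{\xi_2} h_{12}$, both finite by assumption. The change-of-measure formula then yields
\begin{equation}
    \SW^2(\mu,\nu;\,\xi_2)
    = \int_{\BU^*} F(v) \, h_{21}(v) \d\xi_1(v)
    \le M_{21} \int_{\BU^*} F(v) \d\xi_1(v)
    = M_{21} \, \SW^2(\mu,\nu;\,\xi_1),
\end{equation}
and, symmetrically, $\SW^2(\mu,\nu;\,\xi_1) \le M_{12} \, \SW^2(\mu,\nu;\,\xi_2)$. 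Taking square roots gives the claim with $c_2 \coloneqq \sqrt{M_{21}}$ and $c_1 \coloneqq 1/\sqrt{M_{12}}$.

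I do not foresee any real obstacle: the argument is essentially a one-line Radon--Nikod\'ym computation applied twice. The only point that requires mild care is to ensure measurability of $v \mapsto F(v)$ on $\BU^*$, so that the change of variables against the Radon--Nikod\'ym derivative is legitimate; this has however already been settled in the proof of Theorem~\ref{thm:SW-metric} in Appendix~\ref{app:banach}, which I would simply cite. Finally, the finiteness of both integrals -- needed to avoid any $\infty \cdot 0$ ambiguity -- is not an issue because $\mu, \nu \in \Prob_2(\BU)$ and $\xi_1,\xi_2 \in \Prob_2(\BU^*)$ are assumed to have finite second moments, exactly as in the setting of Theorem~\ref{thm:SW-metric}.
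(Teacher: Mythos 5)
Your argument is correct and is essentially identical to the paper's proof: the paper also writes $\SW^2(\mu,\nu;\xi_2)$ as an integral against $\xi_1$ via the Radon--Nikod\'ym derivative and bounds it by the $L^\infty_{\xi_1}$-norm of $\d\xi_2/\d\xi_1$ times $\SW^2(\mu,\nu;\xi_1)$, leaving the symmetric direction implicit. Your additional remarks on measurability (via Lemma~\ref{lem:lip-W}) and finiteness are fine but not needed beyond what the paper already establishes.
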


\highRB{
In the finite-dimensional Euclidean setting,
special cases of the $\xi$-based SW distance correspond,
for instance,
to so-called energy measures on the sphere \citep{nguyen2023energy}
and the standard Gaussian \citep{nadjahi2021fast}.
Relying on the latter,
we obtain a strong equivalence to original SW \eqref{eq:sliced_wasserstein}
if $\xi$ is equivalent to the standard Gaussian.
The short proof,
\revise{which relies on Proposition~\ref{prop:equivalance}
and the fact that the classical SW distance \eqref{eq:sliced_wasserstein}
and the reference-based SW distance \eqref{eq:sw-Banach} 
with the standard Gaussian reference coincide on $\Prob_2(\R^d)$},
is given in Appendix~\ref{app:banach}.
}

\begin{proposition}
\label{corr:euclidean_equivalance}
    Let $\xi \in \Prob_2(\R^d)$ be equivalent to $\eta \sim \mathcal{N}(0, \mathbf{I}_d)$.
    If $\d\xi/\d\eta$ and $\d\eta/\d\xi$ are bounded,
    then we find $c_1, c_2 > 0$ such that
    \begin{equation}
        c_1 \SW(\mu, \nu ; \xi)
        \le
        \revise{\SW(\mu,\nu; \eta)}
        =
        \SW(\mu, \nu)
        \le
        c_2 \SW(\mu, \nu ; \xi)
        \quad
        \forall \mu, \nu \in \Prob_2(\R^d).
    \end{equation}
\end{proposition}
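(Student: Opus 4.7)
The plan is to reduce the statement to an application of Proposition~\ref{prop:equivalance} by first passing through the $\eta$-based sliced Wasserstein distance with $\eta = \mathcal{N}(0, \mathbf{I}_d)$, and then identifying this latter quantity as an explicit scalar multiple of the standard $\SW$. Since $\xi$ and $\eta$ are mutually absolutely continuous with bounded Radon--Nikod\'ym derivatives by hypothesis, Proposition~\ref{prop:equivalance} directly furnishes constants $c_1', c_2' > 0$ with $c_1' \SW(\mu,\nu;\eta) \le \SW(\mu,\nu;\xi) \le c_2' \SW(\mu,\nu;\eta)$ for all $\mu, \nu \in \Prob_2(\R^d)$. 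The whole task thus boils down to comparing $\SW(\cdot,\cdot;\eta)$ with the standard $\SW(\cdot,\cdot)$.

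For this comparison I would exploit the rotational symmetry of the standard Gaussian through its polar decomposition: if $v \sim \eta$, then $v = R\Theta$ with $R = \lVert v\rVert$ and $\Theta = v/\lVert v\rVert$ independent, $R^2 \sim \chi^2_d$, and $\Theta$ uniformly distributed on $\Sph^{d-1}$. Since the slicing map is linear, $\pi_{R\Theta} = R\,\pi_\Theta$, and since the 1d Wasserstein distance is homogeneous under scalings, $\W(T_{r,\sharp}\alpha, T_{r,\sharp}\beta;\R) = \lvert r\rvert\,\W(\alpha,\beta;\R)$ for $T_r(x) = rx$, the integrand in the definition of $\SW^2(\mu,\nu;\eta)$ factors as $R^2\,\W^2(\pi_{\Theta,\sharp}\mu, \pi_{\Theta,\sharp}\nu;\R)$. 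Independence of $R$ and $\Theta$ together with Fubini then yields
\begin{equation}
    \SW^2(\mu,\nu;\eta)
    = \E[R^2]\cdot \int_{\Sph^{d-1}} \W^2(\pi_{\theta,\sharp}\mu, \pi_{\theta,\sharp}\nu;\R)\d\Sph^{d-1}(\theta)
    = d\cdot \SW^2(\mu,\nu),
\end{equation}
using $\E[R^2] = d$ for $R^2\sim \chi^2_d$. Taking square roots and combining with the previous double inequality gives the proposition with $c_1 = 1/(c_2'\sqrt{d})$ and $c_2 = 1/(c_1'\sqrt{d})$.

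The only non-routine ingredient is the Gaussian-to-sphere identity in the second paragraph, which rests on three standard facts (the polar decomposition of the isotropic Gaussian, the 1d Wasserstein scaling, and $\E[\chi^2_d] = d$) that simply need to be invoked in the right order. I do not anticipate any serious obstacle: boundedness of both Radon--Nikod\'ym derivatives is exactly the hypothesis Proposition~\ref{prop:equivalance} requires, so once the Gaussian identity is in place the argument closes immediately.
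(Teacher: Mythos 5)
Your proposal is correct and follows essentially the same route as the paper: apply Proposition~\ref{prop:equivalance} to compare $\SW(\cdot,\cdot;\xi)$ with $\SW(\cdot,\cdot;\eta)$, then use the identity between $\SW(\cdot,\cdot;\eta)$ and the standard $\SW$, which the paper simply cites from \citet[Prop.~1]{nadjahi2021fast} whereas you derive it directly from the polar decomposition of the isotropic Gaussian. Your explicit computation of the constant ($\E[R^2]=d$) and the resulting $c_1, c_2$ is a correct, self-contained version of that cited step.
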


\highRB{Assuming that samples of $\xi$ are available, 
the $\xi$-based SW distance on every separable Banach space can again be computed 
using the Monte Carlo scheme. 
If $\xi$ has a finite fourth moment,
i.e. $\xi \in \Prob_4(U^*)$,
the Monte Carlo scheme converges. 
The details are given in Proposition~\ref{prob:monte-carlo}.}

\subsection{Slicing the 1D Wasserstein Space}
\label{sec:slice-1d}

\highRB{Exploiting the generalized SW distance in \eqref{eq:sw-Banach},
we introduce a first slicing  of the Wasserstein space $(\Prob_2(\R), \W)$,
which later builds the foundation of our sliced WoW distance on $\Prob_2(\Prob_2(\R^d))$.
Recall that the quantile mapping \eqref{eq:iso_embedding} is an isometric embedding
and thus measurable.
Therefore,
we can push every meta-measure $\bmu \in \Prob_2(\Prob_2(\R))$
to $q_\sharp \bmu \in \Prob_2(L_2([0,1]))$.
In this manner,
the WoW distance between $\bmu, \bnu \in \Prob_2(\Prob_2(\R))$
becomes $\bW(\bmu, \bnu; \R) = \W(q_\sharp \bmu, q_\sharp \bnu; \Prob_2(L_2([0,1]))$.
Fixing $\xi \in \Prob_2(L_2([0,1]))$,
we introduce the \emph{sliced quantile WoW} (SQW) \emph{distance:}}
\begin{equation}
    \label{eq:sqw}
    \SQW(\bmu, \bnu; \xi)
    \coloneqq
    \SW(q_\sharp\bmu, q_\sharp\bnu; \, \xi),
    \qquad 
    \bmu, \bnu \in \Prob_2(\Prob_2(\R)).
\end{equation}
\highRB{If $\xi$ is \emph{positive}, 
i.e., $\xi$ has full support,
then the assumptions of Theorem~\ref{thm:SW-metric} are satisfied.}

\begin{corollary}
    Let $\xi \in \Prob_2(L_2([0,1]))$ be positive,
    then $\SQW$ is a metric on $\Prob_2(\Prob_2(\R))$.
\end{corollary}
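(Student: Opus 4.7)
The plan is to apply Theorem~\ref{thm:SW-metric} to the Banach space $\BU = L_2([0,1])$, whose dual is canonically identified with $L_2([0,1])$ itself, and to transport the resulting metric across the quantile map. First, I would check that the construction is well-posed: since $q \colon \Prob_2(\R) \to L_2([0,1])$ is an isometric embedding, in particular continuous and hence Borel, the push-forward $q_\sharp \bmu$ is a Borel probability on $L_2([0,1])$. Its second-moment condition follows from the identity $\lVert Q_\mu \rVert_{L_2}^2 = \W^2(\mu, \delta_0; \R)$, which is $\bmu$-integrable because $\bmu \in \Prob_2(\Prob_2(\R))$. Therefore $q_\sharp \bmu, q_\sharp \bnu \in \Prob_2(L_2([0,1]))$ and $\SQW(\bmu, \bnu; \xi) = \SW(q_\sharp \bmu, q_\sharp \bnu; \xi)$ is well-defined.

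Next, I would verify the hypothesis of Theorem~\ref{thm:SW-metric}. Since $\xi$ is positive, $\supp \xi = L_2([0,1])$, so for every nonzero $v \in L_2([0,1])$ we have $v \in \supp\xi \cap \Span v$, which is therefore neither empty nor $\{0\}$. The theorem yields that $\SW(\cdot, \cdot; \xi)$ is a metric on $\Prob_2(L_2([0,1]))$. Symmetry, non-negativity, and the triangle inequality of $\SQW$ then follow at once by transport along $q_\sharp$, since these properties are inherited from $\SW(\cdot, \cdot; \xi)$ evaluated on the push-forwards.

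The main step, and the only one requiring attention, is identity of indiscernibles: if $\SQW(\bmu, \bnu; \xi) = 0$, then $q_\sharp \bmu = q_\sharp \bnu$, and one must deduce $\bmu = \bnu$. For this I would use that $q$ is an isometric embedding, so it is a homeomorphism onto its image $q(\Prob_2(\R)) \subseteq L_2([0,1])$ when both sides carry their respective metric topologies. In particular, $q$ is a Borel isomorphism onto $q(\Prob_2(\R))$, so $q^{-1}$ is Borel measurable on that image. Since $q_\sharp \bmu$ and $q_\sharp \bnu$ are supported on $q(\Prob_2(\R))$, applying $(q^{-1})_\sharp$ recovers $\bmu = (q^{-1})_\sharp q_\sharp \bmu = (q^{-1})_\sharp q_\sharp \bnu = \bnu$.

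The anticipated obstacle is the measure-theoretic step in the last paragraph, which requires confirming Borel measurability of the inverse quantile map on $q(\Prob_2(\R))$. This is, however, automatic from the isometry property of $q$ together with the fact that both $\Prob_2(\R)$ and $L_2([0,1])$ are separable metric spaces, so no further argument beyond invoking the isometric embedding is needed.
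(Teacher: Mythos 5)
Your proposal is correct and follows essentially the same route as the paper: positivity of $\xi$ gives full support, which verifies the hypothesis of Theorem~\ref{thm:SW-metric} so that $\SW(\cdot,\cdot;\xi)$ is a metric on $\Prob_2(L_2([0,1]))$, and the metric axioms for $\SQW$ then transfer through the isometric (hence injective) quantile embedding $q$, with definiteness reducing to injectivity of $q_\sharp$. Your extra care on the measurability of $q^{-1}$ and on the second-moment bound $\lVert Q_\mu\rVert_{L_2}^2 = \W^2(\mu,\delta_0;\R)$ only makes explicit what the paper leaves implicit.
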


\highRB{For the later implementation,
we require an easy-to-sample slicing measure.
To this end,
we propose to use Gaussians,
i.e.,
measures $\xi \in \Prob_2(L_2([0,1]))$
such that 
$\pi_{v,\sharp} \xi$ is Gaussian for all $v \in L_2([0,1])$,
see \citep{bogachev1998gaussian}.
On $L_2([0,1])$, 
there exists a one-to-one correspondence between 
Gaussian measures and Gaussian processes
\citep[Thm.~2]{rajput1972gaussian}. 
In our numerics,
we restrict ourselves to the Gaussian process $G$
that is related to the covariance kernel
\begin{equation}
    \label{eq:kern}
    k_\sigma \colon [0,1] \times [0,1] \to \R, 
    \quad
    (t, s) \mapsto \exp(- \lvert t-s \rvert^2 / 2\sigma^2).
\end{equation}
This means
that we consider the function-valued random variable $G$
with 
\begin{equation}
    \bigl(G(t_1), \ldots, G(t_n) \bigr)
    \sim
    \mathcal{N}\bigl(0, (k(t_i, t_j))_{i, j=1,\dots,n} \bigr)
    \qquad
    \forall t_1, \ldots, t_n \in [0,1].
\end{equation}
Since the kernel is smooth,
the sample paths (realizations) of $G$ are smooth too,
i.e.,
$G~\in~\mathcal C^\infty([0,1])$ almost surely,
see \citep[Cor.~1]{CPCH2023}.
% Moreover, the Gaussian measure related to $k_\sigma$ has full support, see \citep{steinwart2001influence, driscoll1973reproducing}.
Since $k_\sigma$ is \emph{universal} \citep{steinwart2001influence}, 
the corresponding Gaussian measure has full support \citep{van2008reproducing}.
% Moreover,
% from 
% %\citep[Thm. 3]{driscoll1973reproducing} 
% \citep{driscoll1973reproducing} and \citep[Def.~4, Ex. 1]{steinwart2001influence},
% it follows 
% that the Gaussian measure related to $k_\sigma$ has full support.
}

\section{Double-Slicing the Wasserstein Space}
%---------------------------------

\highRB{The slicing schemes 
in §~\ref{sec:slice-banach} and §~\ref{sec:slice-1d} 
cannot directly be generalized
to the multidimensional Wasserstein space $(\Prob_2(\R^d), \W)$
due to the lack of the Banach space structure
and since we do not have an adequate generalization of the quantile mapping.
Instead of slicing the Wasserstein space directly,
in a first step,
we therefore propose to slice the underlying domain
using}
\begin{equation}
    \label{eq:metaprojection}
    \bpi_\theta \colon
    \Prob_2(\R^d) \to \Prob_2(\R)
    ,\quad
    \mu \mapsto  \pi_{\theta, \sharp} \mu,
    \qquad
    \theta \in\Sph^{d-1}.
\end{equation}
with $\pi_\theta$ from \eqref{eq:projection}.
\highRB{Notice that 
$\bpi_\theta$ is continuous with respect to the Wasserstein distances
and thus measurable.}
This allows us to define the \emph{sliced WoW distance} via
\begin{equation} \label{eq:s_wow_definition}
    \bSW(\bmu, \bnu) 
    \coloneqq
    \biggl(
    \int_{\Sph^{d-1}} 
    \bW^2(\bpi_{\theta, \sharp}\bmu,\bpi_{\theta, \sharp}\bnu; \R) 
    \d \Sph^{d-1}(\theta)
    \biggr)^\frac{1}{2},
    \qquad
    \bmu, \bnu \in \Prob_2(\Prob_2(\R^d)),
\end{equation}
\highRB{
where we again integrate with respect to the uniform measure.
Based on finite-dimensional slicing,
$\bSW$ essentially reduces meta-measures $\bmu$ and $\bnu$
to a series of 1d meta-measures $\bpi_{\theta,\sharp} \bmu$ and $\bpi_{\theta,\sharp} \bnu$.

As the computation of the 1d WoW distance remains challenging,}
we resort to a hierarchical slicing approach,
\highRB{inspired by}
\citep{nguyen2025sotdd, piening2025slicing_gaussian}. 
\highRB{Using quantile slicing \eqref{eq:sqw}
with slicing measure $\xi \in \Prob_2(L_2([0,1]))$,
we introduce the \emph{double-sliced WoW distance}}
\begin{equation} 
    \label{eq:ds_wow_definition}
    \DSW(\bmu, \bnu; \, \xi) 
    \coloneqq 
    \biggl(
    \int_{\Sph^{d-1}} 
    \SQW^2 (\bpi_{\theta, \sharp}\bmu, \bpi_{\theta, \sharp}\bnu; \xi)
    \d \Sph^{d-1}(\theta)
    \biggr)^{\frac{1}{2}},
    \qquad
    \bmu, \bnu \in \Prob_2(\Prob_2(\R^d)).
\end{equation}
\highRB{By construction,
the double-sliced WoW distance defines at least a pseudo-metric.
If we restrict ourselves to empirical meta-measure,
$\DSW$ even becomes a metric that is weakly equivalent to $\bW$.
To be more precise,
we denote the subset of empirical measures by $\eProb$
and the Dirac measure by $\delta_\bullet$.
An \emph{empirical meta-measure} $\bmu \in \eProb(\eProb(\R^d))$ has the form}
\begin{equation}
    \label{eq:empirical_metameasures}
    \bmu 
    =
    \frac{1}{N} \sum_{i = 1}^{N} \delta_{\mu_{i}} 
    \quad\text{with}\quad
    \mu_{i}
    = 
    \frac{1}{n_i} \sum_{k = 1}^{n_{i}} \delta_{x_{i, k}}
    \quad\text{and}\quad
    x_{i,k} \in \R^d
\end{equation}
for arbitrary $N$ and $n_i$. For $N$ and $n_i \equiv \tilde{n}$ fixed, we denote $\bmu \in \eProb^N(\eProb^{\tilde{n}}(\R^d))$ with $\mu_i \in \eProb^{\tilde{n}}(\R^d)$.
\begin{theorem}
\label{thm:dsw_properties}
    For positive $\xi \in \Prob_2(L_2([0,1]))$,
    $\DSW$ defines a metric on $\eProb(\eProb(\R^d))$.
    Moreover,
    for $\bmu_n, \bmu \in \eProb^N(\eProb^{\tilde{n}}(\mathcal{X}))$
    with compact $\mathcal X \subset \R^d$
    and positive Gaussian $\xi$,
    it holds 
    \begin{equation}
        \DSW(\bmu_n, \bmu;\,  \xi) \to 0 
        \iff 
        \bSW(\bmu_n, \bmu; \,  \xi) \to 0 
        \iff 
        \bW(\bmu_n, \bmu; \,  \R^d) \to 0
        \quad\text{as}\quad
        n \to \infty.
    \end{equation}
\end{theorem}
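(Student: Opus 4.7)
The plan is to split the theorem into two parts: the metric property of $\DSW$, and the equivalence of the three convergence notions on the compact space $\eProb^N(\eProb^{\tilde n}(\mathcal X))$. Symmetry, non-negativity, and the triangle inequality pass through the spherical integral from the corresponding properties of $\SQW$ together with Minkowski's inequality in $L_2(\Sph^{d-1})$, so the only non-trivial axiom is positive-definiteness. From $\DSW(\bmu,\bnu;\xi)=0$ one first obtains $\SQW(\bpi_{\theta,\sharp}\bmu,\bpi_{\theta,\sharp}\bnu;\xi)=0$ for a.e.\ $\theta \in \Sph^{d-1}$; invoking the corollary after \eqref{eq:sqw} with positive $\xi$ gives $\bpi_{\theta,\sharp}\bmu = \bpi_{\theta,\sharp}\bnu$ a.e., and since both sides depend continuously on $\theta$ for empirical inputs, the equality extends to every $\theta \in \Sph^{d-1}$.

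I expect the main obstacle to be upgrading this slice-wise equality of meta-measures to $\bmu = \bnu$ via a Cram\'er--Wold-type argument adapted to the nested empirical structure. Writing $\bmu = N^{-1}\sum_i \delta_{\mu_i}$ and $\bnu = N^{-1}\sum_j \delta_{\nu_j}$, I first note that the Radon transform is injective on compactly supported empirical measures on $\R^d$, so for generic $\theta$ the finitely many projected measures $\pi_{\theta,\sharp}\mu_i$ are pairwise distinct. Equality of the meta-projections then yields, for each such generic $\theta$, a permutation $\sigma_\theta \in S_N$ with $\pi_{\theta,\sharp}\mu_i = \pi_{\theta,\sharp}\nu_{\sigma_\theta(i)}$. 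Since $S_N$ is finite, a pigeonhole argument isolates a single $\sigma \in S_N$ whose level set has positive spherical measure. The set $\{\theta : \pi_{\theta,\sharp}\mu_i = \pi_{\theta,\sharp}\nu_{\sigma(i)}\}$ is the common zero set of a sequence of homogeneous polynomials in $\theta$ arising from the Taylor expansion of $\hat\mu_i - \hat\nu_{\sigma(i)}$ along rays $t\theta$, hence either all of $\Sph^{d-1}$ or contained in a proper algebraic subvariety of it; positive measure forces the first alternative. A final Cram\'er--Wold step gives $\mu_i = \nu_{\sigma(i)}$ for all $i$, so $\bmu = \bnu$.

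For the convergence equivalences I combine a direct chain of inequalities with a compactness argument. Since $\pi_\theta$ is $1$-Lipschitz on $\R^d$, the map $\bpi_\theta$ is $1$-Lipschitz between Wasserstein spaces, which gives $\bSW(\bmu,\bnu) \le \bW(\bmu,\bnu;\R^d)$ after integration over $\Sph^{d-1}$. Next, the general Banach estimate $\SW^2(\cdot,\cdot;\xi) \le (\int \lVert v \rVert_{L_2}^2 \d \xi(v))\, \W^2$ applied through the isometric quantile embedding yields $\SQW(\bpi_{\theta,\sharp}\bmu, \bpi_{\theta,\sharp}\bnu; \xi) \le c\, \bW(\bpi_{\theta,\sharp}\bmu, \bpi_{\theta,\sharp}\bnu;\R)$, where $c$ is finite because the Gaussian $\xi$ lies in $\Prob_2(L_2([0,1]))$; a further integration yields $\DSW(\bmu,\bnu;\xi) \le c\, \bSW(\bmu,\bnu)$. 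These handle $\bW \to 0 \Rightarrow \bSW \to 0 \Rightarrow \DSW \to 0$. For the converse directions, $\eProb^N(\eProb^{\tilde n}(\mathcal X))$ is a continuous image of the compact set $\mathcal X^{N\tilde n}$ and therefore $\bW$-compact: every subsequence of $(\bmu_n)$ admits a $\bW$-convergent sub-subsequence whose limit must coincide with $\bmu$ by the forward inequalities combined with the just-established metric property of $\DSW$ (respectively $\bSW$), so the whole sequence converges to $\bmu$ in $\bW$.
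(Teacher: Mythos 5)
Your proposal is correct, and its second half takes a genuinely different route from the paper. For the metric property, both you and the paper ultimately rest on the same key fact (the Cram\'er--Wold-type result of Cuesta-Albertos et al., restated as Lemma~\ref{lemma:zero_set_slicing_empirical}: for empirical $\mu_i \neq \nu_j$, the directions $\theta$ with $\pi_{\theta,\sharp}\mu_i = \pi_{\theta,\sharp}\nu_j$ form a null set). You package this as a permutation matching plus a pigeonhole over the finitely many permutations; the paper instead takes the optimal plan $\bgamma^*_\theta$ for the projected meta-measures, shows its support avoids all pairs with $\mu_i \neq \nu_j$ for a.e.\ $\theta$, and hence that $\langle \bgamma^*_\theta, C\rangle = 0$ for the unprojected cost matrix $C$, which dominates $\bW^2(\bmu,\bnu)$. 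The paper's version handles unequal atom counts and repeated atoms without the common-refinement normalization your permutation argument implicitly requires, but the two are essentially equivalent in substance. For the convergence equivalences, your forward chain $C_\xi\,\DSW \le \bSW \le \bW$ coincides with Proposition~\ref{lemma:continuity_dsw}. Your converse, however, is a clean compactness argument: $\eProb^N(\eProb^{\tilde n}(\mathcal X))$ is the continuous image of the compact set $\mathcal X^{N\tilde n}$, hence $\bW$-sequentially compact, and any $\bW$-limit point of a subsequence must equal $\bmu$ by the triangle inequality together with the already-established positive-definiteness of $\DSW$. The paper instead proves the converse in two technical steps: an isometric identification of the fixed-step quantile functions with $\R^{\tilde n}$ that transfers the known $\SW$--$\W$ topological equivalence on Euclidean space (Lemma~\ref{lemma:l2_convergence_stepfunctions}, Proposition~\ref{prop:double_sliced_equi_single_sliced}), and a separate subsequence argument with permutation transport plans for $\bSW$ versus $\bW$ (Proposition~\ref{prop:single_sliced_equi_wow}). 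Your route is shorter and bypasses both lemmas; what the paper's route provides in exchange is explicit slice-wise information (convergence of the projected meta-measures for a.e.\ direction), but for the theorem as stated your argument suffices.
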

\revise{The detailed proofs are given in Appendix \ref{app:dsw_proof}.
There, a transport plan is constructed to rely on the metric properties of WoW.
The equivalence is then proven via a discretization of the Gaussian process and the compactness of  $\eProb^N(\eProb^{\tilde n}(\mathcal{X}))$.}
\highRB{Numerically,
$\DSW$ can be implemented 
combining several integration techniques,
\revise{see Appendix~\ref{subsection:implementation} and \ref{app:prac-guide}}.
Here,
we consider the Gaussian $\xi$ related to the kernel in \eqref{eq:kern}
and empirical meta-measures $\bmu, \bnu \in \eProb(\eProb(\Rd))$
as in \eqref{eq:empirical_metameasures}.
To approximate the outer integral over $\Sph^{d-1}$
and the inner integral over $\xi$ simultaneously,
we employ the Monte Carlo method.
For a sample path $g$ of the corresponding Gaussian process
and a direction $\theta \in \Sph^{d-1}$,
we evaluate the integrand as follows:
The first slicing gives
\smash{$\bpi_{\theta, \sharp}\bmu = (1/N) \sum{}_{i = 1}^{N}  \delta_{\pi_{\theta, \sharp} \mu_i}$},
and the quantile mapping
\smash{$q_\sharp\bpi_{\theta, \sharp}\bmu = 
(1/N)\sum{}_{i = 1}^{N} \delta_{q(\pi_{\theta, \sharp} \mu_i)}$},
where the piecewise constant quantile functions
can be determined by sorting the support points.
For the slicing on $L_2([0,1])$,
we employ a quadrature to approximate the inner product.
To this end, 
for knots $t_1,\dots, t_R \in [0,1]$
and weights $w_1, \dots, w_R$,
we estimate
\begin{equation}
    \widehat{\pi_{g, \sharp} q_\sharp \bpi_{\theta, \sharp}\bmu}
    =  \frac{1}{N} \sum_{i=1}^{N}  \delta_{\widehat{\langle q(\pi_{\theta, \sharp} \mu_i), g \rangle}} 
    \quad\text{with}\quad
    \widehat{\langle q(\pi_{\theta, \sharp} \bmu_i), g \rangle}
    = 
    \sum_{r=1}^R
    w_r \,
    q(\pi_{\theta, \sharp}\mu_i)(t_r) \, g(t_r). 
\end{equation}
Note that
the samples $g$ of the process $G$ satisfy
$(G(t_1), \dots,G(t_R)) \sim \mathcal{N}(0, k(t_r, t_{r'}))_{r, r'=1}^R$
and can be easily generated.
Finally,
the double-sliced WoW distance is computed by}
\begin{equation}
     \widehat{\DSW}(\bmu, \bnu)
     \coloneqq
     \biggl(
     {\frac{1}{S} \sum_{s=1}^S 
     \W^2 \bigl(
     \widehat{\pi_{g_s, \sharp} q_\sharp \bpi_{\theta_s, \sharp}\bmu}, 
     \widehat{\pi_{g_s, \sharp} q_\sharp \bpi_{\theta_s, \sharp}\bnu} ; \R\bigr)}
     \biggr)^{\frac{1}{2}}.
\end{equation}
\highRB{The remaining 1d Wasserstein distances can again be efficiently computed.}

\revise{At its core, 
$\DSW$ requires the computation of 
the domain projection $\bpi_{\theta}$
and
a quantile projection $q$.
As an alternative approach, s-OTDD considers a variation of $\bpi_\theta$
with general 1d feature projections, including convolutions,
and a (finite) moment projection instead of quantiles \citep{nguyen2025sotdd}.
Another concurrent work constructs a direct slicing of $\Prob_2(\Prob_2(\R^d))$
via so-called Busemann projections onto geodesic rays 
and relies on Gaussian approximations \citep{bonet2025busemann}.
Lastly,
notice that
the collection of these push-forward $q_\sharp \bpi_{\theta, \sharp}$
can be interpreted as meta-measure version of 
the so-called sliced-Wasserstein embedding \citep{KPR16,naderializadeh2021pooling,nguyen2025intro}
that maps a classical measure in $\Prob_2(\R^d)$ to the quantile functions of its slices.
}

\section{Numerical Experiments}
In this section, we aim to showcase the numerical properties and benefits of our sliced distances. 
We start with the 1d case. Drawing a connection between meta-measures and the so-called Gromov--Wasserstein (GW) distance, we consider a shape classification experiment from \citep{piening2025novel}.
Next, we compare our multidimensional sliced distance to the s-OTDD \citep{nguyen2025sotdd}. Finally, we present applications from the evaluation of point cloud distributions and perceptual image analysis. 
For all these experiments, we employ trapezoidal integration weights $w_r$.
We refer to Appendix~\ref{sec:experiment_section_supp}
for further experiments and details\footnote{Code: \url{https://github.com/MoePien/slicing_wasserstein_over_wasserstein}}.
%\bibliography{iclr2026_conference}

\subsection{Shape Classification via Local Distance Distributions}
\label{subsection:gromov_shapes}
First, we repeat a shape classification experiment from \citep{piening2025novel} based on parametrizing shapes as so-called metric measure (mm-)spaces. A \highRB{mm-space} is a tuple $(\mathcal X, d_{\mathcal{X}}, \mu)$ consisting of a compact \highRB{metric space $(\mathcal X, d_{\mathcal X})$} and a measure $\mu \in \Prob_2(\mathcal X)$. 
%Extending Wasserstein distances, the \highRB{Gromov--Wasserstein (GW)} distance defines a metric between two mm-spaces $\mathbb{X} = (\mathcal{X}, d_{\mathcal{X}}, \mu)$ and $\mathbb{Y}=(\mathcal{Y}, d_{\mathcal{Y}}, \nu)$ 
%allowing us to compare incompatible spaces \citep{memoli2011gromov}. 
Modelling data as (finite) metric spaces allows for invariance to isometric transformations such as rotations, often desirable for shapes.
%This property is especially desirable for comparing shapes.
Particularly, we may parametrize 2d shapes as point clouds with pairwise Euclidean distances and 3d shapes as triangular meshes with pairwise surface distances \citep{beier2022linear}. 

While Gromov--Wasserstein (GW) distances define a metric between mm--spaces \citep{memoli2011gromov},  computation is costly
and relies on inexact non-convex minimization.
%However, comparing two such shapes using GW is computationally costly and requires solving a non-convex minimization problem.
As a remedy, alternatives employ pseudo-metrics via \emph{local distance distributions}.
Namely, they map a finite, uniformly-weighted mm-space $\mathbb{X}=(x_1, \ldots, x_N)$ %with $d_{\mathcal{X}} \leq 1$ 
with 
$\mu \in \eProb(\mathcal{X})$ to $\eProb(\eProb(\R))$ 
via the (non-injective) %local distance distribution 
mapping
\begin{equation}
    \mathbb{X} 
    \mapsto
    \mu_{\mathbb{X}} \coloneqq \frac{1}{N} \sum_{i=1}^N  \delta_{d_{\mathcal X}(x_i, \cdot)_{\sharp}\mu},
    %\hspace{1.5mm}
    \quad
    %\hspace{-5mm}
    %\mu = \frac{1}{n} \sum_{i=1}^n  \delta_{x_i},
    \delta_{d_{\mathcal X}(x_i, \cdot)_{\sharp}\mu} = \frac1N \sum_{j=1}^N  \delta_{d_{\mathcal{X}}(x_i, x_j)}. %\in \Prob_2([0, 1]).
    %\in \Prob_2(\Prob_2([0, 1]))
\end{equation}
% \begin{equation}
%     \mathbb{X} = 
%     \left(\{x\}_{i=1}^n, \Big(d_{\mathcal{X}}(x_i, x_j)\Big)_{i, j=1}^n,
%     \mu %= \sum_{i=1}^n \frac{1}{n} \delta_{x_i}
%     \right) 
%     \mapsto
%     \mu_{\mathbb{X}} \coloneqq \sum_{i=1}^n \frac{1}{n} \delta_{d_{X}(x_i, \cdot)_{\sharp}\mu},
%     %\hspace{1.5mm}
%     \quad
%     %\hspace{-5mm}
%     %\mu = \frac{1}{n} \sum_{i=1}^n  \delta_{x_i},
%     \delta_{d_{X}(x_i, \cdot)_{\sharp}\mu} = \sum_{j=1}^n \frac1n \delta_{d_{\mathcal{X}}(x_i, x_j)}. %\in \Prob_2([0, 1]).
%     %\in \Prob_2(\Prob_2([0, 1]))
% \end{equation}
%and the local distance distributions
%\begin{equation}
%    \delta_{d_{X}(x_i, \cdot)_{\sharp}\mu} = \sum_{j=1}^n \frac1n \delta_{d_{\mathcal{X}}(x_i, x_j)} \in \Prob_2([0, 1]).
%\end{equation}
Now, we can represent two mm-spaces as 
$\mu_{\mathbb{X}}, \nu_{\mathbb{Y}} \in \eProb(\eProb(\R))$ 
and compare them using the 1d WoW distance \citep[`Third Lower Bound' (TLB')]{memoli2011gromov},
another SW-based distance \citep[`Sliced Third Lower Bound' (STLB)]{piening2025novel}
or a so-called energy distance \cite[`Anchor Energy' (AE)]{sato2020fast}.
Alternatively, we may use our SQW distance.

Repeating experiments from \citep{piening2025novel}, we
precompute pairwise distance matrices with respect to our \revise{SQW} distance, TLB, STLB, AE, and the GW distance.
Then, we estimate the accuracy of a k-nearest neighbor (KNN) classification by assigning
each test point to the majority class among its three nearest
neighbors. We average the classification accuracy over 1000 random
25\%/75\% training/test splits on four datasets.
Based on preprocessing from \citep{piening2025novel}, we employ 
the `2D Shapes' dataset (\cite{beier2022linear}, $N=50$),
the `Animals' dataset (\cite{sumner2004deftransferdata_animals}, $N=50$) 
and the `FAUST' dataset \citep{Bogo:CVPR:2014:FAUST}
with 500 (`FAUST-500') and 1000 (`FAUST-1000') vertices per shape.
\revise{Additionally, we report results on a synthetic version of MNIST \citep{lecun1998gradient} denoted `MNIST-2000' ($N=100$, 5 classes, 2000 points), whose details are given in Appendix~\ref{app:mnist}.}
We set the kernel parameter $\sigma$ to 0.01, $R=10$ and $S=100$. We use the same integration grid and projection number for STLB.
We display the results in Table~\ref{tab:knn_shapes}, \revise{where we additionally report the mean runtime of a single distance computation. We} observe comparable performance across all distances and a runtime advantage of SQW and STLB for the large-scale FAUST-1000 \revise{and MNIST-2000} dataset, in particular against the GW distance. \revise{See Appendix \ref{app:knn_shapes_ablation} for further studies.}
\sisetup{
  round-mode      = places,
  round-precision = 1
}
\begin{table*}[ht]
  \centering
  \scriptsize                       % smaller font
  \setlength{\tabcolsep}{1.5pt}  % tighter columns
  \renewcommand{\arraystretch}{1.1}
  \begin{tabular}{l
                  c c
                  c c
                  c c
                  c c
                  c c}
    \toprule
    Distance
      & \multicolumn{2}{c}{2D shapes}
      & \multicolumn{2}{c}{Animals}
      & \multicolumn{2}{c}{FAUST-500}
      & \multicolumn{2}{c}{FAUST-1000} 
      & \multicolumn{2}{c}{\revise{MNIST-2000}} \\
    \cmidrule(lr){2-3} \cmidrule(lr){4-5}
    \cmidrule(lr){6-7} \cmidrule(lr){8-9}
    \cmidrule(lr){8-9} \cmidrule(lr){10-11}
      & Acc. (\%) & Time (ms)
      & Acc. (\%) & Time (ms)
      & Acc. (\%) & Time (ms)
      & Acc. (\%) & Time (ms) 
      & \revise{Acc. (\%)} & \revise{Time (ms)} 
      \\
    \midrule
    \textbf{\revise{SQW (Ours)}}        
      & 99.5$\pm$1.2  & \revise{0.9$\pm$0.3}
      & 99.1$\pm$1.3 & \revise{0.9$\pm$0.1}
      & 38.6$\pm$5.7  & \revise{9.8$\pm$5.7}
      & 42.7$\pm$5.9          & \revise{13.8$\pm$15.1}       
      &\revise{84.8$\pm$4.7} & \revise{48.6$\pm$2.8} \\
      \hline
    TLB        
      & 100.0$\pm$0.3 & \revise{0.5$\pm$0.4}
      & 100.0$\pm$0.0 & \revise{0.4$\pm$0.1}
      & 36.7$\pm$5.6  & \revise{27.2$\pm$15.8}
      & 40.2$\pm$6.0 & \revise{60.1$\pm$9.6}    
      &\revise{88.7$\pm$4.5} & \revise{298.7$\pm$25.8}\\
    STLB 
      & 99.5$\pm$1.2  & \revise{0.8$\pm$1.0}
      & 99.3$\pm$1.8  & \revise{0.8$\pm$0.1}
      & 37.6$\pm$5.6  & \revise{9.8$\pm$5.7}
      & 39.4$\pm$5.6  & \revise{14.0$\pm$14.9}      
      &\revise{84.1$\pm$5.0} & \revise{45.1$\pm$2.8}\\
    AE 
      & 99.7$\pm$0.9  & \revise{0.4$\pm$0.0}
      & 97.8$\pm$1.8  & \revise{0.4$\pm$0.1}
      & 37.7$\pm$5.6  & \revise{8.3$\pm$2.1}
      & 41.8$\pm$5.3  & \revise{25.2$\pm$12.0}       
      &\revise{88.1$\pm$4.5} & \revise{157.2$\pm$16.6}\\
    GW 
      & 99.7$\pm$0.6  & \revise{1.3$\pm$5.7}
      & 100.0$\pm$0.0  & \revise{2.5$\pm$1.1}
      & 29.2$\pm$4.4  & \revise{266.4$\pm$94.7}
      & 33.0$\pm$5.3  & \revise{1048.2$\pm$357.3}      
      &\multicolumn{2}{c}{\revise{---timeout---}}\\
    \bottomrule
  \end{tabular}
    \caption{Shape classification with KNN: 
    Mean accuracy (Acc., $\uparrow$) %with standard deviation
    and runtime (Time).}
  \label{tab:knn_shapes}
\end{table*}

\revise{Based on MNIST-2000, we additionally perform a parameter study to analyze the impact of the projection number $S$, grid size $R$, and kernel parameter $\sigma$. 
We again classify our MNIST-2000 dataset via SQW, but with varying parameters. The accuracies are displayed in the table in Figure~\ref{fig:tab_fig_sqw_ablation}, where we observe only slight variations of the resulting accuracies. Additionally, we analyze the correlation between TLB and SQW in Figure~\ref{fig:tab_fig_sqw_ablation}. 
The SQW distance is meant to emulate the WoW distance on $\Prob_2(\Prob_2(\R))$.
Since TLB is exactly the WoW distance between $\mu_{\mathbb{X}}$ and $\mu_{\mathbb{Y}}$,
we would therefore hope for a linear relation between SQW and TLB. 
To analyze this relation, 
we produce a scatter plot displaying 
all SQW-TLB pairs on MNIST-2000 for three different parameter configurations. In addition, we compute the Pearson and Spearman correlation coefficients. Overall, we see a clear linear relation with correlations coefficients of $0.99$ or more. Changing the kernel parameter $\sigma$ does not seem to impact this relation, but increasing the number of projections $S$ reduces the variance and raises the correlation from $0.99$ to $1.0$.
} 
\begin{figure*}[t]
\centering

\begin{minipage}[t]{0.19\textwidth}
\vspace{2pt}
\scriptsize
\begin{tabular}{c@{\hspace{2pt}}c@{\hspace{2pt}}c@{\hspace{2pt}}c}
\hline
$S$ & $R$ & $\sigma$ & Acc (\%) \\
\hline
\vspace{1.5pt}
$10^2$ & $10^1$ & $0.1^2$ & 84.8 \\
$10^3$ & $10^1$ & $0.1^2$ & 84.7 \\
$10^3$ & $10^2$ & $0.1^2$ & 84.7 \\
$10^2$ & $10^2$ & $0.1^2$ & 84.8 \\
$10^2$ & $10^1$ & $0.1^3$ & 85.0 \\
$10^2$ & $10^1$ & $0.1^1$ & 82.6 \\
\hline
\end{tabular}
\end{minipage}
\hfill
\begin{minipage}[t]{0.8\textwidth}
\vspace{0pt}
\centering
\begin{tabular}{@{}c@{}c@{}c@{}}
\includegraphics[width=0.33\textwidth]{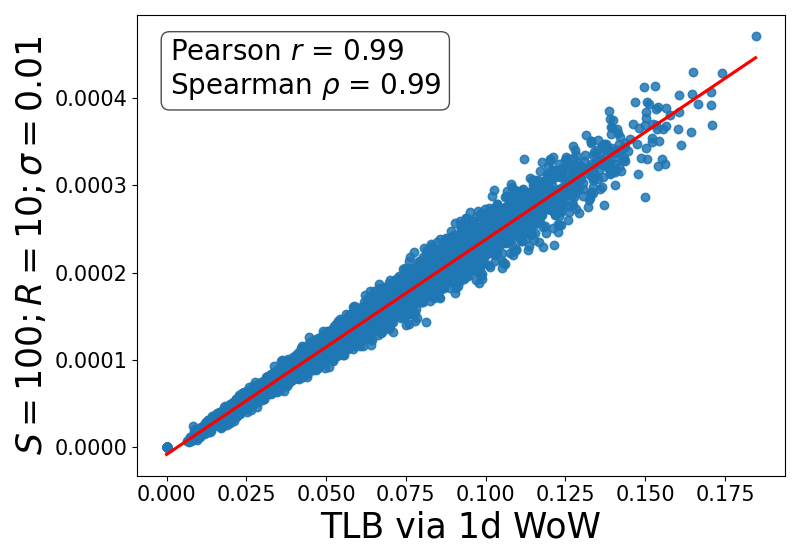}%
&\includegraphics[width=0.33\textwidth]{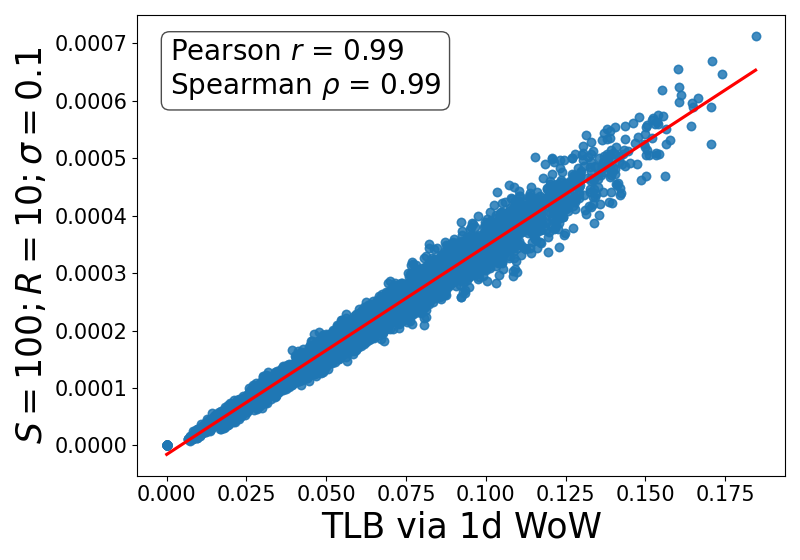}%
&\includegraphics[width=0.33\textwidth]{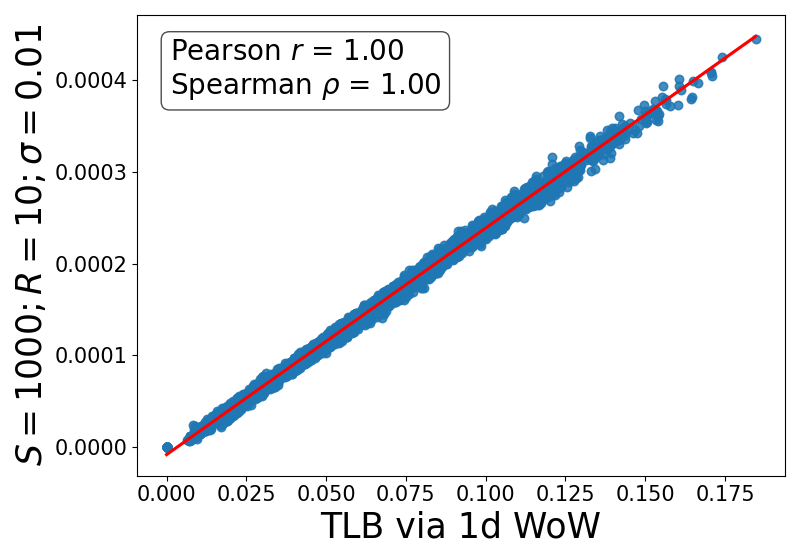}%
\end{tabular}
\end{minipage}
\caption{\revise{Left: Impact of projection number $S$, grid size $R$, and kernel parameter $\sigma$ on  MNIST-2000 SQW classification. Right: Scatter plots and correlation between TLB (1d WoW, horizontal axis) and SQW (vertical axis) for different parameters
on  MNIST-2000.}}
\label{fig:tab_fig_sqw_ablation}
\end{figure*}

\subsection{Optimal Transport Dataset Distance}
\label{subsection:otdd}
Next, we consider a comparison with the OTDD \citep{alvarez2020geometric} and the s-OTDD \citep{nguyen2025sotdd}. These metrics have been developed to quantify the similarities between labelled datasets in a model-agnostic manner. Such similarity metrics are especially important for applications in transfer learning. 
In this area, it has been  shown empirically
that the OTDD and s-OTDD display a strong correlation 
with the performance gap in transfer learning
and classification accuracy in data augmentation, see \citep{alvarez2020geometric}.
To assess the suitability of our \revise{DSW distance} as a drop-in replacement for the computationally costly OTDD,
we repeat an experiment from \citep{nguyen2025sotdd}.
We randomly split MNIST \citep{lecun1998gradient}, FashionMNIST \citep{xiao2017fashionmnist}, and CIFAR10 \citep{krizhevsky2009cifar10} to create subdataset pairs,
each ranging in size from 500 to 1000,
and compute the OTDD, the s-OTDD, and our DSW between subdataset pairs.

We plot the results of our \revise{DSW} distance and the s-OTDD against OTDD for 100 dataset splits in 
Figure~
\ref{fig:dataset_distance_correlation}, 
where we include the Pearson and the Spearman correlation coefficients
between both sliced distances and the OTDD.
As our \revise{DSW distance} is originally defined on $\Prob_2(\Prob_2(\R^d))$
and OTDD and s-OTDD on $\Prob(\mathcal{Y} \times \Prob(\R^d))$, 
we compute OTDD and s-OTDD 
with the label metric on $\mathcal{Y}$ set to zero for comparability, 
effectively representing each dataset as an empirial measure in $\Prob_2(\Prob_2(\R^d))$ and computing the OTDD via WoW.
To be precise, each class-conditional distribution is modeled as $\mu_i \in \eProb(\R^d)$ and the distribution over class-conditional distributions becomes our meta-measure $\mu \in \eProb(\eProb(\R^d))$. 
Similar to \citep{nguyen2025sotdd}, we estimate both DSW ($R=10$, $\sigma=0.1$) and the s-OTDD
(with Radon features) with $S=10,000$ projections.
Based on this setting, 
we employ the original default implementation 
for the s-OTDD and the `exact' OTDD. 
Treating OTDD as our ground truth,
we clearly see a stronger correlation between DSW and OTDD for all datasets. \revise{Additional experiments can be found in Appendix~\ref{sec:otdd_supp}.}
\begin{figure}[t]
  \centering
  % First subplot
  \begin{subfigure}[b]{0.335\textwidth}
    \centering
    \begin{subfigure}{\textwidth}
    \centering
    \includegraphics[width=\linewidth]{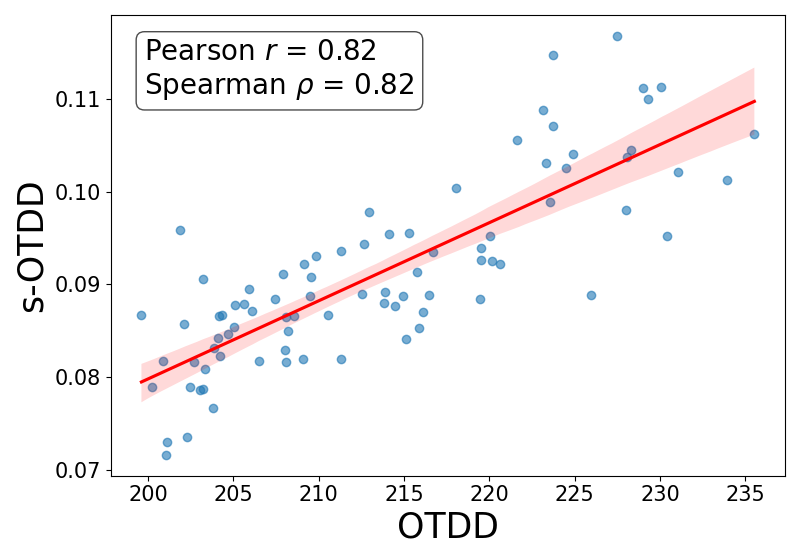}
    \end{subfigure}
    \hfill
    \begin{subfigure}{\textwidth}
    \centering
    \includegraphics[width=\linewidth]{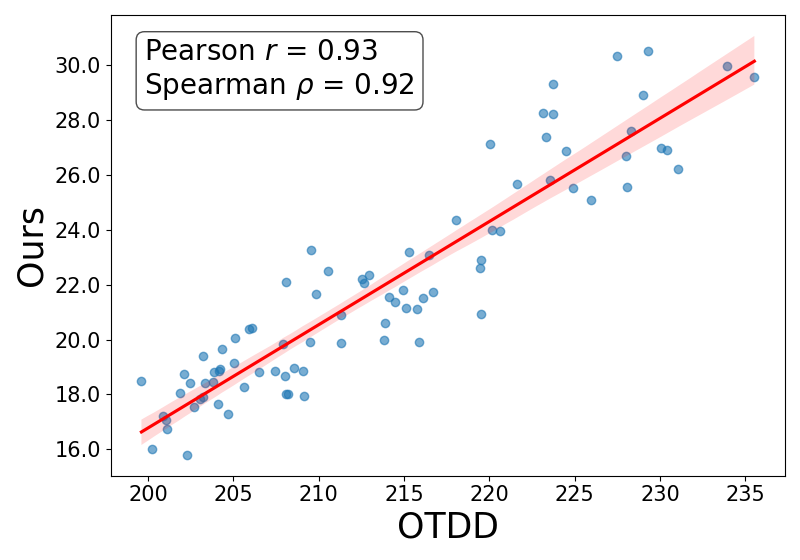}
    \end{subfigure}
    \vspace{-5mm}
    \caption{MNIST}
    \label{subfig:MNIST}
  \end{subfigure}
  \hspace{-2mm}
  % Second subplot
    \begin{subfigure}[b]{0.335\textwidth}
    \centering
    \begin{subfigure}{\textwidth}
    \centering
    \includegraphics[width=\linewidth]{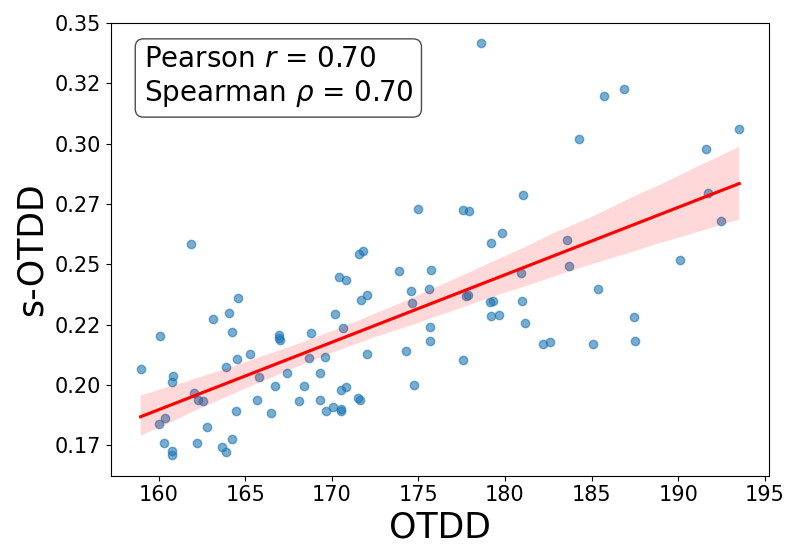}
    \end{subfigure}
    \hfill
    \begin{subfigure}{\textwidth}
    \centering
    \includegraphics[width=\linewidth]{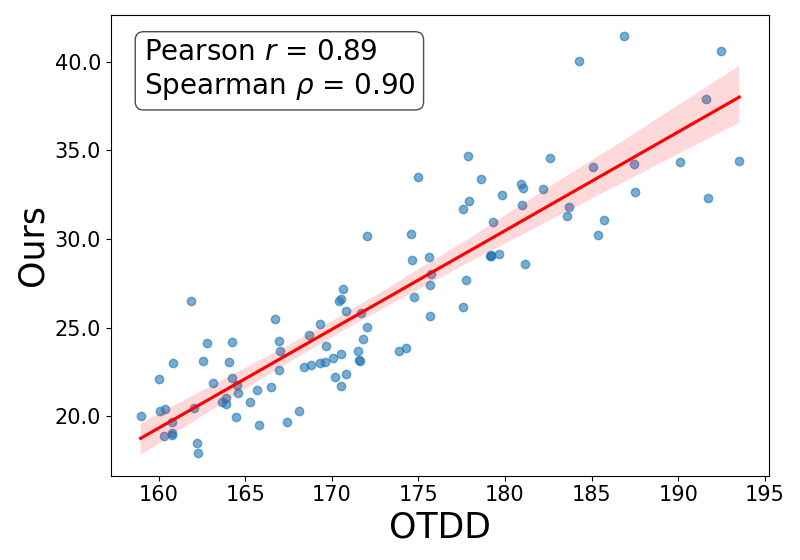}
    \end{subfigure}
    \vspace{-5mm}
    \caption{FashionMNIST}
    \label{subfig:FashionMNIST}
  \end{subfigure}
  \hspace{-2mm}
      \begin{subfigure}[b]{0.335\textwidth}
    \centering
    \begin{subfigure}{\textwidth}
    \centering
    \includegraphics[width=\linewidth]{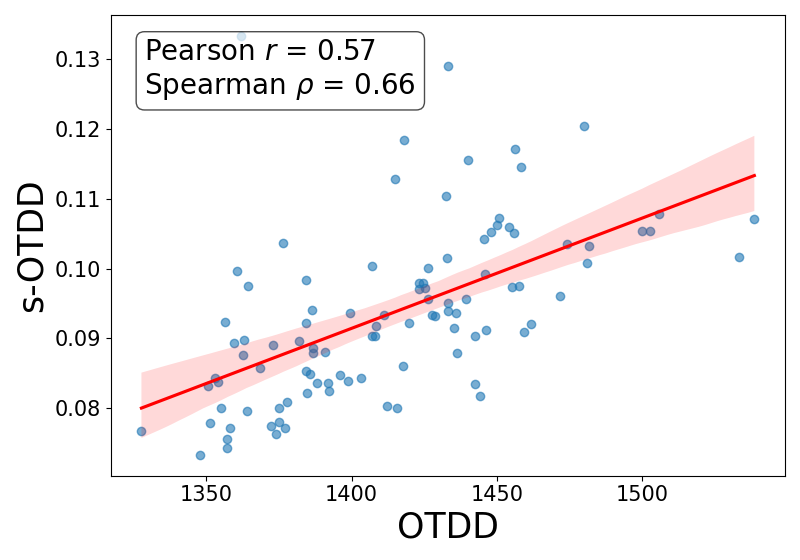}
    \end{subfigure}
    \hfill
    \begin{subfigure}{\textwidth}
    \centering
    \includegraphics[width=\linewidth]{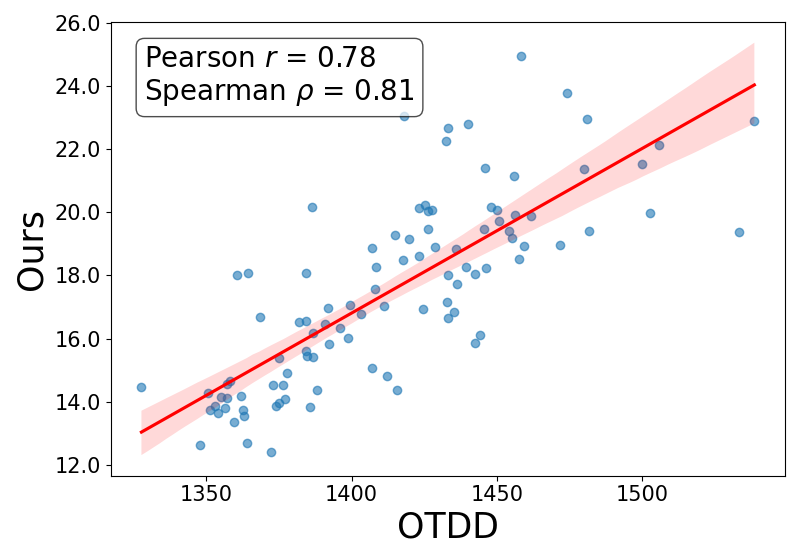}
    \end{subfigure}
    \vspace{-5mm}
    \caption{CIFAR-10}
    \label{subfig:CIFAR}
  \end{subfigure}
  \caption{Scatter plots and correlations ($\uparrow$) between the s-OTDD and the OTDD (top) and our DSW \revise{(`Ours')} and the OTDD (bottom) for MNIST (\ref{subfig:MNIST}),  FashionMNIST (\ref{subfig:FashionMNIST}), and  CIFAR-10 (\ref{subfig:CIFAR}).
  }
  \label{fig:dataset_distance_correlation}
\end{figure}
\subsection{Comparing Distributions of Point Clouds}
\label{subsection:point_clouds}
% For our next experiment, 
% we consider the potential of DSW for evaluating point cloud generative models.
% Such generative models aim to generate 3D point clouds representing shapes, such as chairs or planes.
% The evaluation of such a generative model is non-trivial since many quality metrics are either invariant to mode collapse, e.g., the `coverage', or the generation of low-quality point clouds, e.g., the `minimum matching distances', see \citep{yang2019pointflow_ot_nn} for a detailed discussion. As a remedy, the usage of a 1-nearest-neighbor classification test statistic based on all pairwise Wasserstein distances of real and generated point clouds (`OT-NNA') has become a gold standard for the assessment of point cloud generation models. Given $N$ real shapes and $M$ generated shapes, this test statistic requires $(N+M - 1)^2/2$ Wasserstein distances, without defining a metric, however. 

For our next experiment, we assess the potential of DSW for evaluating point cloud generative models, which aim to generate 3D shapes such as chairs or planes. Evaluating such models is challenging since common quality metrics are insensitive to mode collapse (e.g., `coverage') or tolerate low-quality samples (e.g., `minimum matching distances'); see \citet{yang2019pointflow_ot_nn} for details. A common remedy is the OT nearest-neighbor accuracy (`OT-NNA') test, which uses 1-nearest-neighbor classification based on pairwise Wasserstein distances between 
real and generated 
 point clouds. However, for $N$ real and $M$ generated shapes, this requires $(N+M-1)^2/2$ OT computations, without defining a true metric. 

As a natural alternative, one might instead represent batches of real and generated point clouds $\mu_i \in \eProb(\R^3)$ as empirical meta-measures in $\eProb(\eProb(\R^3))$
and compute the WoW, respectively, the DSW distance between a real and a generated batch.
To assess the suitability of our resulting quality metric, we consider shapes from ModelNet-10 \citep{wu2015modelnet} and construct meta-measures
\begin{equation}
    \label{eq:empirical_metameasures_shapes}
    \bmu = \frac{1}{N} \sum_{i = 1}^{N}  \delta_{\mu_{i}} \in \eProb(\eProb(\R^3)), \quad 
    \bnu = \frac{1}{M} \sum_{j = 1}^{M}  \delta_{\nu_{j}} \in \eProb(\eProb(\R^3)),
\end{equation}
where the support points are again Euclidean empirical measures
\begin{equation}
    \label{eq:empirical_normal_measures_shapes}
    \mu_{i} = \frac{1}{n}  \sum_{k = 1}^{n} \delta_{x_{i, k}} \in \eProb(\R^3), \quad 
    \nu_{j} = \frac{1}{m} \sum_{\ell = 1}^{m}  \delta_{x'_{ j, \ell,} + \varepsilon} \in \eProb(\R^3) \quad \epsilon \sim \mathcal{N}(0, \sigma^2_{\text{Noise}} \, \mathbf{I}_3).
\end{equation}
Given a certain shape class, e.g., `chair', we initialize $\bmu$ as our fixed reference meta-measure
with (downsampled) shapes from the ModelNet-10 training set 
and 
$\bnu$ as our varying target meta-measure with shapes from the ModelNet-10 test set.
To compare OT-NNA, WoW, and our DSW metric, 
we independently
vary
the number of target shapes $M$, 
the level of Gaussian noise $\sigma_{\text{Noise}}$, 
and the point cloud discretization $m$
while fixing the remaining parameters according to the reference $\bmu$
(Default parameters: $N = M$,
$\sigma_{\text{Noise}} = 0$, $m = n$).
The results of our experiment are visualized in Figure~\ref{fig:shape_eval_ot}, 
where we display the average result of 5 runs with varying $M$, $\sigma_{\text{Noise}}$, and $n$, and the reference (`ground truth') parameter of $\bmu$ is marked with a dotted red line. Lower is better for all metrics.

Looking at the number of target shapes $M$ in Subfigure~\ref{subfig:target_shapes} (Class: `bed', $N = 10$, $\sigma_{\text{Noise}} = 0$, $n=m=50$), we see that all metrics successfully capture mode collapse, i.e., $M=1$, and decrease for a larger number of target shapes $M$. Unlike the plateauing WoW and DSW metrics, OT-NNA displays an undesired behavior by increasing for $M \geq N = 10$, however.
As for the random Gaussian perturbations in Subfigure~\ref{subfig:gaussian_target} (Class: `sofa', $N =M = 10$, $n=m=50$), 
all three metrics increase with increasing noise. Whereas OT-NNA is more sensitive to small noise levels, the WoW and DSW metrics are more sensitive to high noise levels.
Considering the point cloud resolution $m$ in Subfigure~\ref{subfig:point_num_target} (Class: `monitor', $N= M = 10$, $n=500$), OT-NNA seems inconsistent regarding the resolution.
%, even for significantly downsampled shapes ($m=50$)
In contrast, the WoW and DSW metrics are higher for $m~\leq~100$ and plateau after a seemingly sufficient resolution has been reached. Overall, we see that WoW and DSW display similar behavior as the OT-NNA and offer the advantage of being unbounded metrics. 
Additionally, DSW ($S=10, 000$, $R=50$, $\sigma=0.1$)
offers computational advantages as it takes around 0.25 seconds for 
$M=N=10$ and $m=n=500$, where WoW takes about 4.5 seconds and OT-NNA about 8.5 seconds (on our CPU).
This makes it especially suitable for high-resolution point clouds and large point cloud batches.
\revise{Further studies are contained in Appendix~\ref{subsec:point_cloud_supp}.}
\begin{figure}[ht]
  \centering
  % First subplot
  \begin{subfigure}[b]{0.335\textwidth}
    \centering
    \begin{subfigure}{\textwidth}
    \centering
    \includegraphics[width=\linewidth, trim={0 50 0 0}, clip]{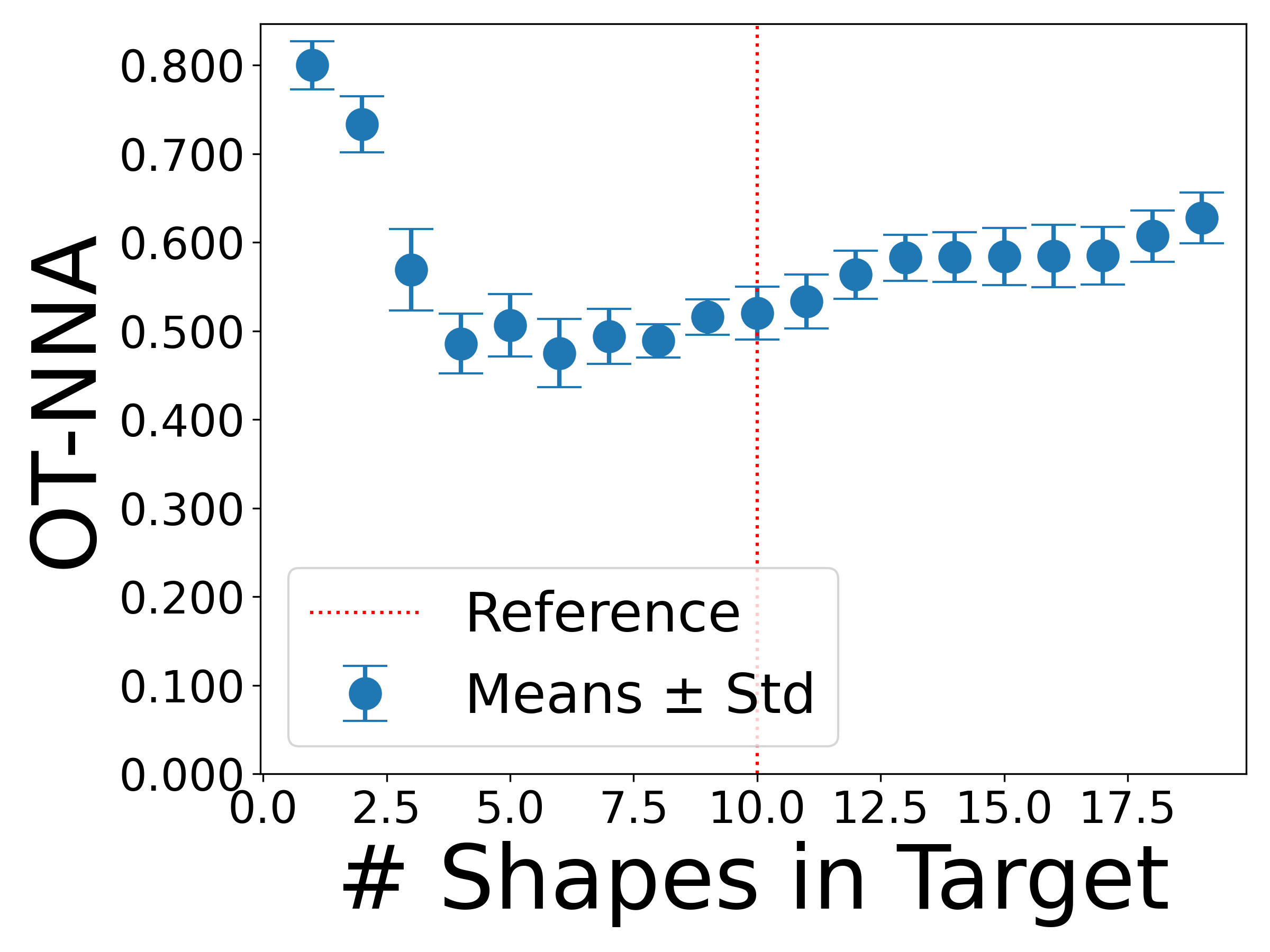}
    \end{subfigure}
    \hfill
    \begin{subfigure}{\textwidth}
    \centering
    \includegraphics[width=\linewidth, trim={0 50 0 0}, clip]{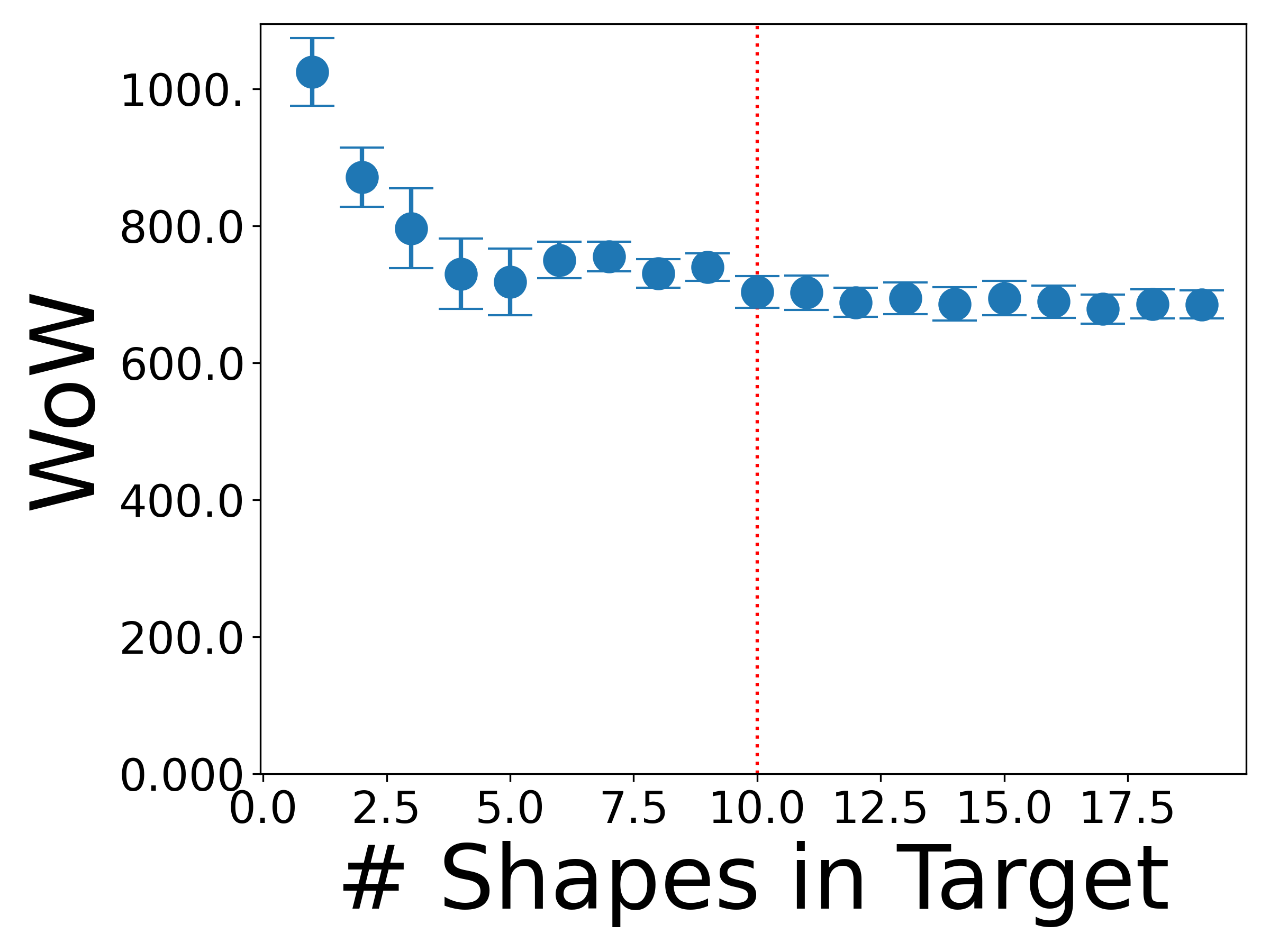}
    \end{subfigure}
        \hfill
    \begin{subfigure}{\textwidth}
    \centering
    \includegraphics[width=\linewidth]{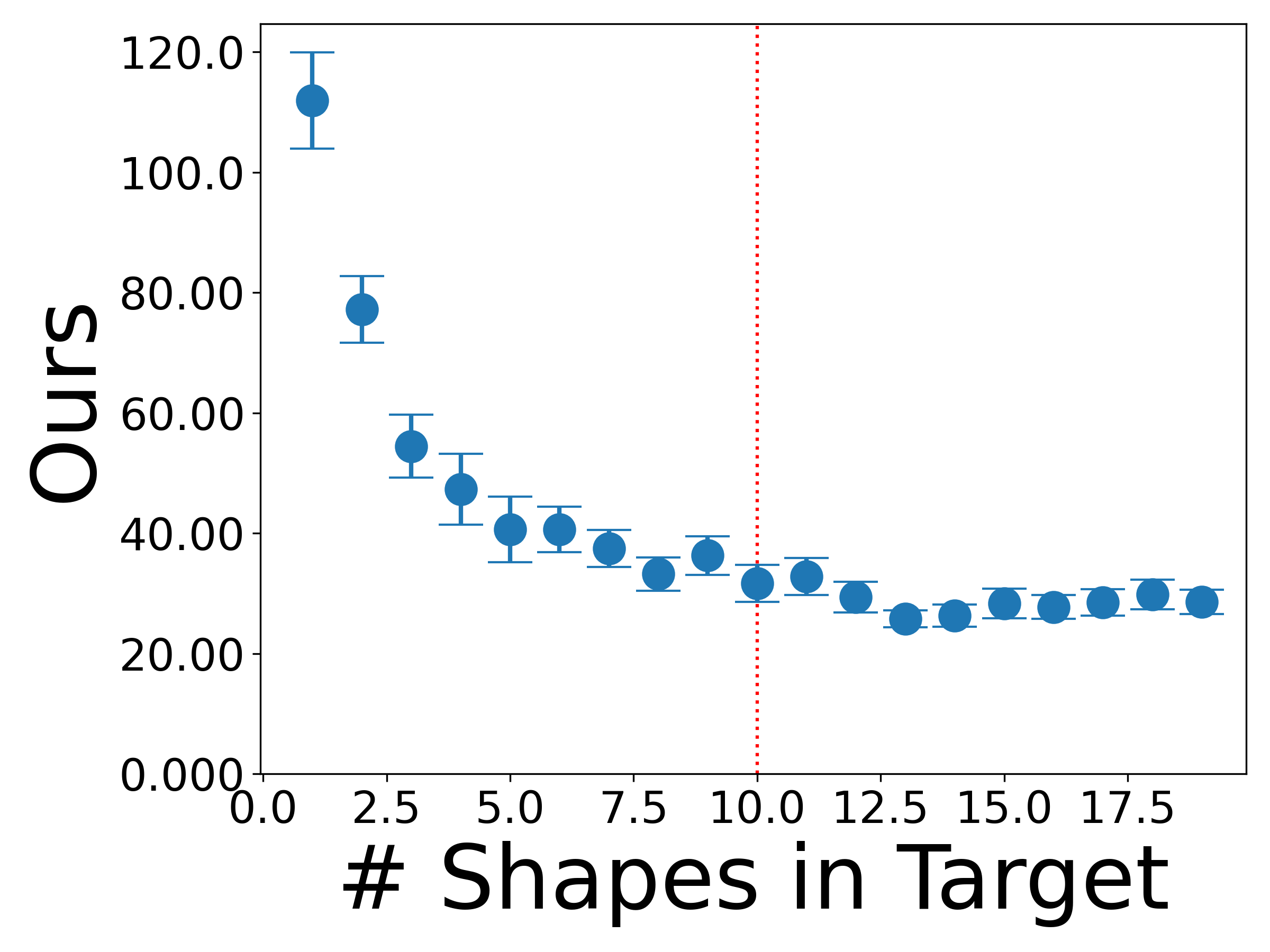}
    \end{subfigure}
    \vspace{-4mm}
    \caption{Number of Target Shapes}
    \label{subfig:target_shapes}
  \end{subfigure}
  \hspace{-2mm}
  % Second subplot
    \begin{subfigure}[b]{0.335\textwidth}
    \centering
    \begin{subfigure}{\textwidth}
    \centering
    \includegraphics[width=\linewidth, trim={0 50 0 0}, clip]{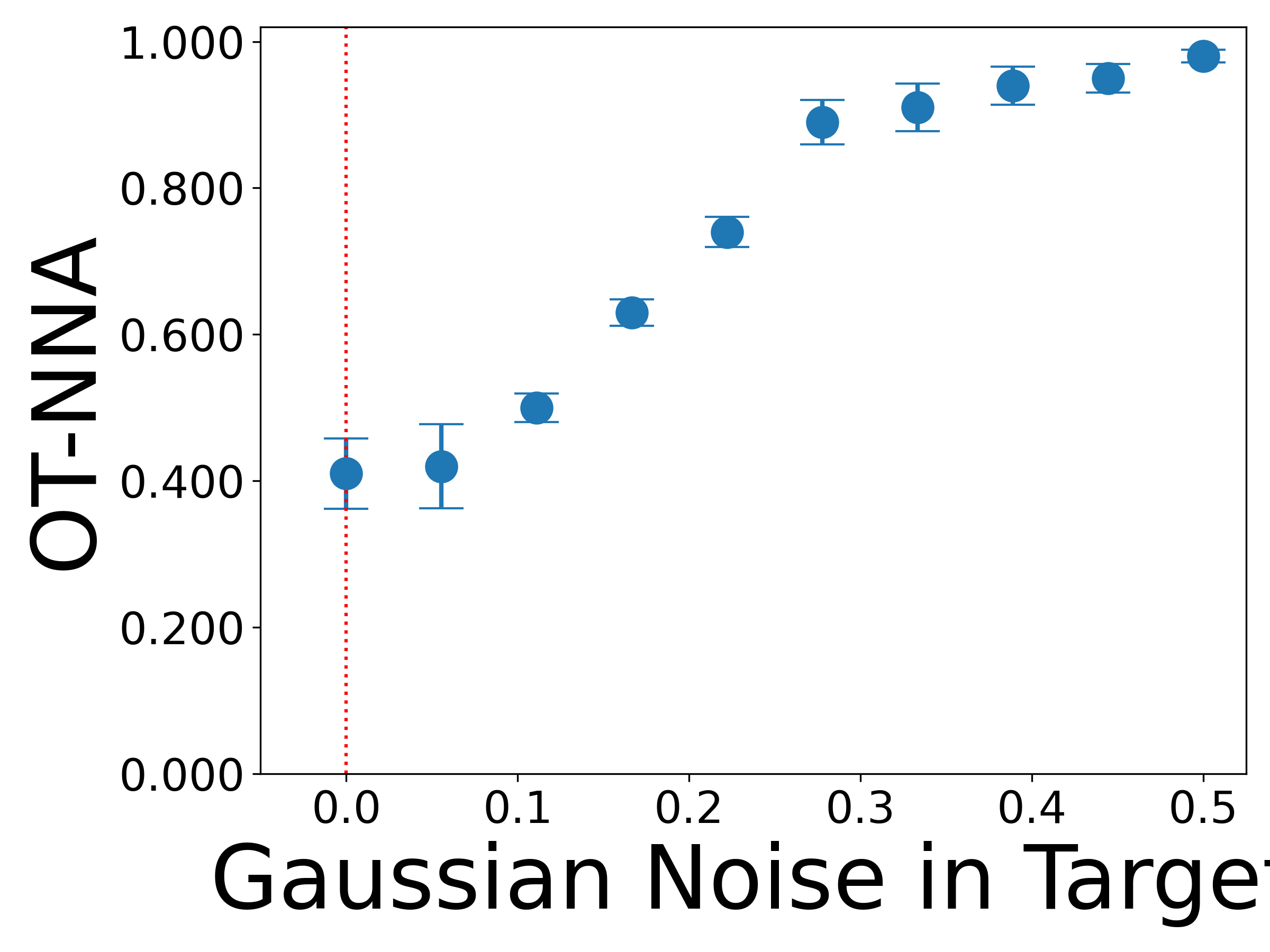}
    \end{subfigure}
    \hfill
    \begin{subfigure}{\textwidth}
    \centering
    \includegraphics[width=\linewidth, trim={0 50 0 0}, clip]{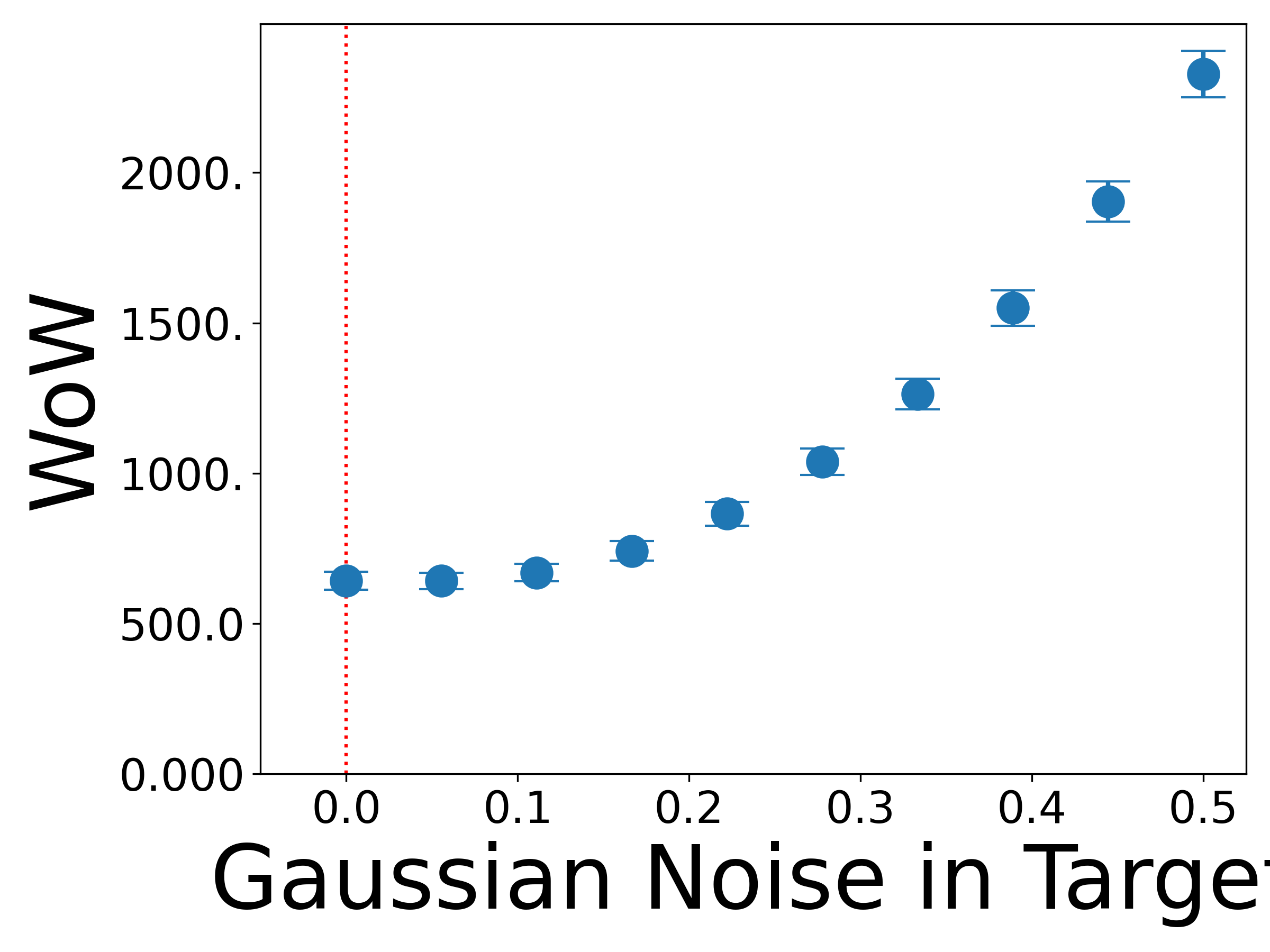}
    \end{subfigure}
        \hfill
    \begin{subfigure}{\textwidth}
    \centering
    \includegraphics[width=\linewidth]{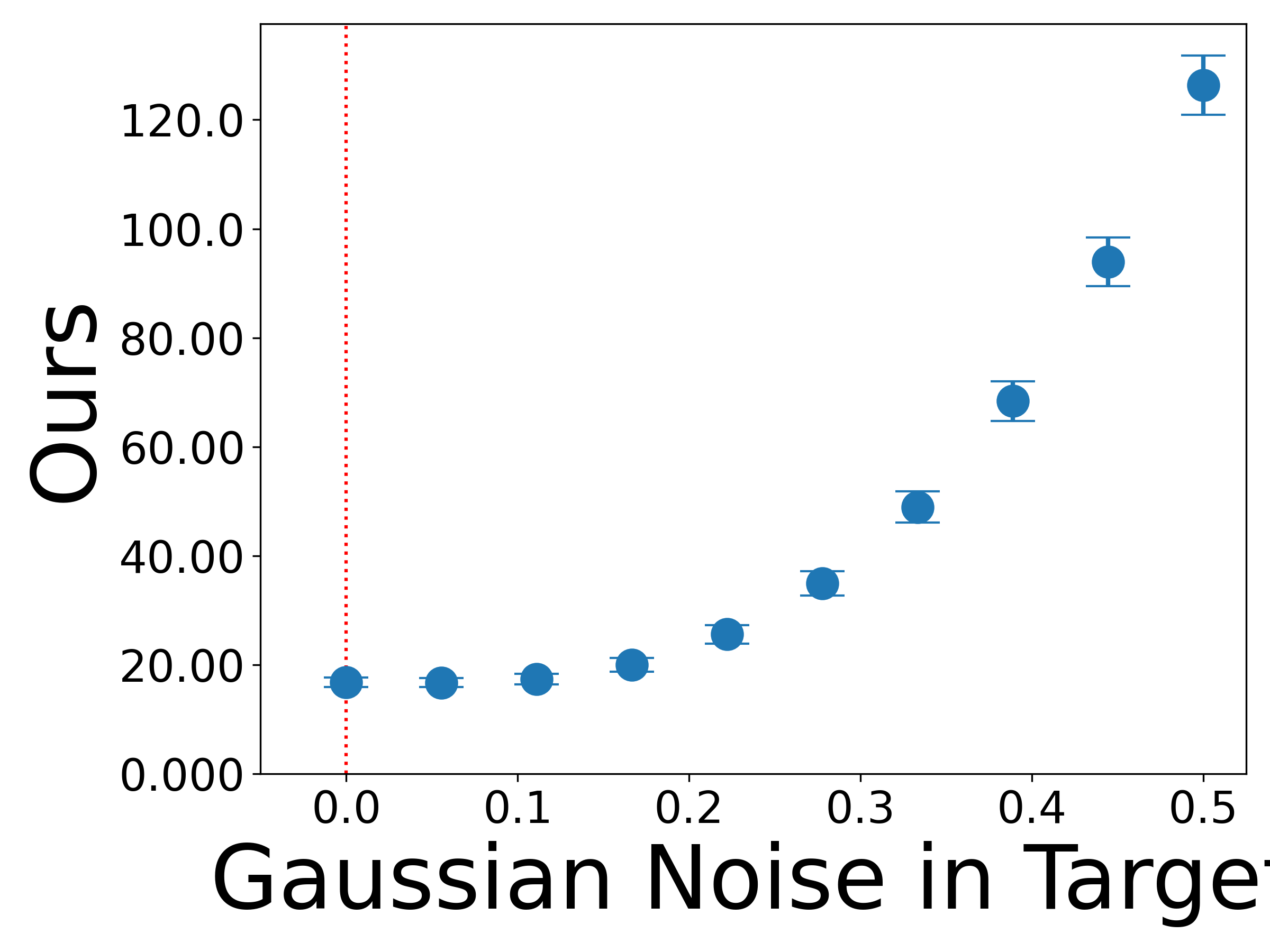}
    \end{subfigure}
    \vspace{-4mm}
    \caption{ Noise of Target Points}
    \label{subfig:gaussian_target}
  \end{subfigure}
  \hspace{-2mm}
      \begin{subfigure}[b]{0.335\textwidth}
    \centering
    \begin{subfigure}{\textwidth}
    \centering
    \includegraphics[width=\linewidth, trim={0 50 0 0}, clip]{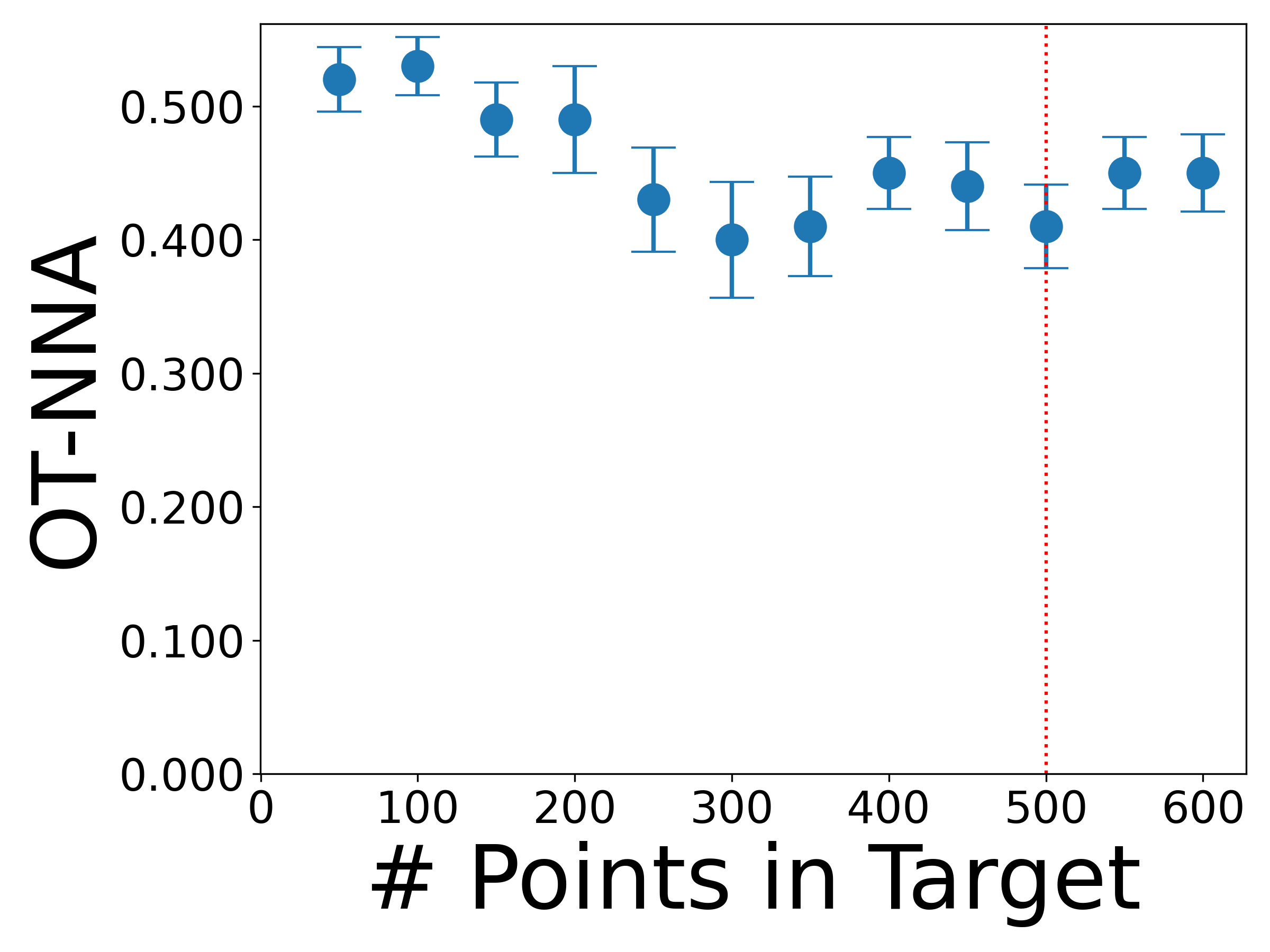}
    \end{subfigure}
    \hfill
    \begin{subfigure}{\textwidth}
    \centering
    \includegraphics[width=\linewidth, trim={0 50 0 0}, clip]{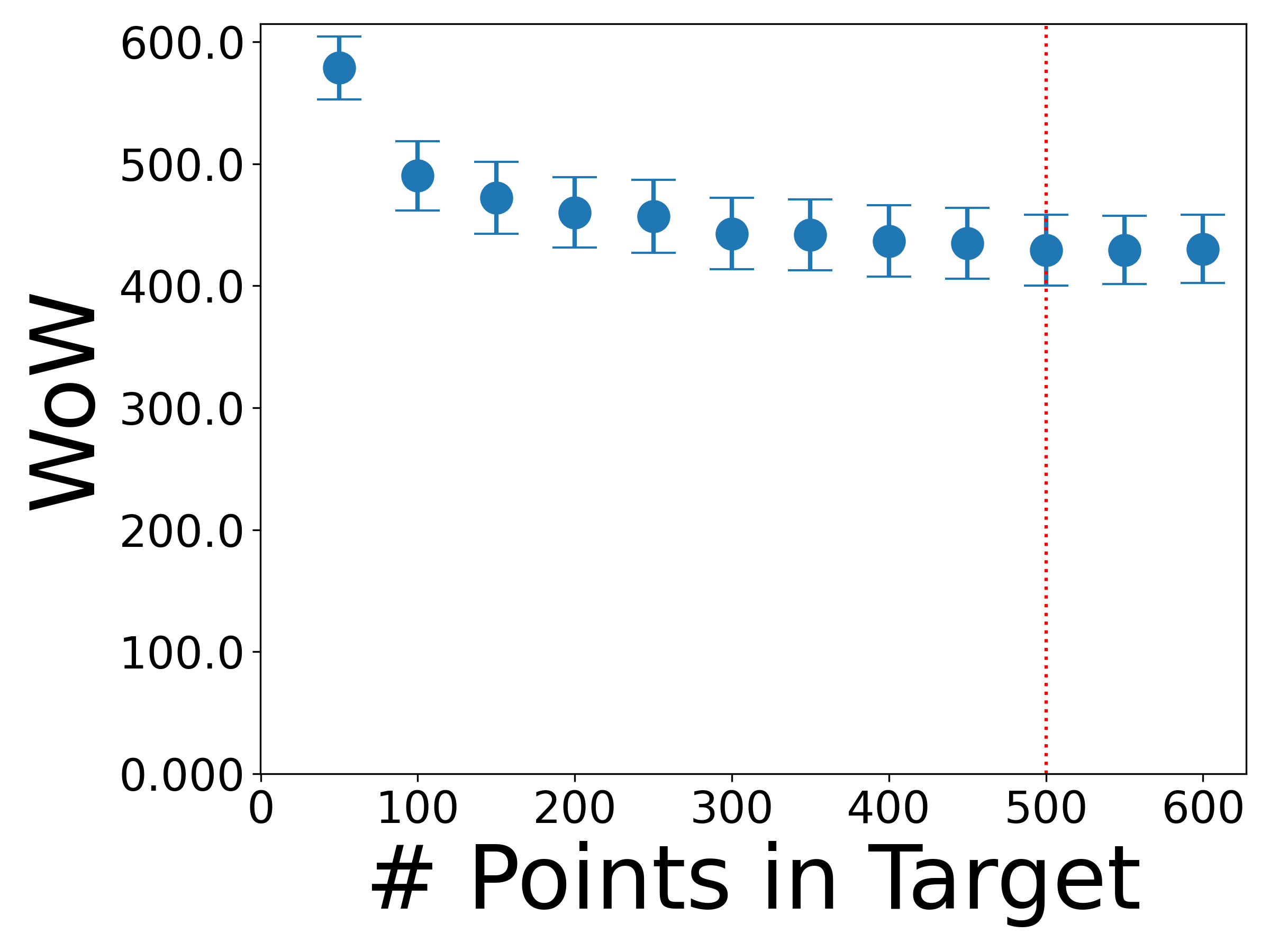}
    \end{subfigure}
        \hfill
    \begin{subfigure}{\textwidth}
    \centering
    \includegraphics[width=\linewidth]{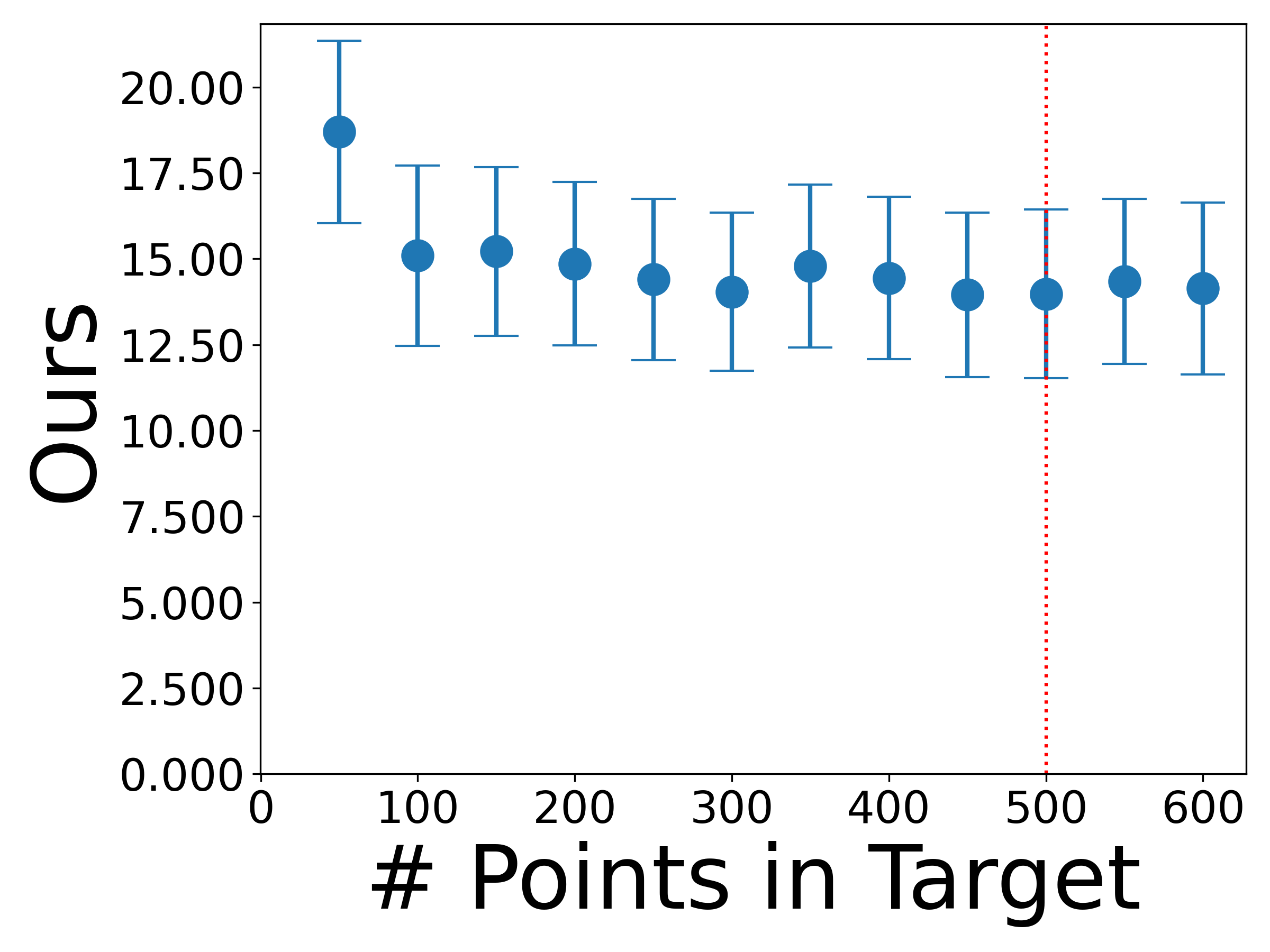}
    \end{subfigure}
    \vspace{-4mm}
    \caption{Number of Target Points}
    \label{subfig:point_num_target}
  \end{subfigure}
  \caption{OT-NNA, WoW, and our DSW \revise{(`Ours')}between target and reference point cloud batches for varying numbers of shapes $M$ in target batch (\ref{subfig:target_shapes}), for Gaussian noise $\sigma_{\text{Noise}}$ for target points (\ref{subfig:gaussian_target}) and varying target point cloud resolution $m$ (\ref{subfig:point_num_target}). Fixed reference values are marked in red. 
  }
  \label{fig:shape_eval_ot}
\end{figure}

\subsection{Comparing Image Distributions via their Patch Distributions}
\label{subsection:patches}
Given the importance of OT for imaging, we conclude with an imaging experiment.
Interestingly, OT is utilized on two levels in this area. On the one hand, it is used to compare \emph{two batches of images} using pairwise Euclidean distances \citep{genevay2018learning}.
On the other hand, using OT as a distance \emph{between two individual images} remains of relevance
due to the disadvantages of Euclidean distances. % in imaging.
Those methods represent images as 2D histograms 
\citep{beier2023unbalanced,geuter2025universal} 
or as \emph{patch} distributions \citep{hertrich2022wasserstein,elnekave2022generating_patch,flotho2025t_patch}.
% As a natural combination, it is possible to compare two image batches using WoW distances as proposed in \citep{dukler2019wasserstein}. 
Thus, a natural combination is the comparison of image batches using WoW \citep{dukler2019wasserstein}. 

Since patch-based OT distances serve as a perceptual metric between images \citep{he2024multiscale_perceptual_sliced_patch}, we incorporate patch-based image representations into the WoW framework. 
%Notably, this representation remains topical in 
%image reconstruction \citep{hertrich2022wasserstein}, generation \citep{elnekave2022generating_patch}, and translation \citep{flotho2025t_patch}, 
This is based
on parametrizing images via their distribution of localized features. 
More concretely, we map each
(grayscale) image to the empirical measure over all contained (overlapping) square-shaped image regions of size $p \times p$, see \citep{piening2024learning_patch} for an in-depth description. Thus, we may represent each image as an empirical measure $\mu_i \in \eProb(\R^{p^2})$ supported on vectorized patches and a batch of images as an empirical meta-measure $\bmu \in \eProb(\eProb(\R^{p^2}))$.
In this experiment, we establish a quantitative comparison of image batches using our hierarchical OT framework as an alternative to comparing two image batches using standard OT or a neural-network-based perceptual metric, such as the `Kernel Inception Distance' \citep[`KID']{sutherland2018demystifying_kid}. 

To validate this approach, we consider synthetic $64 \times 64$ texture images based on random Perlin noise \citep{perlin1985image}. This texture synthesis model is controlled by several parameters, among them the \emph{lacunarity} and the \emph{persistence}, \revise{see Appendix~\ref{app:patch_stuff}}. 
Similar to our previous experiment, we initialize a reference meta-measure $\bmu$ over images represented as patch distributions  according to some reference parameters and compare it to a target meta-measure $\bnu$
with varying lacunarity, respectively, persistence.
For batch size $32$ and patch size $p=8$, both meta-measures are initialized according to $32$ random images, 
where each image is represented by $(64 - 7) \times (64 - 7) = 3249$ uniformly weighted patches of dimension $8\times 8=64$. 
In Figure~\ref{fig:image_eval_ot}, we compare the 
average behavior over five runs 
of the standard Wasserstein distance with Euclidean cost between our reference and target images
and our patch-based DSW distance ($\sigma=0.1$, $S=10, 000$ projections, $R=10$), where the `ground truth' lacunarity and persistence reference parameters are again marked in red. 

We observe that both distances are minimized for the `true' reference parameters. Still, our patch-based approach is more sensitive to parameter variation and better at discriminating between different batches, see also supplementary comparisons.
%with the patch-based WoW distance and the KID. 
Note that patch-based WoW calculation takes about 40 seconds, whereas our \revise{DSW} distance merely requires about one second.

\begin{figure}[ht]
  \centering
  % First subplot
  \begin{subfigure}[b]{0.490\textwidth}
    \centering
    %\begin{subfigure}{\textwidth}
    %\centering
    %\includegraphics[width=\linewidth]{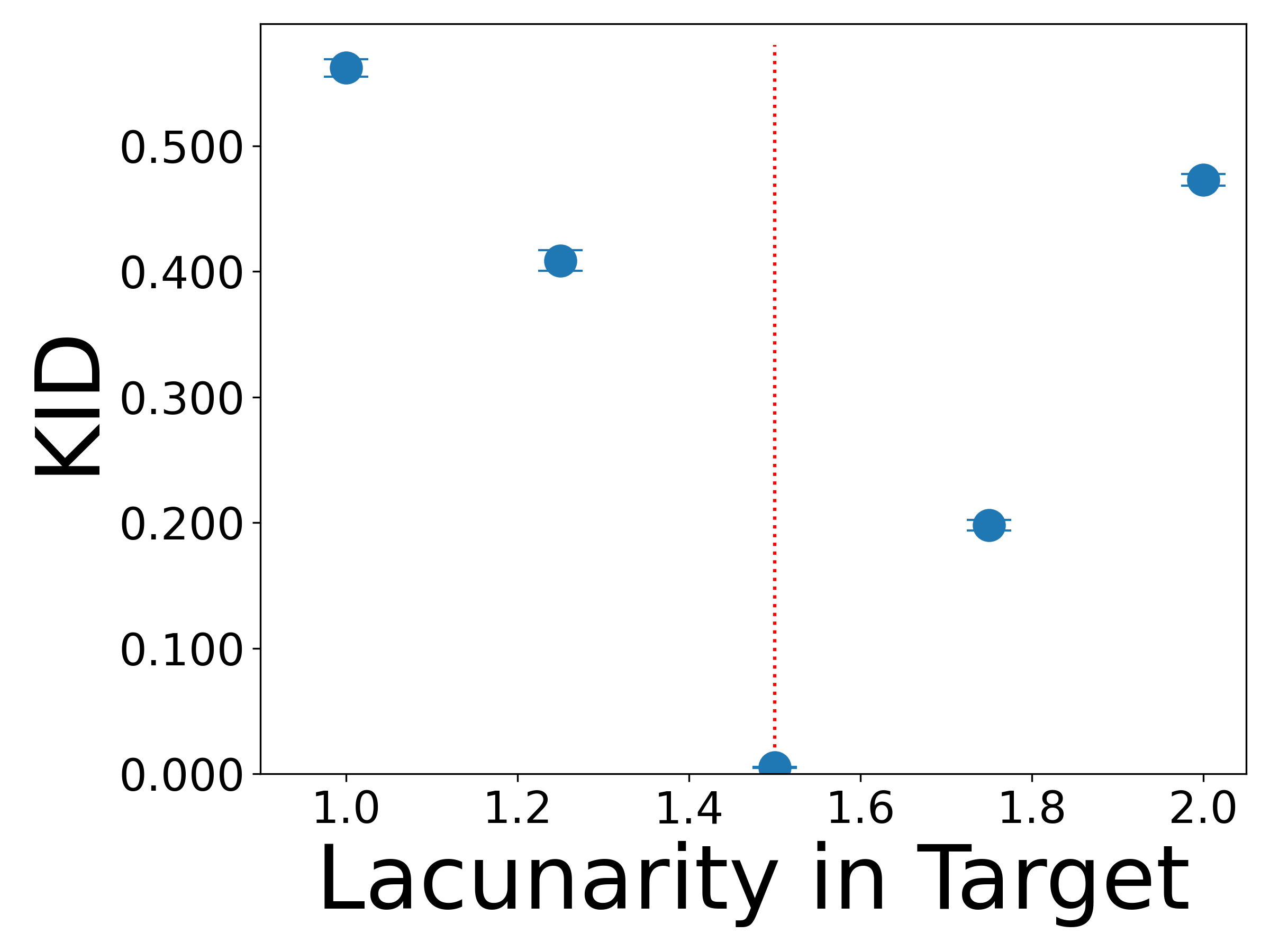}
    %\end{subfigure}
    %\hfill
    \begin{subfigure}{.490 \textwidth}
    \centering
    \includegraphics[width=\linewidth]{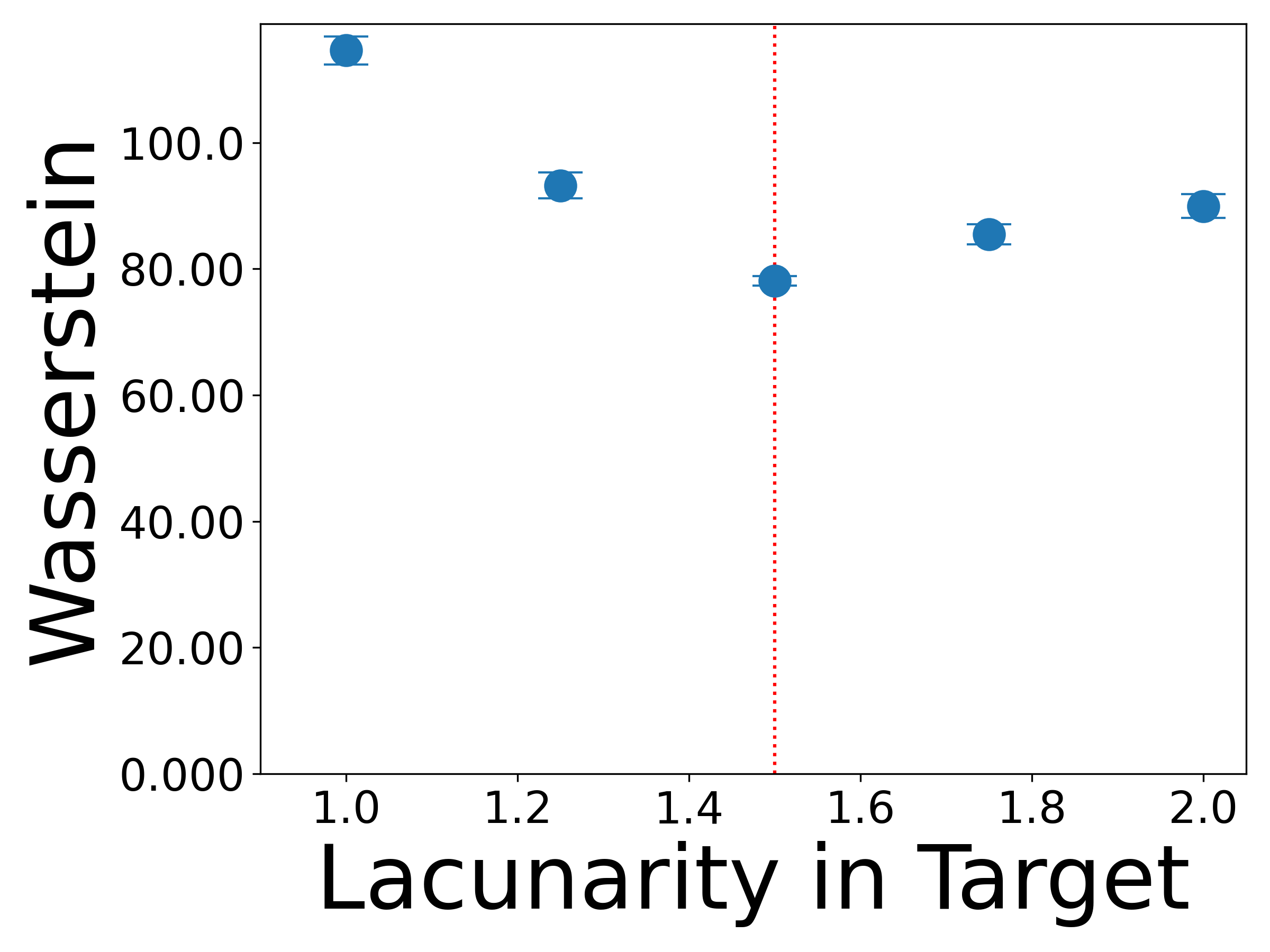}
    \end{subfigure}
    \hfill
    %    \begin{subfigure}{\textwidth}
    %\centering
    %\includegraphics[width=\linewidth]{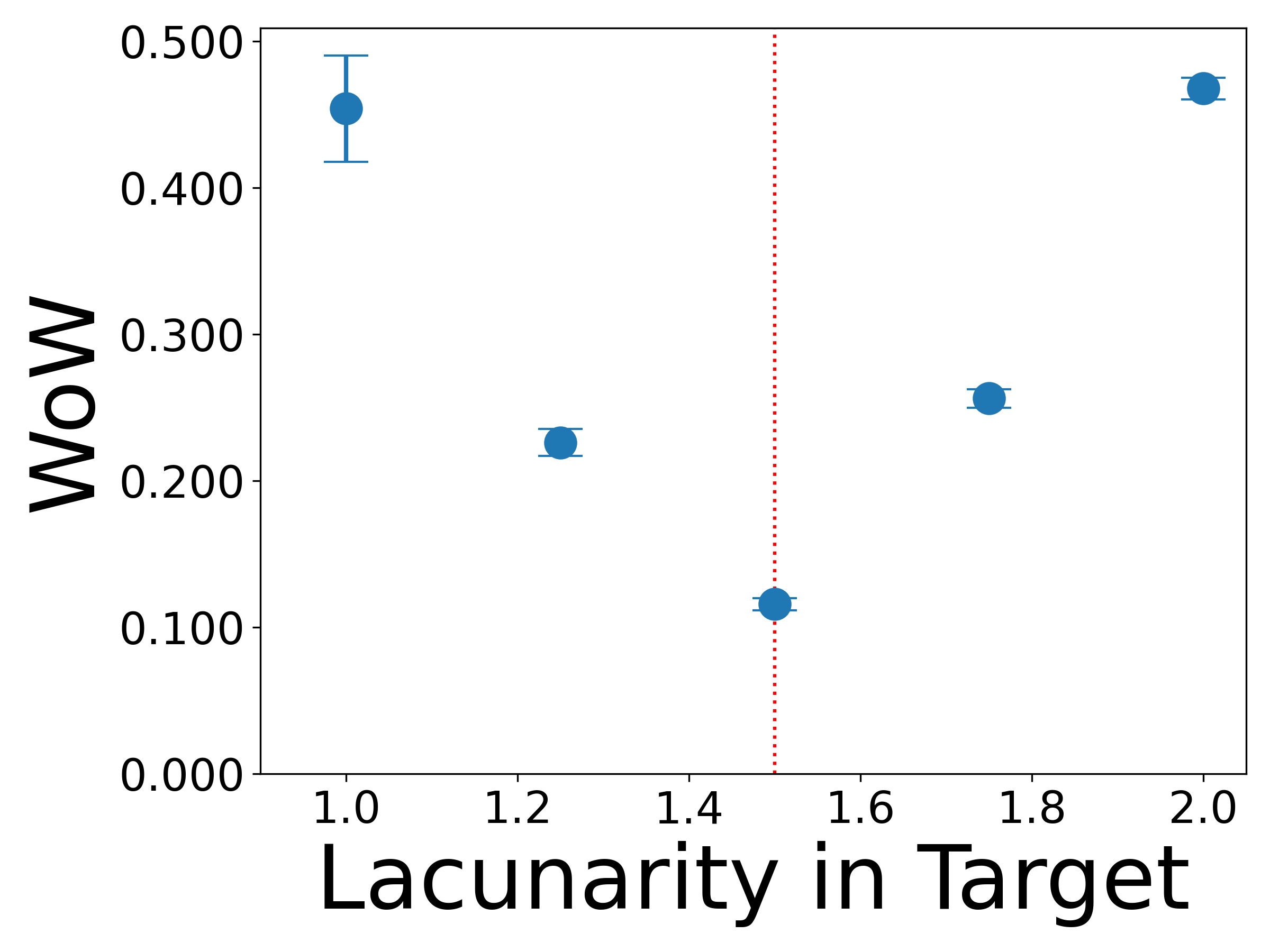}
    %\end{subfigure}
    %\hfill
    \begin{subfigure}{.490 \textwidth}
    \centering
    \includegraphics[width=\linewidth]{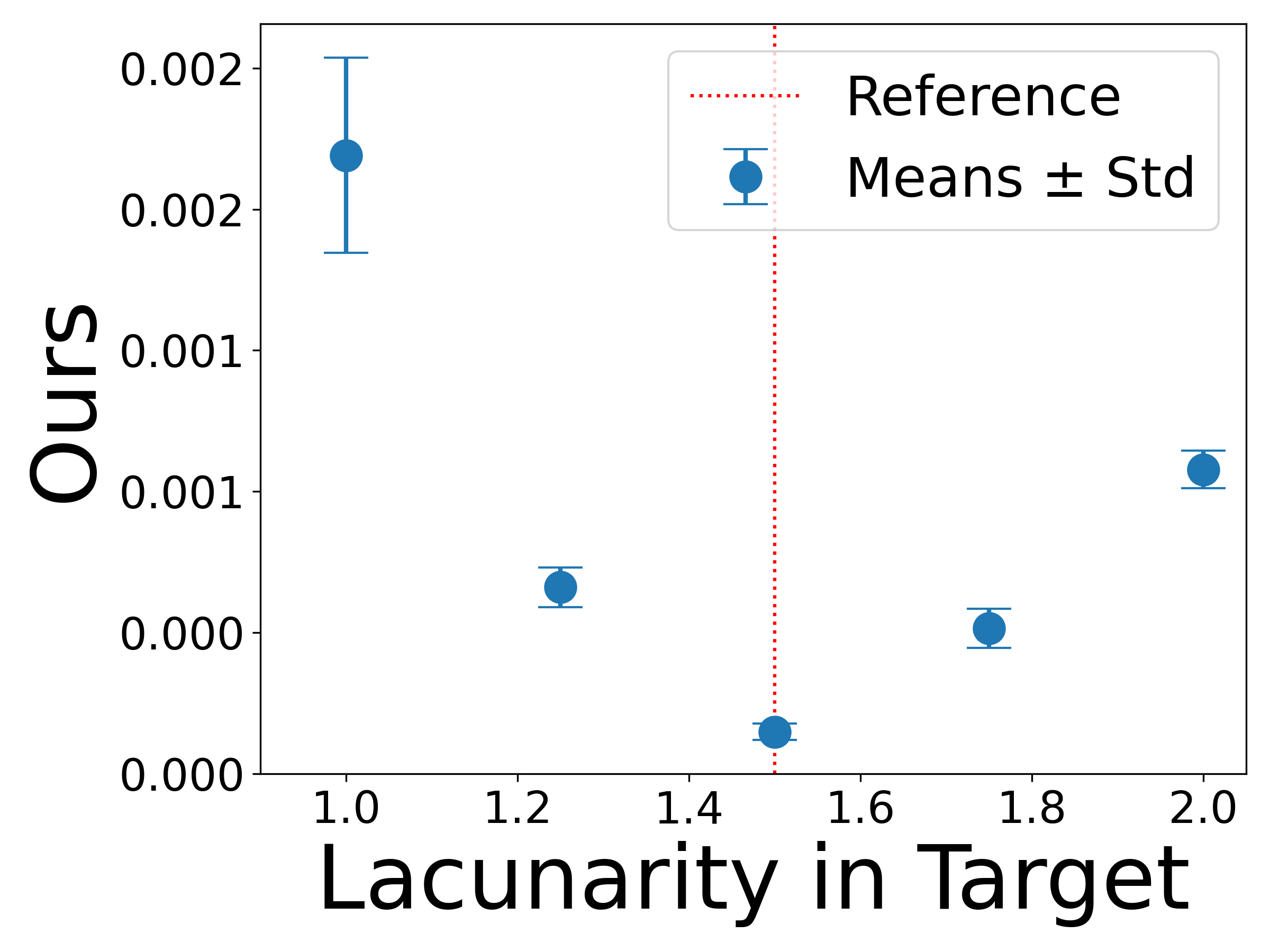}
    \end{subfigure}
    %\vspace{-4mm}
    \caption{Varying Lacunarity in Perlin Noise}
    \label{subfig:lac}
  \end{subfigure}
  \hfill
  % Second subplot
    \begin{subfigure}[b]{0.490\textwidth}
    \centering
    % \begin{subfigure}{\textwidth}
    % \centering
    % \includegraphics[width=\linewidth]{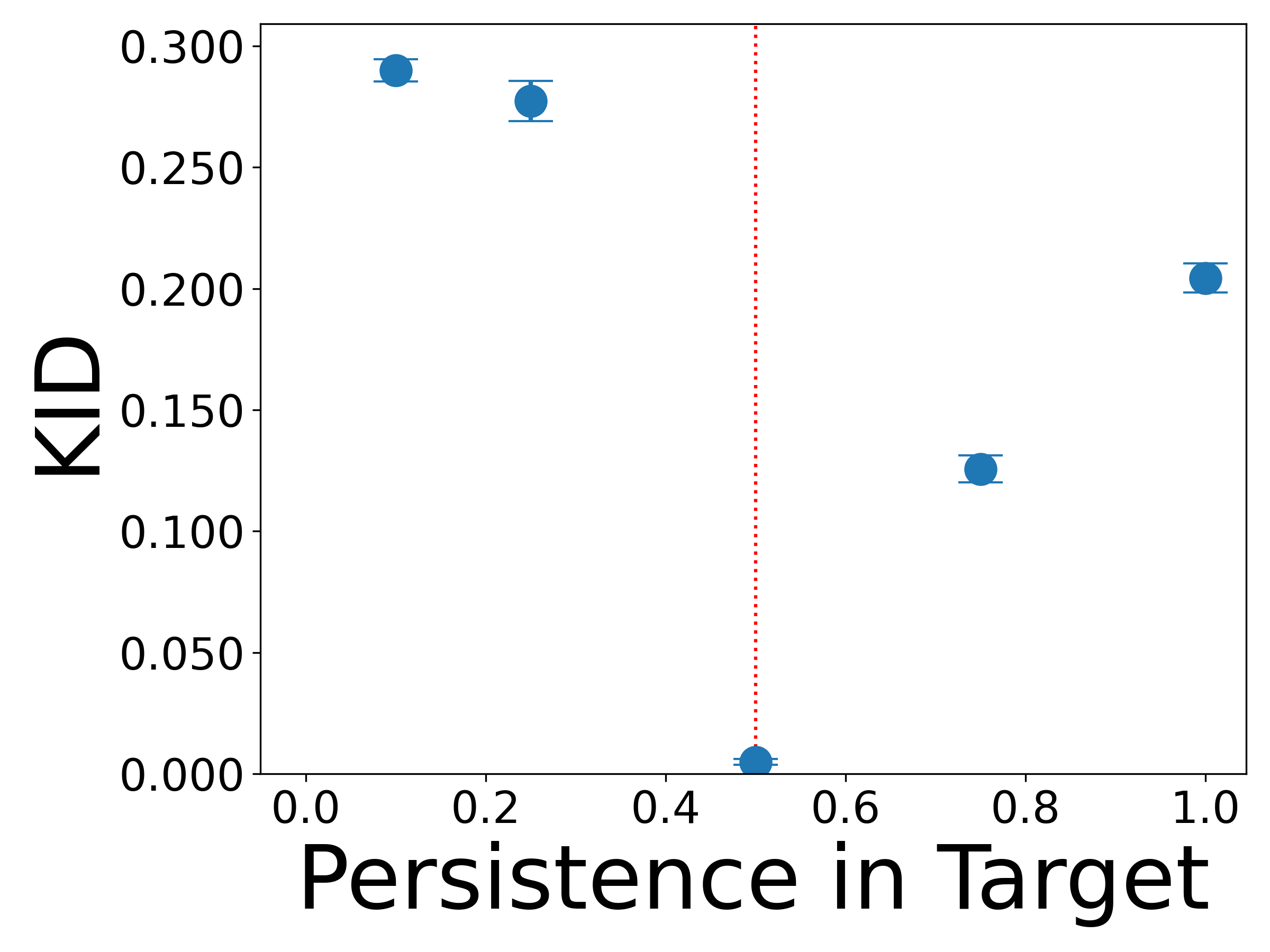}
    % \end{subfigure}
    % \hfill
     \begin{subfigure}{.490 \textwidth}
     \centering
    \includegraphics[width=\linewidth]{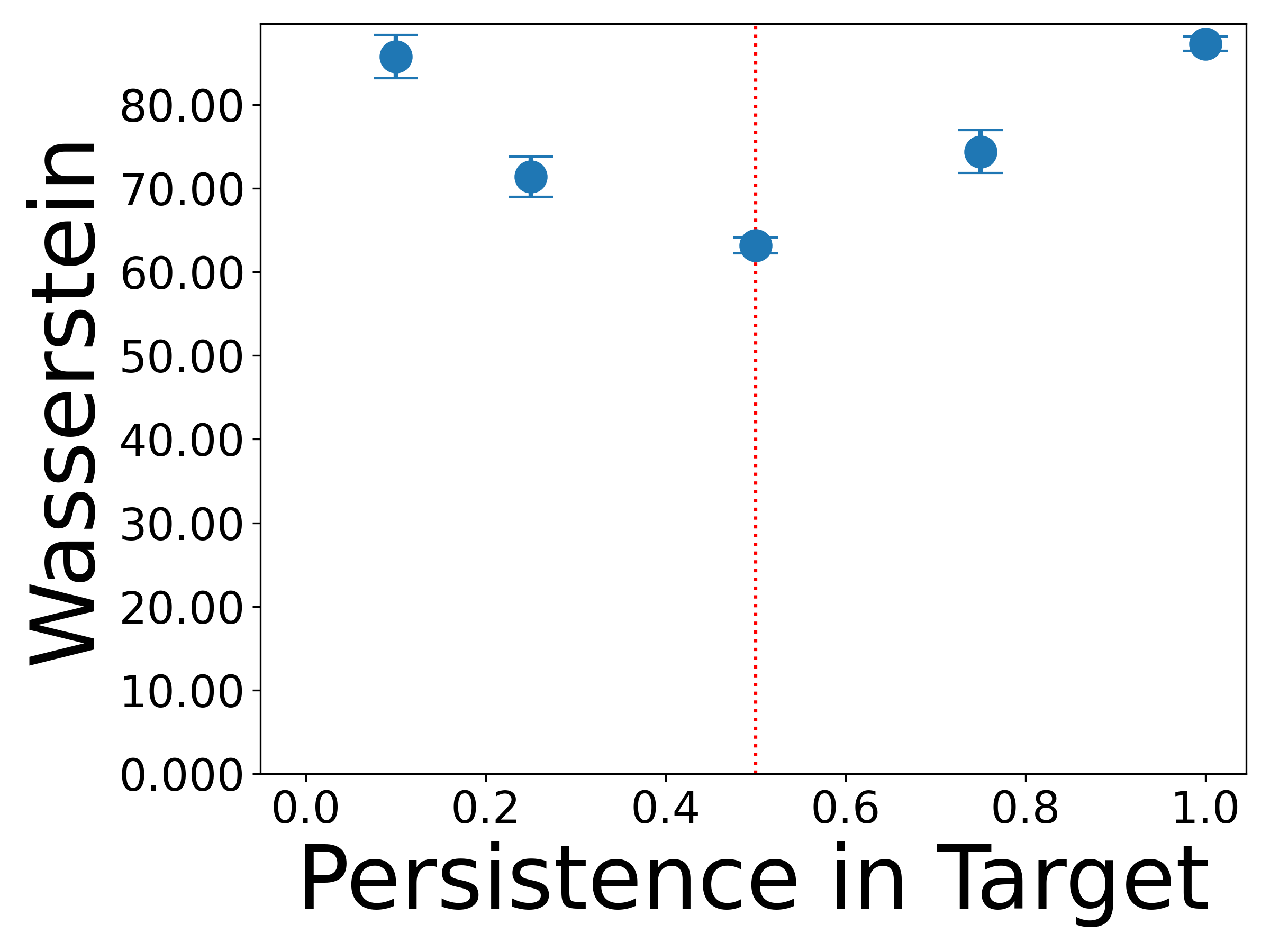}
    \end{subfigure}
     \hfill
    %\begin{subfigure}{\textwidth}
    %\centering
    %\includegraphics[width=\linewidth]{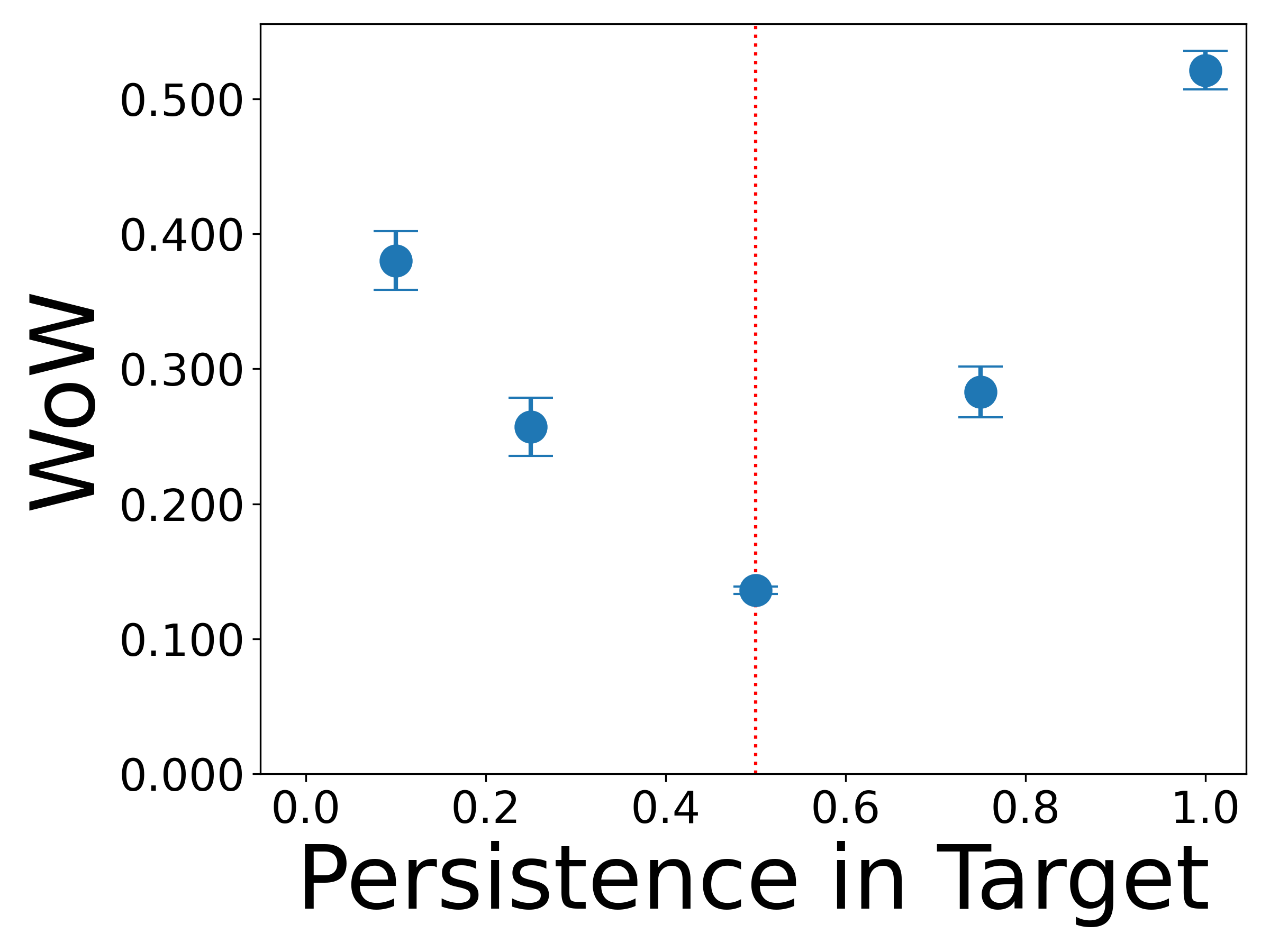}
    %\end{subfigure}
     %\hfill
    \begin{subfigure}{.490 \textwidth}
    \centering
    \includegraphics[width=\linewidth]{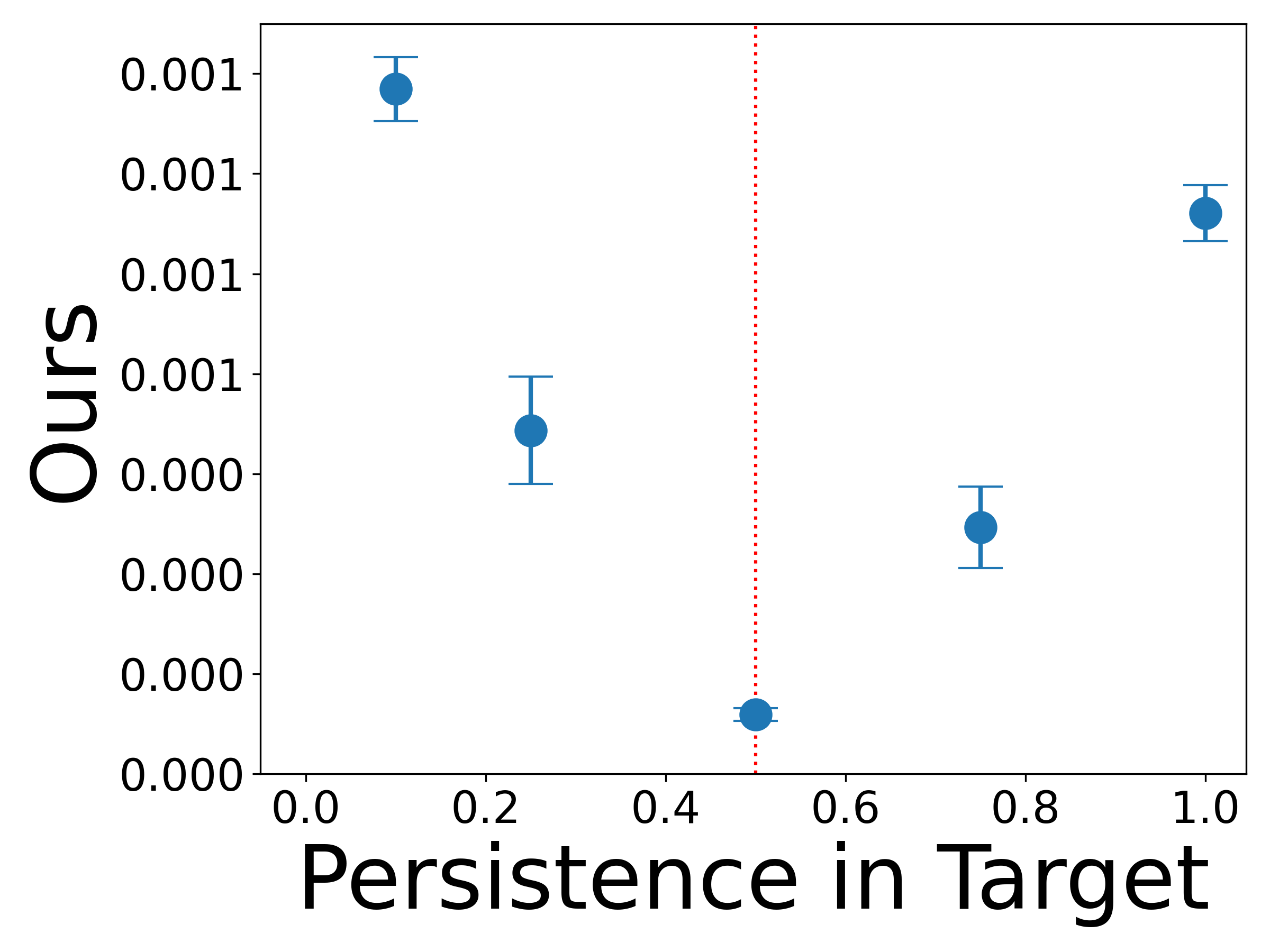}
    \end{subfigure}
    %\vspace{-4mm}
    \caption{Varying Persistence in Perlin Noise}
    \label{subfig:persistence}
  \end{subfigure}
   \caption{Comparing synthetic texture image batches via Euclidean Wasserstein and our sliced patch-based distance based on varying `lacunarity' (\ref{subfig:lac}) and `persistence' (\ref{subfig:persistence}). Both distances are minimized for `true' parameters (red), but our \revise{DSW} \revise{(`Ours')}
   distance leads to clearer discrimination.
  }
  \label{fig:image_eval_ot}
\end{figure}
\section{Conclusion}
We introduce a general sliced OT framework for measures on arbitrary Banach spaces. Leveraging the isometry between 1d Wasserstein and $L_2([0, 1])$, Gaussian-process–parametrized $L_2$-projections, and classical spherical slicing, we define the DSW distance between meta-measures, a well-posed, computationally efficient substitute for WoW. We prove that DSW minimization corresponds to WoW minimization for discretized meta-measures and demonstrate practical effectiveness on datasets, shapes, and images. 

On the practical side, 
future work could align DSW with the original OTDD 
by employing hybrid slicing \citep{nguyen2024hierarchical} 
to extend DSW to
$\Prob_2(\mathcal{Y} \times \Prob_2(\R^d))$
or integrate convolutional projections \citep{nguyen2022revisiting_convolutional_sliced}
similar to the s-OTDD.
Also, one might employ our Banach slicing for infinite-dimensional generative models \citep{hagemann2025multilevel}
\revise{or integrate dynamic transport into our framework to enable applications in generative flow matching similar to  \citep{chapel2025differentiable}}.

On the theoretical side, 
it would be of interest to 
analyze further topological properties similar to \cite{han2023sliced}.
\revise{Another interesting direction is the question of sample complexity:
 In high-dimensional Euclidean spaces, sliced Wasserstein distances require fewer samples to approximate a continuous probability measure 
 than the Wasserstein distance \citep{nadjahi2020statistical}.
 As WoW distances require many samples to approximate a meta-measure \citep{catalano2024hierarchical}, it would be interesting to analyze if our DSW distance displays better sample complexity properties.
}
%-equivalence between DSW and WoW for general meta-measures.

%Future work:
%Hybrid Hierarchical
%Weak Convergence
%Kernels 
\section*{Acknowledgments}
MP gratefully acknowledges the financial support by the German Research Foundation (DFG), GRK2260
BIOQIC project 289347353. Moreover, we gratefully acknowledge fruitful discussions with Gabriele Steidl, Nicolaj Rux, and Gregor Kornhardt that helped improve the ideas in this work.
\bibliography{references_iclr2026}
\bibliographystyle{iclr2026_conference}

\clearpage
\appendix
%--------------------------------
\section{Non-Spherical Sliced Wasserstein Distance on Banach Spaces}
\label{app:banach}
%---------------------------------

% The idea behind the SW distance \eqref{eq:sliced_wasserstein}
% has been extended to arbitrary, maybe infinite Hilbert spaces
% \citep{han2023sliced}.
% The aim of this section is to generalize the SW distance
% from Hilbert to Banach spaces
% and to replace the surface measure on the unit sphere
% by easy-to-sample-from probabilities.
% In the finite dimensional setting,
% SW distances 
% relying on so-called energy measures
% have been introduced in \citep{nguyen2023energy},
% similar approaches 
% regarding the standard Gaussian
% are studied in \citep{nadjahi2021fast}.
% In the following,
% let $p \in [1, \infty)$,
% let $U$ be a separable Banach space
% with norm $\lVert \cdot \rVert_\BU$
% and continuous dual $\BU^*$.
% The action 
% of a linear functional $\theta \in \BU^*$
% on $x \in \BU$
% is denoted by the dual pairing 
% $\langle x, \theta \rangle \coloneqq \theta(x)$.
% Formally,
% for $\xi \in \Prob(\BU^*)$,
% we define the \emph{$\xi$-based $p$-SW distance} on $\BU$ as
% \begin{equation}
%     \label{eq:rho-SW}
%     \SW_{p}^p (\mu, \nu; \, \xi)
%     \coloneqq
%     \int_{\BU^*} 
%     \W_p^p( \pi_{\theta,\sharp} \, \mu, \pi_{\theta, \sharp} \, \nu ; \, \R)
%     \d \xi(\theta),
%     \quad
%     \mu, \nu \in \Prob_p(\BU),
% \end{equation}
% with the slicing operator $\pi_\theta \coloneqq \langle \cdot,\theta\rangle$.
% The SW distance in \eqref{eq:rho-SW} is well-defined
% since the integrand is continuous.
Here, we present proofs for Section~\ref{sec:slice-banach}. For clarity, we restate and prove our statements from the main paper as smaller statements.
\begin{lemma}
    \label{lem:lip-W}
    For $\mu, \nu \in \Prob_2(\BU)$,
    $\theta \in \BU^*
    \mapsto
    \W(\pi_{\theta,\sharp} \, \mu, \pi_{\theta,\sharp} \, \nu; \, \R)$
    is Lipschitz continuous. 
\end{lemma}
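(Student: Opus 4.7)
The plan is to combine the reverse triangle inequality for the Wasserstein metric with the basic fact that $\W$ between push-forwards of a common measure is bounded by the $L_2(\mu)$-distance of the maps.

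First, for any $\theta_1, \theta_2 \in \BU^*$, the reverse triangle inequality for $\W(\cdot, \cdot; \R)$ yields
\begin{equation}
    \bigl\lvert \W(\pi_{\theta_1,\sharp}\mu,\pi_{\theta_1,\sharp}\nu;\R) - \W(\pi_{\theta_2,\sharp}\mu,\pi_{\theta_2,\sharp}\nu;\R) \bigr\rvert
    \le
    \W(\pi_{\theta_1,\sharp}\mu,\pi_{\theta_2,\sharp}\mu;\R)
    + \W(\pi_{\theta_1,\sharp}\nu,\pi_{\theta_2,\sharp}\nu;\R).
\end{equation}
It thus suffices to bound each term on the right by a multiple of $\lVert \theta_1 - \theta_2 \rVert_{\BU^*}$.

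Second, I would use the coupling $(\pi_{\theta_1}, \pi_{\theta_2})_\sharp \mu \in \Gamma(\pi_{\theta_1,\sharp}\mu, \pi_{\theta_2,\sharp}\mu)$ as a feasible transport plan. By definition of $\W$ and linearity of the dual pairing, this gives
\begin{equation}
    \W^2(\pi_{\theta_1,\sharp}\mu,\pi_{\theta_2,\sharp}\mu;\R)
    \le
    \int_{\BU} \lvert \langle \theta_1 - \theta_2, u \rangle \rvert^2 \, \d\mu(u)
    \le
    \lVert \theta_1 - \theta_2 \rVert_{\BU^*}^2 \int_{\BU} \lVert u \rVert_{\BU}^2 \, \d\mu(u),
\end{equation}
where the last step is the dual pairing estimate $\lvert \langle v, u \rangle \rvert \le \lVert v \rVert_{\BU^*} \lVert u \rVert_{\BU}$. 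The same bound holds with $\nu$ in place of $\mu$. Since $\mu, \nu \in \Prob_2(\BU)$, both second moments $M_\mu \coloneqq (\int \lVert u \rVert^2 \d\mu)^{1/2}$ and $M_\nu$ are finite, so we obtain the global Lipschitz bound with constant $M_\mu + M_\nu$.

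I do not anticipate any genuine obstacle: the only subtle point is confirming that $(\pi_{\theta_1}, \pi_{\theta_2})_\sharp \mu$ is indeed a valid coupling, which follows directly from the push-forward definition, and that the dual pairing estimate applies verbatim in the separable Banach setting considered here.
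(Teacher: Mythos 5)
Your proposal is correct and follows essentially the same route as the paper's proof: the reverse triangle inequality reduces the problem to bounding $\W(\pi_{\theta_1,\sharp}\mu,\pi_{\theta_2,\sharp}\mu;\R)$, which is then controlled via the coupling $(\pi_{\theta_1},\pi_{\theta_2})_\sharp\mu$ and the dual pairing estimate, yielding the same Lipschitz constant $M_2^{1/2}(\mu)+M_2^{1/2}(\nu)$.
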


\begin{proof}
    We generalize the proof of \cite[Lem.~2.3]{han2023sliced}.
    For this, 
    let $\theta_1, \theta_2 \in \BU^*$ be arbitrary.
    Using the triangle inequality and its reverse,
    we have
    \begin{align}
        &\lvert 
        \W(\pi_{\theta_1,\sharp} \, \mu, \pi_{\theta_1, \sharp} \, \nu; \, \R)
        -
        \W(\pi_{\theta_2,\sharp} \, \mu, \pi_{\theta_2, \sharp} \, \nu; \, \R)
        \rvert
        \\
        &\le
        \bigl[
            \W(\pi_{\theta_1,\sharp} \, \mu, \pi_{\theta_2, \sharp} \, \mu; \, \R)
            +
            \W(\pi_{\theta_2,\sharp} \, \mu, \pi_{\theta_1, \sharp} \, \nu; \, \R)
        \bigr]
        \\
        &\qquad -
        \bigl[
            \W(\pi_{\theta_2,\sharp} \, \mu, \pi_{\theta_1, \sharp} \, \nu; \, \R)
            -
            \W(\pi_{\theta_1,\sharp} \, \nu, \pi_{\theta_2, \sharp} \, \nu; \, \R)
        \bigr]
        \\
        &=
        \W(\pi_{\theta_1,\sharp} \, \mu, \pi_{\theta_2, \sharp} \, \mu; \, \R)
        +
        \W(\pi_{\theta_1,\sharp} \, \nu, \pi_{\theta_2, \sharp} \, \nu; \, \R).
    \end{align}
    For the first term on the left-hand side,
    it follows
    \begin{align}
        \W^2(\pi_{\theta_1,\sharp} \, \mu, \pi_{\theta_2, \sharp} \, \mu; \, \R)
        &\le
        \int_{\R^2} 
        \lvert t_1 - t_2 \rvert^2
        \d (\pi_{\theta_1}, \pi_{\theta_2})_\sharp \, \mu(t_1, t_2)
        =
        \int_{\BU} 
        \lvert \langle x, \theta_1 - \theta_2 \rangle\rvert^2 
        \d \mu(x)
        \\
        &\le
        \lVert \theta_1 - \theta_2 \rVert^2_{\BU^*} 
        \int_\BU
        \lVert x \rVert^2_\BU 
        \d \mu (x)
        =
        \lVert \theta_1 - \theta_2 \rVert^2_{\BU^*}
        M_2(\mu),
    \end{align}
    where $M_2(\mu) \coloneqq \int_\BU \lVert x \rVert^2_\BU \d \mu (x)$ 
    is the second moment of $\mu$. 
    Using an analogous estimate for the second term,
    we obtain
    \begin{equation}
        \lvert 
        \W(\pi_{\theta_1,\sharp} \, \mu, \pi_{\theta_1, \sharp} \, \nu; \, \R)
        -
        \W(\pi_{\theta_2,\sharp} \, \mu, \pi_{\theta_2, \sharp} \, \nu; \, \R)
        \rvert
        \le 
        \lVert \theta_1 - \theta_2 \rVert_{\BU^*} \,
        \bigl(M_2^{1/2}(\mu) + M_2^{1/2}(\nu) \bigr).
        \tag*{\qedhere}
    \end{equation}
\end{proof}
This allows us to prove the first part of Theorem~\ref{thm:SW-metric}.
\begin{proposition}
    For $\xi \in \Prob_2(\BU^*)$,
    the $\xi$-based SW distance is well-defined.
\end{proposition}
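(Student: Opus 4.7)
The plan is to reduce the well-definedness claim to two ingredients: Borel measurability of the integrand in \eqref{eq:sw-Banach} and finiteness of its squared integral against $\xi$. Measurability comes for free from Lemma~\ref{lem:lip-W}, since a Lipschitz map $\BU^* \to \R$ is continuous, hence Borel measurable with respect to the norm topology on $\BU^*$ and the Borel $\sigma$-algebra used to define $\xi$.

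For finiteness, I would reapply the Lipschitz estimate from Lemma~\ref{lem:lip-W} with the special choice $\theta_2 = 0$. The slicing operator $\pi_0$ maps every $x \in \BU$ to $\langle 0, x \rangle = 0$, so $\pi_{0,\sharp}\mu = \pi_{0,\sharp}\nu = \delta_0$ and consequently $\W(\pi_{0,\sharp}\mu, \pi_{0,\sharp}\nu; \R) = 0$. Plugging this into the Lipschitz bound yields, for every $\theta \in \BU^*$,
\begin{equation}
    \W\bigl(\pi_{\theta,\sharp}\mu, \pi_{\theta,\sharp}\nu; \, \R\bigr)
    \le
    \lVert \theta \rVert_{\BU^*} \,
    \bigl( M_2^{1/2}(\mu) + M_2^{1/2}(\nu) \bigr).
\end{equation}

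Squaring, integrating against $\xi$, and using the elementary bound $(a+b)^2 \le 2a^2 + 2b^2$ gives
\begin{equation}
    \int_{\BU^*} \W^2\bigl(\pi_{\theta,\sharp}\mu, \pi_{\theta,\sharp}\nu; \, \R\bigr) \d\xi(\theta)
    \le
    2\bigl(M_2(\mu) + M_2(\nu)\bigr) \int_{\BU^*} \lVert \theta \rVert_{\BU^*}^2 \d\xi(\theta),
\end{equation}
and the right-hand side is finite because $\mu, \nu \in \Prob_2(\BU)$ have finite second moments and $\xi \in \Prob_2(\BU^*)$ by assumption. Taking square roots yields a finite, well-defined value for $\SW(\mu,\nu;\xi)$.

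I do not expect a real obstacle here: the argument is an essentially routine Lipschitz-plus-second-moment estimate, and the only subtlety worth flagging is the choice of $\theta_2 = 0$ to turn the Lipschitz control into a one-sided growth bound. If one wanted to be fully precise, one could additionally remark that the Borel $\sigma$-algebra on $\BU^*$ generated by the norm topology coincides with the one used to integrate $\xi$, which is standard for separable Banach spaces.
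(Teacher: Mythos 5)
Your proof is correct, and the measurability step is identical to the paper's (both invoke the Lipschitz continuity from Lemma~\ref{lem:lip-W}). The finiteness step differs slightly in mechanism: you reuse the Lipschitz estimate with the anchor $\theta_2 = 0$, noting $\pi_{0,\sharp}\mu = \pi_{0,\sharp}\nu = \delta_0$, which yields the bound
\begin{equation}
    \SW(\mu,\nu;\,\xi) \le \sqrt{2\bigl(M_2(\mu)+M_2(\nu)\bigr)}\, M_2^{1/2}(\xi),
\end{equation}
whereas the paper instead pushes forward an optimal plan $\gamma \in \Gamma(\mu,\nu)$ under $(\pi_\theta,\pi_\theta)$ to get the pointwise estimate $\W(\pi_{\theta,\sharp}\mu, \pi_{\theta,\sharp}\nu;\,\R) \le \lVert\theta\rVert_{\BU^*}\,\W(\mu,\nu;\,\BU)$ and hence $\SW(\mu,\nu;\,\xi) \le \W(\mu,\nu;\,\BU)\, M_2^{1/2}(\xi)$. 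Both estimates are one-liners and both suffice for well-definedness; the paper's version has the added benefit that it bounds $\SW$ by the Wasserstein distance itself (a Lipschitz-type domination that is reused later, e.g.\ in the Monte Carlo error bound of Proposition~\ref{prob:monte-carlo}), while yours only controls $\SW$ by the individual second moments, which does not vanish as $\nu \to \mu$. Your closing remark about the Borel $\sigma$-algebra on $\BU^*$ is a reasonable point of care, though one should note that the relevant measurability is that of the map $\theta \mapsto \W(\pi_{\theta,\sharp}\mu,\pi_{\theta,\sharp}\nu;\,\R)$ with respect to whatever $\sigma$-algebra $\xi$ is defined on, and continuity in the norm topology settles this in the separable setting the paper works in.
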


\begin{proof}
    The Lipschitz continuity in Lemma~\ref{lem:lip-W} implies that
    the integrand in 
    the formulation of the $\xi$-based SW \eqref{eq:sw-Banach}
    %\eqref{eq:rho-SW} 
    is measurable.
    To show that
    $\SW(\mu,\nu; \, \xi)$ is finite for $\mu,\nu \in \Prob_2(\BU)$,
    let $\gamma \in \Gamma(\mu, \nu)$
    realize $\W(\mu, \nu; \, \BU)$.
    Because of
    \begin{align}
        \W^2(\pi_{\theta, \sharp} \, \mu, \pi_{\theta, \sharp} \, \nu; \, \R)
        &\le
        \int_{\BU \times \BU}
        \lvert 
        \langle x_1, \theta \rangle 
        -
        \langle x_2, \theta \rangle 
        \rvert^2
        \d \gamma(x_1, x_2)
        \\
        &\le
        \lVert \theta \rVert^2_{\BU^*}
        \int_{\BX \times \BX}
        \lVert x_1 - x_2 \rVert^2_\BU
        \d \gamma(x_1, x_2)
        = 
        \lVert \theta \rVert^2_{\BU^*}
        \W^2(\mu, \nu; \BU),
        \label{eq:ex-rho-SW}
    \end{align}
    the $\xi$-based SW distance is bounded by
    $\SW(\mu, \nu; \, \xi) 
    \le 
    \W(\mu,\nu; \BU) \, M_2^{1/2}(\xi)$.
\end{proof}
Now, we prove the second part of Theorem~\ref{thm:SW-metric}.
\begin{theorem}
    \label{thm:SW-metric:app}
    Let $\xi \in \Prob_2(\BU^*)$ be such that
    $\supp \xi \cap \Span \theta 
        \not \in
        \bigl\{ \emptyset, \{0\} \bigr\}$ for any $\theta \in \BU^*$,
    then $\SW(\cdot, \cdot; \, \xi)$ defines a metric on $\Prob_2(\BU)$.
    Otherwise,
    $\SW(\cdot,\cdot;\xi)$ defines at least a pseudo-metric.
\end{theorem}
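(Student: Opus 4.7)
\textbf{Proof proposal for Theorem~\ref{thm:SW-metric:app}.}

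The plan is to establish the pseudo-metric axioms first and then upgrade to a metric under the support hypothesis. Non-negativity and symmetry are immediate from the definition in \eqref{eq:sw-Banach} together with the symmetry of the 1d Wasserstein distance, and $\SW(\mu,\mu;\xi)=0$ is clear since $\pi_{v,\sharp}\mu=\pi_{v,\sharp}\mu$ for every $v$. For the triangle inequality, I would fix $\mu,\nu,\rho\in\Prob_2(\BU)$, apply the triangle inequality of $\W(\cdot,\cdot;\R)$ to the integrand pointwise in $v$, and then invoke Minkowski's inequality on the $L_2(\xi)$-norm of the map $v\mapsto\W(\pi_{v,\sharp}\mu,\pi_{v,\sharp}\nu;\R)$; well-definedness of each norm is already guaranteed by the preceding proposition. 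This shows that $\SW(\cdot,\cdot;\xi)$ is at least a pseudo-metric.

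The remaining step is the identity of indiscernibles under the support hypothesis. Suppose $\SW(\mu,\nu;\xi)=0$. Then the nonnegative integrand vanishes $\xi$-almost everywhere, and by Lemma~\ref{lem:lip-W} it is continuous in $v$, so it actually vanishes on the full support $\supp\xi$. Hence $\pi_{v,\sharp}\mu=\pi_{v,\sharp}\nu$ for every $v\in\supp\xi$. Given an arbitrary $\theta\in\BU^*$, the assumption $\supp\xi\cap\Span\theta\notin\{\emptyset,\{0\}\}$ yields some scalar $c\neq 0$ with $c\theta\in\supp\xi$; since $\pi_{c\theta}=c\,\pi_{\theta}$, the equality $\pi_{c\theta,\sharp}\mu=\pi_{c\theta,\sharp}\nu$ transports back (via the invertible scaling $t\mapsto t/c$) to $\pi_{\theta,\sharp}\mu=\pi_{\theta,\sharp}\nu$. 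Therefore all 1d projections of $\mu$ and $\nu$ coincide.

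To conclude $\mu=\nu$, I would use the characteristic functional on the separable Banach space $\BU$: equality of all pushforwards $\pi_{\theta,\sharp}\mu$ and $\pi_{\theta,\sharp}\nu$ implies equality of their 1d Fourier transforms at $1$, hence $\hat{\mu}(\theta)=\hat{\nu}(\theta)$ for every $\theta\in\BU^*$, and uniqueness of the characteristic functional on separable Banach spaces gives $\mu=\nu$. The main obstacle to flag is ensuring that this uniqueness statement is applied in the right generality; $\BU$ is separable by assumption in §\ref{sec:slice-banach}, which is exactly what the standard characteristic functional argument requires, so the step goes through. If the support hypothesis is dropped, steps three and four may fail (there may be directions $\theta$ never reached, even up to scaling), and we only retain the pseudo-metric conclusion established above.
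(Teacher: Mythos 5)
Your proposal is correct and follows essentially the same route as the paper's proof: pseudo-metric axioms via Minkowski's inequality in $L_2(\xi)$, the Lipschitz continuity of $v \mapsto \W(\pi_{v,\sharp}\mu,\pi_{v,\sharp}\nu;\R)$ to upgrade the $\xi$-a.e.\ vanishing to vanishing on all of $\supp\xi$, the rescaling trick $c\theta \in \supp\xi$ to recover every direction, and uniqueness of the characteristic functional on a separable Banach space to conclude $\mu=\nu$. The only cosmetic difference is that you first deduce equality of all one-dimensional projections before passing to Fourier transforms, whereas the paper evaluates the characteristic functions directly through the scaled projection; the arguments are otherwise identical.
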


\begin{proof}
    Positivity, symmetry, and triangle inequality follow from
    the corresponding properties of the Wasserstein distance.
    For the definiteness,
    assume that $\SW(\mu,\nu; \, \xi) = 0$,
    which implies $\W(\pi_{\theta,\sharp} \, \mu, \pi_{\theta, \sharp} \, \nu; \, \R) = 0$
    for almost all $\theta \in \BU^*$ with respect to $\xi$.
    The Lipschitz continuity in Lemma~\ref{lem:lip-W} implies
    $\W(\pi_{\theta,\sharp} \, \mu, \pi_{\theta, \sharp} \, \nu; \, \R) = 0$
    and thus $\pi_{\theta,\sharp} \, \mu = \pi_{\theta, \sharp} \, \nu$
    for all $\theta \in \supp \xi$.
    Now, let $\theta' \in \BU^*$ be arbitrary.
    By assumption,
    we find $t \in \R \setminus \{0\}$
    such that $t \theta' \in \supp \xi$.
    Furthermore,
    we have
    \begin{align}
        \int_\BU \e^{\I \langle x, \theta' \rangle} \d \mu(x)
        &= 
        \int_\BU \e^{\I \langle x, t \theta' \rangle \frac{1}{t}} \d \mu(x)
        = 
        \int_\R \e^{\I s \frac{1}{t}} \d \pi_{t \theta', \sharp} \, \mu(s)
        \\
        &= 
        \int_\R \e^{\I s \frac{1}{t}} \d \pi_{t \theta', \sharp} \, \nu(s)
        =
        \int_\BU \e^{\I \langle x, \theta' \rangle} \d \nu(x).
    \end{align}
    Since every measure on $\BU$ has a unique characteristic function,
    see \cite[§~2.1]{Ledoux1991},
    we conclude $\mu = \nu$.
\end{proof}

% Two measures $\xi_1, \xi_2 \in \Prob(\BU^*)$ are equivalent,
% $\xi_1 \sim \xi_2$,
% if they are mutually absolutely continuous.
% In case that
% their Radon--Nikodým derivatives 
% $\frac{\d\xi_1}{\d \xi_2}$ and $\frac{\d\xi_2}{\d \xi_1}$
% are bounded,
% the resulting SW distances are metrically equivalent.

% \begin{proposition}
% \label{prop:equivalance:app}
%     For $\xi_1, \xi_2 \in \Prob(\BU^*)$
%     with $\xi_1 \sim \xi_2$
%     and $\lVert\frac{\d\xi_1}{\d \xi_2} \rVert_{L^\infty_{\xi_2}(\BU^*)}$,
%     $\lVert\frac{\d\xi_2}{\d \xi_1} \rVert_{L^\infty_{\xi_1}(\BU^*)}$,
%     there exist $c_1, c_2 > 0$ such that
%     \begin{equation}
%         c_1 \SW(\mu, \nu; \, \xi_1)
%         \le 
%         \SW(\mu, \nu; \, \xi_2)
%         \le
%         c_2 \SW(\mu, \nu; \, \xi_1)
%         \quad
%         \forall \mu, \nu \in \Prob_2(\BU).
%     \end{equation}
% \end{proposition}
We continue with Proposition~\ref{prop:equivalance}.
\begin{proposition}
    \label{prop:equivalance:app}
    Let $\xi_1, \xi_2 \in \Prob_2(\BU^*)$ be equivalent.
    If $\d\xi_1 / \d \xi_2$ and $\d\xi_2/ \d\xi_1$ are bounded,
    then we find $c_1, c_2 > 0$ such that
    \begin{equation}
        c_1 \SW(\mu, \nu; \, \xi_1)
        \le 
        \SW(\mu, \nu; \, \xi_2)
        \le
        c_2 \SW(\mu, \nu; \, \xi_1)
        \quad
        \forall \mu, \nu \in \Prob_2(\BU).
    \end{equation}
\end{proposition}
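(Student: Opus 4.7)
The plan is to exploit the definition of the $\xi$-based sliced Wasserstein distance as an integral and then perform a change of measure via the Radon--Nikod{\'y}m derivative. Since the only difference between $\SW(\cdot,\cdot;\xi_1)$ and $\SW(\cdot,\cdot;\xi_2)$ lies in the slicing measure, and since $\xi_1$ and $\xi_2$ are mutually absolutely continuous, the two squared distances differ only by a factor $\d \xi_2 / \d \xi_1$ (resp.\ $\d \xi_1 / \d \xi_2$) inside the integrand. The boundedness of these densities then immediately yields the desired two-sided estimate.

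First I would write out, for arbitrary $\mu, \nu \in \Prob_2(\BU)$,
\begin{equation}
    \SW^2(\mu, \nu;\, \xi_2)
    =
    \int_{\BU^*} \W^2(\pi_{v,\sharp}\mu, \pi_{v,\sharp}\nu;\, \R)\, \d \xi_2(v)
    =
    \int_{\BU^*} \W^2(\pi_{v,\sharp}\mu, \pi_{v,\sharp}\nu;\, \R)\, \frac{\d \xi_2}{\d \xi_1}(v)\, \d \xi_1(v),
\end{equation}
where the second equality is just the definition of the Radon--Nikod{\'y}m derivative and relies on $\xi_2 \ll \xi_1$. Setting $C_2 \coloneqq \esssup_{v} \d \xi_2/\d \xi_1(v) < \infty$, the integrand is pointwise bounded by $C_2 \cdot \W^2(\pi_{v,\sharp}\mu, \pi_{v,\sharp}\nu;\, \R)$, and integration yields $\SW^2(\mu,\nu;\xi_2) \le C_2\, \SW^2(\mu,\nu;\xi_1)$, i.e., the upper bound with $c_2 \coloneqq \sqrt{C_2}$.

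For the lower bound, I would simply swap the roles of $\xi_1$ and $\xi_2$: using $\xi_1 \ll \xi_2$ with bounded density $\d \xi_1/\d \xi_2$, bounded by $C_1$, the same computation gives $\SW^2(\mu,\nu;\xi_1) \le C_1\, \SW^2(\mu,\nu;\xi_2)$, so $\SW(\mu,\nu;\xi_2) \ge c_1\, \SW(\mu,\nu;\xi_1)$ with $c_1 \coloneqq 1/\sqrt{C_1}$. Both constants are independent of $\mu$ and $\nu$.

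There is no real obstacle here. The only subtlety to verify is that the integrand $v \mapsto \W^2(\pi_{v,\sharp}\mu, \pi_{v,\sharp}\nu;\, \R)$ is $\xi_i$-measurable and $\xi_i$-integrable for $i = 1, 2$, but this is guaranteed by Lemma~\ref{lem:lip-W} together with the bound $\W^2(\pi_{v,\sharp}\mu, \pi_{v,\sharp}\nu;\, \R) \le \lVert v \rVert_{\BU^*}^2\, \W^2(\mu,\nu;\BU)$ established in \eqref{eq:ex-rho-SW}, since $\xi_1, \xi_2 \in \Prob_2(\BU^*)$. Hence the proposition follows directly.
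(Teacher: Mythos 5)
Your proposal is correct and matches the paper's proof: both perform a change of measure via the Radon--Nikodým derivative and bound the integrand by the essential supremum of the density, obtaining the reverse inequality by symmetry. Your additional remark on measurability and integrability of the integrand is a harmless (and valid) extra verification.
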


\begin{proof}
    Exploiting the bounded Radon--Nikodým derivatives,
    we obtain
    \begin{equation}
        \SW^2(\mu, \nu; \, \xi_2)
        =
        \int_{\BU^*} 
        \W^2(\pi_{\theta, \sharp} \, \mu, \pi_{\theta, \sharp} \, \nu ; \, \R)
        \tfrac{\d \xi_2}{\d \xi_1}(\theta) 
        \d \xi_1(\theta)
        \le
        \bigl\lVert \tfrac{\d \xi_2}{\d \xi_1}(\theta) \bigr\rVert_{L^\infty_{\xi_1}(\BU^*)}
        \,
        \SW^2(\mu, \nu; \, \xi_1).
        \tag*{\qedhere}
    \end{equation}
\end{proof}
We employ this to prove Proposition~\ref{corr:euclidean_equivalance}.
\begin{proposition}
\label{corr:euclidean_equivalance:app}
    For $\xi \in \Prob(\R^d)$ absolutely continuous 
    and 
    %$\lVert\frac{\d\xi}{\d \eta} \rVert_{L^\infty_{\eta}(\R^d)}$,
    %$\lVert\frac{\d\eta}{\d \xi} \rVert_{L^\infty_{\xi}(\R^d)}$ 
    % and 
    $\d\xi / \d \eta$, $\d\eta/ \d\xi$ bounded,
    for
    $\eta \sim \mathcal{N}(0, \mathbf{I}_d)$,
    there exist $c_1, c_2 > 0$ such that
    \begin{equation}
        c_1 \SW(\mu, \nu \, ; \, \xi)
        \le 
        \revise{\SW(\mu,\nu; \eta)}
        =
        \SW(\mu, \nu)
        \le
        c_2 \SW(\mu, \nu ; \, \xi)
        \quad
        \forall \mu, \nu \in \Prob_2(\R^d).
    \end{equation}
\end{proposition}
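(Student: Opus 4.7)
The plan is to reduce the claim to Proposition~\ref{prop:equivalance:app} by first establishing an exact scaling identity between $\SW(\mu,\nu;\eta)$ and the classical spherical SW distance from \eqref{eq:sliced_wasserstein}. Concretely, I want to show that $\SW(\mu,\nu;\eta) = \sqrt{d}\, \SW(\mu,\nu)$ for all $\mu,\nu \in \Prob_2(\R^d)$, after which the hypotheses on $\d\xi/\d\eta$ and $\d\eta/\d\xi$ let me transport the equivalence to $\xi$.

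The key algebraic input is the homogeneity $\pi_{r\omega} = r\, \pi_\omega$ for $r \ge 0$ and $\omega \in \Sph^{d-1}$, which pushes through to a scaling law for the 1D Wasserstein distance:
\begin{equation}
    \W(\pi_{r\omega,\sharp}\mu, \pi_{r\omega,\sharp}\nu; \R) = r\, \W(\pi_{\omega,\sharp}\mu, \pi_{\omega,\sharp}\nu; \R).
\end{equation}
Combining this with the rotation invariance of $\eta \sim \mathcal{N}(0,\mathbf{I}_d)$, the polar decomposition $\eta = R\,\omega$ has independent factors with $\omega$ uniform on $\Sph^{d-1}$ and $R \sim \chi_d$, so $\mathbb{E}[R^2] = d$. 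The measurability needed to apply Fubini is provided by Lemma~\ref{lem:lip-W}. Hence
\begin{equation}
    \SW^2(\mu,\nu;\eta)
    = \int_{\R^d} \W^2(\pi_{\theta,\sharp}\mu, \pi_{\theta,\sharp}\nu;\R)\, \d\eta(\theta)
    = \mathbb{E}[R^2]\, \SW^2(\mu,\nu)
    = d\, \SW^2(\mu,\nu).
\end{equation}

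Finally, applying Proposition~\ref{prop:equivalance:app} with $\xi_1 = \xi$ and $\xi_2 = \eta$ (whose equivalence with bounded Radon--Nikodým derivatives holds by assumption) yields constants $\tilde c_1, \tilde c_2 > 0$ with $\tilde c_1 \SW(\mu,\nu;\xi) \le \SW(\mu,\nu;\eta) \le \tilde c_2 \SW(\mu,\nu;\xi)$. Substituting the identity above and setting $c_1 = \tilde c_1/\sqrt{d}$, $c_2 = \tilde c_2/\sqrt{d}$ gives the claim.

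I do not expect a serious obstacle: the scaling law is elementary, the Gaussian polar decomposition is classical, and the transfer through Proposition~\ref{prop:equivalance:app} is immediate once the identity between $\SW(\cdot,\cdot;\eta)$ and $\SW$ is in hand. The only minor point is verifying that the integrand is $\eta$-integrable, which follows from the bound $\SW(\mu,\nu;\eta) \le \W(\mu,\nu;\R^d)\, M_2^{1/2}(\eta) < \infty$ derived in the well-definedness proof of the $\xi$-based SW distance.
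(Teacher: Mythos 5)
Your proof is correct and takes essentially the same route as the paper: the paper likewise reduces the claim to Proposition~\ref{prop:equivalance:app} combined with the identity $\SW(\mu,\nu)=C_d\,\SW(\mu,\nu;\eta)$, except that it cites this identity from \citep{nadjahi2021fast} whereas you derive it yourself via the homogeneity of $\pi_{r\omega}$ and the polar decomposition of the standard Gaussian, obtaining $C_d=1/\sqrt{d}$. The derivation is valid, so the only difference is that your argument is self-contained where the paper's is by reference.
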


\begin{proof}
    This follows directly from Proposition \ref{prop:equivalance:app} and the identity 
    $\SW(\mu, \nu) = \SW(\mu, \nu; \eta)$,
    see \citep[Prop. 1]{nadjahi2021fast}.
\end{proof}
We extend our findings from the main paper with a statement about the computational approximation via Monte Carlo. In particular, the $\xi$-based SW distance 
can be numerically approximated by
\begin{equation}
    \widehat{\SW}^2 (\mu, \nu; \, \xi)
    \approx 
    \frac{1}{S}
    \sum_{s = 1}^S 
    \W^2(\pi_{\theta_s,\sharp} \, \mu, \pi_{\theta_s, \sharp} \, \nu; \, \R),
    \quad
    \theta_s \sim \xi \; \text{iid}.
\end{equation}
It is well-known that such Monte Carlo estimates have a convergence rate of $\mathcal{O}(1/\sqrt{S})$ for $S$ random projections \citep{nadjahi2020statistical}.
Given suitable conditions, a similiar results holds for this estimate.
\begin{proposition}
    \label{prob:monte-carlo}
    For $\xi \in \Prob_{4}(\BU^*)$,
    it holds
    \begin{equation}
        \EE_{\theta_1,\dots,\theta_S} 
        \lvert
        \widehat{\SW}^2(\mu, \nu; \, \xi)
        - 
        \SW^2(\mu,\nu; \, \xi)
        \rvert
        \le
        \frac{1}{\sqrt S} 
        \, \std_{\theta} 
        \W^2(\pi_{\theta,\sharp} \, \mu,
            \pi_{\theta, \sharp} \, \nu; \, \R).
    \end{equation}
\end{proposition}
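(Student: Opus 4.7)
The statement is the standard Monte Carlo error bound for an empirical mean, so the plan is to cast $\widehat{\SW}^2$ as a sample mean of i.i.d.\ random variables and then apply Jensen's inequality together with a variance computation. Concretely, I would let $X_s \coloneqq \W^2(\pi_{\theta_s,\sharp}\,\mu, \pi_{\theta_s,\sharp}\,\nu;\,\R)$ for $\theta_1,\dots,\theta_S \sim \xi$ i.i.d., observe that $\EE\, X_s = \SW^2(\mu,\nu;\xi)$ by definition \eqref{eq:sw-Banach}, and note $\widehat{\SW}^2(\mu,\nu;\xi) = \frac{1}{S}\sum_{s=1}^S X_s$.

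The first step is to verify that the $X_s$ have finite variance, which is the place where the assumption $\xi \in \Prob_4(\BU^*)$ enters. Using \eqref{eq:ex-rho-SW}, namely $\W^2(\pi_{\theta,\sharp}\,\mu,\pi_{\theta,\sharp}\,\nu;\,\R) \le \lVert \theta\rVert_{\BU^*}^2\,\W^2(\mu,\nu;\BU)$, squaring and integrating gives
\begin{equation}
    \EE_\theta X_1^2 \le \W^4(\mu,\nu;\BU)\int_{\BU^*} \lVert \theta\rVert_{\BU^*}^4 \,\d\xi(\theta) = \W^4(\mu,\nu;\BU)\,M_4(\xi) < \infty,
\end{equation}
so in particular $\std_\theta X_1 < \infty$.

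Next I would apply Jensen's inequality to push the absolute value inside a square root:
\begin{equation}
    \EE\bigl\lvert \widehat{\SW}^2 - \SW^2 \bigr\rvert = \EE\Bigl\lvert \tfrac{1}{S}\sum_{s=1}^S (X_s - \EE X_s)\Bigr\rvert \le \Bigl(\Var\bigl[\tfrac{1}{S}\sum_{s=1}^S X_s\bigr]\Bigr)^{1/2}.
\end{equation}
By independence, the variance of the sample mean is $\Var(X_1)/S$, so the right-hand side equals $\std_\theta(X_1)/\sqrt{S}$, which is exactly the claimed bound.

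There is no real obstacle here beyond justifying the integrability needed so that the variance is finite; everything else is classical. The only subtlety worth flagging is that, unlike in \citep{nadjahi2020statistical} where the slicing is over a bounded sphere and $\W^2 \le \lVert\theta\rVert^2 \W^2(\mu,\nu;\BU)$ is automatic, our $\theta$ lives in the unbounded Banach dual and hence the fourth-moment hypothesis on $\xi$ is essential to close the argument.
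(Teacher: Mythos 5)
Your proof is correct and follows essentially the same route as the paper: bound $\EE\lvert\cdot\rvert$ by the root of the second moment (your Jensen step is the paper's H\"older step), use independence to reduce the variance of the sample mean to $\Var(X_1)/S$, and invoke the bound $\W^2(\pi_{\theta,\sharp}\mu,\pi_{\theta,\sharp}\nu;\R)\le\lVert\theta\rVert_{\BU^*}^2\W^2(\mu,\nu;\BU)$ together with $\xi\in\Prob_4(\BU^*)$ to guarantee the standard deviation is finite. Your closing remark correctly identifies why the fourth-moment hypothesis is needed in the unbounded dual, exactly as the paper notes at the end of its proof.
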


\begin{proof}
    Using Hölder's inequality,
    and exploiting that
    the directions $\theta_s$ are independent 
    and identically distributed,
    we have
    \begin{align}
        &
        \EE_{\theta_1,\dots,\theta_S} 
        \lvert
        \widehat{\SW}{}^2(\mu, \nu; \, \xi)
        - 
        \SW^2(\mu,\nu; \, \xi)
        \rvert
        \\
        &\le
        \Bigl( \int_{\BU^*} \cdots \int_{\BU^*}
        \Bigl\vert
        \frac{1}{S} 
        \sum_{s = 1}^S
        \W^2(\pi_{\theta_s, \sharp} \, \mu,
            \pi_{\theta_s, \sharp} \, \nu; \, \R)
        -
        \SW^2(\mu, \nu; \, \xi) 
        \Bigr\rvert^2 
        \d \xi(\theta_1) \cdots \d \xi(\theta_S)
        \Bigr)^{\frac{1}{2}}
        \\
        &=
        \frac{1}{\sqrt S}
        \, \Bigl(
        \sum_{s=1}^S
        \int_{\BU^*}
        \lvert 
        \W^2( \pi_{\theta_s, \sharp} \, \mu,
            \pi_{\theta_s, \sharp} \, \mu; \, \R)
        -
        \SW^2(\mu, \nu; \, \xi)
        \rvert^2
        \d \xi(\theta_s)
        \Bigr)^{\frac{1}{2}}
        \\
        &=
        \frac{1}{\sqrt S} 
        \, \std_{\theta} 
        \W^2(\pi_{\theta,\sharp} \, \mu,
            \pi_{\theta, \sharp} \, \nu; \, \R),
    \end{align}
    where the standard deviation exists
    due to
    $\W^{4}(\pi_{\theta,\sharp} \, \mu,
        \pi_{\theta, \sharp} \, \nu ; \, \R)
    \le \lVert \theta \rVert^{4}_{\BU^*} \W^{4}(\mu, \nu; \BU)$,
    cf. \eqref{eq:ex-rho-SW}.
\end{proof}

\newpage
\section{Double-Sliced Wasserstein Distance}
\label{app:dsw_proof}
\subsection{Metric Properties}
To prove the positive definiteness of our double-sliced metric for empirical meta-measures, we utilize an extension of the `Cramer--Wold' theorem by \cite{cuesta2007sharp} 
about the set of projections required to separate measures on $\R^d$.
% We refer \citep{tanguy2024reconstructing} for an exte in-depth discussion of the projections.
The statement is based on the so-called \emph{Carleman condition}. A measure $\mu \in \Prob_2(\R^d)$
fulfills this condition if all moments
\begin{equation}
    M_p(\mu) \coloneqq \int \|x\|^p \d \mu, \quad p \geq 1,
\end{equation}
are finite and it holds that
\begin{equation}
    \sum_{p=1}^\infty M_p^{-1/p} = \infty.
\end{equation}
This condition is fulfilled for compactly supported measures \citep[Ch. 14]{schmudgen2017moment} and, in particular, empirical measures. We also refer to \citep{heppes1956determination, tanguy2024reconstructing} for similar results targeted at empirical measures.
% \begin{lemma}
% \label{lemma:zero_set_slicing_empirical}
% (\textbf{\citep[Thm. 1]{heppes1956determination}}) 
%     Given measures $\mu, \nu \in \Prob_2(\mathcal{X})$, $\mathcal{X} \subset \R^d$ compact,  
% If there exists a set $U \subset \Sph^{d-1}$ of positive surface measure such that
% \begin{equation}
%     \W_p(\pi_{\theta\sharp} \mu, \pi_{\theta\sharp} \nu) = 0 \quad \text{for all } \theta \in U,
% \end{equation}
% then $\mu = \nu$.
% \end{lemma}
\begin{lemma}
 \label{lemma:zero_set_slicing_empirical}
\textbf{\citep[Corr. 3.2]{cuesta2007sharp}} 
    Given measures $\mu, \nu \in \Prob_2(\R^d)$ that fulfill the Carleman condition and a set $S \subset \Sph^{d-1}$ of positive surface measure with
\begin{equation}
\label{eq:positive_set}
    \W(\pi_{\theta\sharp} \mu, \pi_{\theta\sharp} \nu; \R) = 0 \quad \text{for all } \theta \in S,
\end{equation}
it holds that $\mu = \nu$.
\end{lemma}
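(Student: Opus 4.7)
The plan is to reduce the statement to a classical moment problem via the Carleman condition, combined with a polynomial-rigidity argument that upgrades equality on a positive-measure subset of the sphere to equality on all directions.

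First, I would use the hypothesis that $\W(\pi_{\theta\sharp}\mu,\pi_{\theta\sharp}\nu;\R)=0$ is a metric identity on $\Prob_2(\R)$ to conclude that $\pi_{\theta\sharp}\mu=\pi_{\theta\sharp}\nu$ for every $\theta\in S$. Since $\mu$ and $\nu$ satisfy Carleman, they have finite moments of all orders, and for each $p\in\N$ and $\theta\in S$ we may equate
\begin{equation}
    \int_{\R} t^p \,\d\pi_{\theta\sharp}\mu(t)
    = \int_{\R^d} \langle\theta,x\rangle^p \,\d\mu(x)
    = \int_{\R^d}\langle\theta,x\rangle^p \,\d\nu(x).
\end{equation}
The function $\theta\mapsto \int\langle\theta,x\rangle^p\d\mu(x)-\int\langle\theta,x\rangle^p\d\nu(x)$ is a homogeneous polynomial of degree $p$ in the coordinates of $\theta$.

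The key geometric step is to upgrade this vanishing on $S$ to vanishing on all of $\Sph^{d-1}$. Because any nonzero polynomial restricted to $\Sph^{d-1}$ has a real-analytic zero set of surface measure zero (its zero set is a proper algebraic subvariety intersected with the sphere), a polynomial vanishing on the positive-measure set $S$ must be identically zero. Hence
\begin{equation}
    \int_{\R^d}\langle\theta,x\rangle^p\,\d\mu(x)
    =\int_{\R^d}\langle\theta,x\rangle^p\,\d\nu(x)
    \quad\forall\,\theta\in\R^d,\;\forall\, p\in\N.
\end{equation}
Expanding $\langle\theta,x\rangle^p$ via the multinomial theorem and matching coefficients in $\theta$ shows that \emph{every} mixed moment $\int x^\alpha \d\mu$ equals $\int x^\alpha \d\nu$ for all multi-indices $\alpha\in\N^d$.

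The final step is a multivariate moment-uniqueness argument. From the (univariate) Carleman condition $\sum_p M_p(\mu)^{-1/p}=\infty$ one derives that each marginal moment sequence $p\mapsto\int x_i^p \d\mu$ is dominated by $M_p(\mu)$ and therefore satisfies the univariate Carleman condition; by Petersen's theorem (or equivalently by applying univariate Carleman to each projected measure via the already-established equality of all one-dimensional projections), the multivariate moment problem is determinate, giving $\mu=\nu$. The main obstacle I anticipate is this last determinacy step: the Carleman condition on radial moments does not automatically yield multivariate determinacy without invoking a result like Petersen's, so the cleaner route is to use the fact that all projections $\pi_{\theta\sharp}\mu$ and $\pi_{\theta\sharp}\nu$ now agree on every direction and to finish by the classical Cramér--Wold theorem, which requires no Carleman assumption once projections agree everywhere.
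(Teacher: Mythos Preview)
The paper does not prove this lemma; it is quoted verbatim from \cite{cuesta2007sharp} (Corollary~3.2) and used as a black box in the proof of Proposition~\ref{prop:metric_double_sliced_appendix}. Your proposal is a correct reconstruction of the standard argument behind that cited result: passing from $\W=0$ to equality of projected measures, identifying the $p$th projected moments as degree-$p$ polynomials in $\theta$, invoking real-analyticity to upgrade vanishing on a positive-measure subset of $\Sph^{d-1}$ to vanishing everywhere, and then closing with univariate Carleman determinacy on each projection followed by Cram\'er--Wold. The only place to tighten is the last paragraph: the cleanest route is exactly the one you land on at the end---show that each projected measure $\pi_{\theta\sharp}\mu$ inherits the univariate Carleman condition from the bound $\int|\langle\theta,x\rangle|^p\,\d\mu\le M_p(\mu)$, conclude $\pi_{\theta\sharp}\mu=\pi_{\theta\sharp}\nu$ for all $\theta$, and apply Cram\'er--Wold; the detour through Petersen's multivariate determinacy is unnecessary.
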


% \begin{lemma}
% \textbf{\cite[Corr. 3.2]{cuesta2007sharp}}
% \label{lemma:zero_set_slicing_corrected_moments}
% Let $\mu, \nu$ be Borel probability measures on $\R^d$, $d \ge 2$, with finite absolute moments
% \begin{equation}
%     m_n := \int_{\R^d} \|x\|^n \, \d \mu(x) < \infty \quad \text{for all } n \ge 1,
% \end{equation}
% and such that
% \begin{equation}
%     \sum_{n\ge 1} m_n^{-1/n} = \infty.
% \end{equation}
% If there exists a set $U \subset \Sph^{d-1}$ of positive surface measure such that
% \begin{equation}
%     \W_p(\pi_{\theta\sharp} \mu, \pi_{\theta\sharp} \nu) = 0 \quad \text{for all } \theta \in U,
% \end{equation}
% then $\mu = \nu$.
% \end{lemma}

This allows us to prove the metric properties presented in Theorem~\ref{thm:SW-metric}.
\begin{proposition}
\label{prop:metric_double_sliced_appendix}
Given a positive $\xi~\in~\Prob_2(L_2(Y))$,  
$\DSW$
defines a metric on $\eProb(\eProb(\R^d))$.
%that contains all empirical meta-measures over empirical measures.
% For  discrete metameasure $\bmu, \bnu$ it holds that
% \begin{equation}
%     \DSW_p(\bmu, \bnu; \xi) = 0 \iff \bmu = \bnu
% \end{equation}
\end{proposition}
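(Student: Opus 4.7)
The symmetry, non-negativity, and triangle inequality of $\DSW$ follow routinely from the corresponding properties of $\SQW$---which, under positivity of $\xi$, is a metric on $\Prob_2(\Prob_2(\R))$ by the corollary following \eqref{eq:sqw}---combined with the $L^2(\Sph^{d-1})$ structure of the outer integral. Hence the only nontrivial property to verify is positive definiteness: $\DSW(\bmu, \bnu; \xi) = 0 \Rightarrow \bmu = \bnu$.

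The plan is to first reduce to atom-matching. Vanishing of $\DSW$ together with non-negativity of the integrand in \eqref{eq:ds_wow_definition} forces $\SQW(\bpi_{\theta, \sharp}\bmu, \bpi_{\theta, \sharp}\bnu; \xi) = 0$ for a.e.\ $\theta \in \Sph^{d-1}$, and the $\SQW$ corollary then upgrades this to $\bpi_{\theta, \sharp}\bmu = \bpi_{\theta, \sharp}\bnu$ on a set $\Omega \subset \Sph^{d-1}$ of full surface measure. I would then rewrite $\bmu = \sum_{k=1}^K p_k \delta_{\tilde\mu_k}$ and $\bnu = \sum_{l=1}^L q_l \delta_{\tilde\nu_l}$ in compressed form with pairwise distinct atoms $\tilde\mu_k, \tilde\nu_l \in \eProb(\R^d)$. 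The key genericity fact---essentially the contrapositive of Lemma \ref{lemma:zero_set_slicing_empirical}---is that for any two distinct empirical measures $\rho \neq \sigma$ in $\eProb(\R^d)$ the set $\{\theta \in \Sph^{d-1} : \pi_{\theta, \sharp}\rho = \pi_{\theta, \sharp}\sigma\}$ has zero surface measure. Intersecting these finitely many null sets over all distinct pairs among $\{\tilde\mu_k\} \cup \{\tilde\nu_l\}$ yields a set $\Omega_0 \subset \Sph^{d-1}$ of full surface measure on which the $\pi_{\theta, \sharp}\tilde\mu_k$ are pairwise distinct, the $\pi_{\theta, \sharp}\tilde\nu_l$ are pairwise distinct, and the two families coincide entry-wise only when the corresponding $d$-dimensional atoms agree.

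Next, on $\Omega \cap \Omega_0$ the equality of atomic measures $\bpi_{\theta, \sharp}\bmu = \bpi_{\theta, \sharp}\bnu$ with distinct atoms forces $K = L$ together with a permutation $\sigma_\theta \in S_K$ satisfying $\pi_{\theta, \sharp}\tilde\mu_k = \pi_{\theta, \sharp}\tilde\nu_{\sigma_\theta(k)}$ and $p_k = q_{\sigma_\theta(k)}$ for every $k$. Since $\sigma_\theta$ takes only the $K!$ values in $S_K$, a pigeonhole argument produces a fixed $\sigma_* \in S_K$ such that $A \coloneqq \{\theta \in \Omega \cap \Omega_0 : \sigma_\theta = \sigma_*\}$ still has positive surface measure. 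On $A$ and for each $k$, the 1d projections of $\tilde\mu_k$ and $\tilde\nu_{\sigma_*(k)}$ coincide on a set of positive surface measure; since empirical (hence compactly supported) measures satisfy the Carleman condition, Lemma \ref{lemma:zero_set_slicing_empirical} yields $\tilde\mu_k = \tilde\nu_{\sigma_*(k)}$. Combined with $p_k = q_{\sigma_*(k)}$ this proves $\bmu = \bnu$.

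The hard part is the $\theta$-dependence of the matching permutation $\sigma_\theta$: pointwise equality of atomic slices does not by itself produce a single bijection between the underlying $d$-dimensional atoms, since the atom-to-atom matching can in principle shift from one $\theta$ to another. The resolution is the pigeonhole step over the finite symmetric group $S_K$, which isolates one $\sigma_*$ on a positive-measure subset and thereby supplies exactly the hypothesis needed to apply Lemma \ref{lemma:zero_set_slicing_empirical} and promote positive-measure equality of 1d projections into equality of the full $d$-dimensional empirical measures.
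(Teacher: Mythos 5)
Your proof is correct, and for the positive-definiteness step it takes a genuinely different route from the paper, even though both arguments reduce to $\bpi_{\theta,\sharp}\bmu=\bpi_{\theta,\sharp}\bnu$ for a.e.\ $\theta$ and both use the Cram\'er--Wold extension (Lemma~\ref{lemma:zero_set_slicing_empirical}) as the engine for lifting 1d information back to $\R^d$. You compress the meta-measures into pairwise distinct atoms, extract a weight-preserving permutation $\sigma_\theta$ for each good $\theta$, and pigeonhole over the finite group $S_K$ to freeze one permutation on a positive-measure set, after which the lemma applies atom by atom. The paper never decomposes into distinct atoms: it takes an optimal plan $\bgamma_\theta^*$ for the projected problem, observes that $\langle\bgamma_\theta^*,C_\theta\rangle=0$ forces $(\bgamma_\theta^*)_{i,j}=0$ for a.e.\ $\theta$ whenever $\mu_i\neq\nu_j$ (again by the lemma), and concludes that this same plan has zero cost against the unprojected cost matrix $C=(\W^2(\mu_i,\nu_j))_{i,j}$, whence $\bW(\bmu,\bnu;\R^d)=0$. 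The plan-based route avoids the permutation and pigeonhole bookkeeping entirely and applies verbatim when the two meta-measures have different numbers of possibly repeated atoms; your route makes the combinatorial structure explicit, at the cost of some redundancy --- once your full-measure set $\Omega_0$ is in place (on which cross-family equality of projections already implies equality of the $d$-dimensional atoms), a single $\theta\in\Omega\cap\Omega_0$ forces $\tilde\mu_k=\tilde\nu_{\sigma_\theta(k)}$ directly, so the pigeonhole step is not actually needed.
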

\begin{proof}
`Pseudo-metric': The symmetry and triangle inequality are trivial and follow directly from the ambient spaces of the embedded measures and the properties of the Wasserstein distance.

`Positive Definiteness': We aim to prove that
\begin{equation}
\DSW(\bmu, \bnu; \, \xi) = 0   \iff  \bmu = \bnu.
\end{equation}
for empirical meta-measures.
%The `$\Leftarrow$' direction follows directly. 
%Therefore, we focus on the `$\Rightarrow$' direction.
 Therefore, assume that $\DSW(\bmu, \bnu; \, \xi) = 0$ for 
 $\bmu, \bnu \in \eProb(\eProb(\R^d))$.
 %$$\bmu, \bnu$ of the form
% \begin{equation}
%     \bmu = \sum_{i = 1}^N \frac{1}{N} \delta_{\mu_{i}} \in \Prob_2(\Prob_2(\R^d)), 
%     \quad 
%     %\mu_{i} = \sum_{j = 1}^{n_{i}} w^{j}_i \delta_{x^{j}_i} \in \Prob_2(\R^d).
%     \bnu = \sum_{j = 1}^{M} \frac{1}{M} \delta_{\nu_{j}} \in \Prob_2(\Prob_2(\R^d)),
% \end{equation}
% where $\mu_i$, $\nu_j$ are again empircal measures in $\Prob_2(\R^d)$. 
%Without loss of generality, we may assume that there exists some 
%$(i, j)$, $1 \leq i \leq N$ and $1 \leq j \leq M$, such that
%$\mu_i \neq \nu_j$ since we would be done otherwise.

Due to $\DSW(\bmu, \bnu; \, \xi)~=~0$, 
we know that
$\SQW(\bpi_{\theta, \sharp}\bmu,\bpi_{\theta, \sharp}\bnu; \, \xi \big)~=~0$
for all $\theta~\in~\Sph^{d-1}$ except for a zero measure set $Z$. 
Since $\SQW$ is a metric on $\Prob_2(\R)$, 
we thus know that $\bpi_{\theta, \sharp}\bmu = \bpi_{\theta, \sharp}\bnu$
for every $\theta \in \Sph^{d-1}\setminus Z$.
Now, this means that there exists a 
$\bgamma^*_{\theta} \in \Gamma(\bpi_{\theta, \sharp}\bmu, \bpi_{\theta, \sharp}\bnu) \subset \R_{\geq 0}^{n \times m}$, such that
\begin{equation}
\label{eq:discrete_ot_vanish_1d}
    \langle \bgamma_\theta^*, C_\theta \rangle = 0,
\end{equation}
where $C_\theta = (\W^2(\pi_{\theta, \sharp}\mu_i, \pi_{\theta, \sharp}\nu_j))_{i, j} \subset \R_{\geq 0}^{n \times m}$ 
%for every $\theta \in \Sph^{d-1}\setminus Z$ 
and $\langle \cdot, \cdot \rangle$ denotes the Frobenius inner product.
Here, our costs and transport plans take matrix form due to the empirical measure structure.
%Because these transport plans do not have to be unique, we fix an arbitrary set of optimal transport plans $\{\gamma^*_\theta\}_\theta$.

We now want to find a suitable transport plan from this set of transport plans to construct an upper bound for the meta-measure metric $\bW(\bmu, \bnu; \R^d)$.
% Therefore, we fix some arbitrary $\bar{\theta} \in \Sph^{d-1}$.
% For each index pair $(i, j)$, $1 \leq i \leq N$ and $1 \leq j \leq M$, we now consider two cases:
Consider the set of index pairs
%\begin{equation}
    $\IdxSet~=~\{(i, j) \, | \, \mu_i \neq \nu_j\}$.
%\end{equation}
If this set is empty, then we are done. 
Otherwise, we know from Lemma~\ref{lemma:zero_set_slicing_empirical} that there exists no set $S \subset \Sph^{d-1}$ of positive measure for $(\mu_i, \nu_j)$, $(i, j) \in \IdxSet$, such that \eqref{eq:positive_set} is fulfilled for all $\theta \in S$. Conversely, for $(i, j) \in \operatorname{Idx}$, it holds that
$(C_\theta)_{i, j} > 0$ for every $\theta$ outside a zero measure set $Z_{i ,j}$. 
Subsequently, it has to hold that $(\gamma_\theta^*)_{i, j} = 0$ outside the zero measure set $Z \cup Z_{i ,j}$ due to \eqref{eq:discrete_ot_vanish_1d}.

Now, we have $(\gamma_\theta^*)_{i, j} = 0$ or $C_{i, j} = \W^2(\mu_i, \nu_j) = 0$ for some $\theta \in \Sph^{d-1}$ outside the zero measure set $(Z \cup (\cup_{(i, j) \in \IdxSet} Z_{i, j}))$.
Thus, it holds for almost every $\theta$ that
\begin{equation}
\label{eq:discrete_ot_vanish_multidim_d}
    \langle \bgamma_\theta^*, C \rangle = 0.
\end{equation}
Since this expression is an upper bound of $\bW^2(\bmu, \bnu; \R^d)$, 
it follows that $\bW(\bmu, \bnu; \R^d)= 0$. 
This concludes the proof since the Wasserstein distance is a positive definite metric. 
% Now, we know that the set of discrete metameasures is dense in 
% $\Prob_2(\Prob_2(\R^d))$ \citep[Ch. 11]{dudley2018real}.
% Since, $\DSW_p$ is a continuous pseudo-metric on $\Prob_2(\Prob_2(\R^d))$
% and an actual metric on a dense subset of $\Prob_2(\Prob_2(\R^d))$, the statement follows.
% \begin{enumerate}
%     \item There exists no open neighborhood around %$\theta$, such that $ \gamma_\theta^*$.
%     \item hi
% \end{enumerate}
\end{proof}
% Note that similar results to \citep{heppes1956determination} exist for continuous measures that would allow us to extend Proposition~\ref{prop:metric_double_sliced_appendix} for measures fulfilling certain regularity conditions, e.g, \emph{Carleman's condition} \citep{cuesta2007sharp}.
\begin{remark}
    Although we stated our statement for the empirical measures employed in our experiments, our proof merely requires that all measures satisfy the Carleman condition.
    As an example, this would be fulfilled for mixtures of compactly supported measures, i.e., for $\eProb(\Prob_2(\mathcal{X}))$, $\mathcal{X} \subset \R^d$ compact. 
    Note that $\eProb(\Prob_2(\mathcal{X}))$ is dense in 
    $\eProb(\Prob_2(\mathcal{X}))$ \citep[Ch. 6]{villani2003topics}.
    Moreover, Lemma~\ref{lem:lip-W} 
    allows us to show the Lipschitz continuity of DSW on $\Prob_2(\Prob_2(\mathcal{X}))$.
    Combining all of this with statements on continuous extensions of metrics on topological spaces \cite{engelking1989general}, we can expect DSW to be a metric on $\Prob_2(\Prob_2(\mathcal{X}))$.
\end{remark}
\subsection{Relationships between Metric}
In this section, we aim to prove our convergence result. We point out that \citep[Thm. 3.4]{han2023sliced} contains a proof of weak convergence for measures on general Hilbert spaces. 
However, the underlying argument appears to rely on an application of a convergence result in infinite-dimensional settings whose validity in this context is, to the best of our understanding, not fully clear.

To prove our convergence statement, 
we separate the proof into a couple of smaller statements.
As a first step, we show that our sliced metrics produce lower bounds for $\bW$.
We continue with a lemma that relates 
the subset of $\Prob_2(L_2([0, 1]))$ supported on piecewise constant step functions to 
Euclidean measures. Based on this, we prove a statement about the equivalence between $\DSW$ and $\bSW$.
Lastly, we prove a statement about the equivalence between $\bSW$ and $\bW$, which utilizes the compactness of the support. 

First, we state a proposition that allows us to bound $\bW$ from below via $\DSW$ and $\bSW$.
\begin{proposition}
\label{lemma:continuity_dsw}
    It holds $C_{\xi} \, \DSW(\bmu, \bnu; \, \xi) \leq  \bSW(\bmu, \bnu; \, \xi) \leq \bW(\bmu, \bnu;\, \R^d)$ for $\bmu, \bnu \in \Prob_2(\Prob_2(\R^d))$ and $\xi \in \Prob_2(L_2([0, 1]))$.
\end{proposition}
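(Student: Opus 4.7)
The plan is to treat the two inequalities separately; both follow by assembling Lipschitz-type pushforward estimates already established earlier in the paper.

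For the upper bound $\bSW(\bmu, \bnu) \le \bW(\bmu, \bnu; \R^d)$, I would exploit that $\pi_\theta$ is $1$-Lipschitz for every $\theta \in \Sph^{d-1}$. By the standard non-expansiveness of the pushforward on 2-Wasserstein spaces under 1-Lipschitz maps, applied twice---first on $\R^d$ and then on the Wasserstein space $\Prob_2(\R^d)$---the operator $\bpi_{\theta, \sharp}\colon \Prob_2(\Prob_2(\R^d)) \to \Prob_2(\Prob_2(\R))$ is $1$-Lipschitz with respect to the corresponding WoW distances. This yields $\bW(\bpi_{\theta, \sharp}\bmu, \bpi_{\theta, \sharp}\bnu; \R) \le \bW(\bmu, \bnu; \R^d)$ pointwise in $\theta$; squaring and integrating against the uniform measure on $\Sph^{d-1}$ gives the claim. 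More concretely, one pushes an optimal WoW plan $\bgamma \in \Gamma(\bmu, \bnu)$ forward by $(\bpi_\theta, \bpi_\theta)$ to a valid coupling of $\bpi_{\theta, \sharp}\bmu$ and $\bpi_{\theta, \sharp}\bnu$, and controls its cost via the pointwise bound $\W(\pi_{\theta, \sharp}\mu_1, \pi_{\theta, \sharp}\mu_2; \R) \le \W(\mu_1, \mu_2; \R^d)$ on each pair of fibres.

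For the lower bound $C_\xi\, \DSW(\bmu, \bnu; \xi) \le \bSW(\bmu, \bnu)$, I would reuse the estimate established in the well-definedness proof of the Banach-space sliced Wasserstein distance in Appendix~\ref{app:banach}, namely $\W^2(\pi_{v, \sharp}\tilde\alpha, \pi_{v, \sharp}\tilde\beta; \R) \le \lVert v \rVert_{U^*}^2\, \W^2(\tilde\alpha, \tilde\beta; U)$. Specializing to $U = L_2([0,1])$ and integrating against $\xi$ yields $\SW(\tilde\alpha, \tilde\beta; \xi) \le M_2(\xi)^{1/2}\, \W(\tilde\alpha, \tilde\beta; L_2([0,1]))$. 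Combining with the quantile isometry \eqref{eq:iso_embedding} gives $\SQW(\bmu_1, \bmu_2; \xi) \le M_2(\xi)^{1/2}\, \bW(\bmu_1, \bmu_2; \R)$ for any 1d meta-measures $\bmu_1, \bmu_2 \in \Prob_2(\Prob_2(\R))$. Setting $C_\xi \coloneqq M_2(\xi)^{-1/2}$, applying this bound pointwise to the sliced meta-measures $\bpi_{\theta, \sharp}\bmu$ and $\bpi_{\theta, \sharp}\bnu$, squaring, and integrating over $\Sph^{d-1}$ produces $C_\xi^2\, \DSW^2(\bmu, \bnu; \xi) \le \bSW^2(\bmu, \bnu)$.

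I do not anticipate any serious obstacle: each inequality follows by combining pieces already present in the paper. The only care needed is in faithfully tracking the constant $C_\xi = M_2(\xi)^{-1/2}$ across the two hierarchical slicing levels, and in noting that the assumption $\xi \in \Prob_2(L_2([0,1]))$ is exactly what guarantees $M_2(\xi) < \infty$ and hence $C_\xi > 0$.
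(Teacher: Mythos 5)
Your proposal is correct and follows essentially the same route as the paper's proof: the bound $C_\xi\,\DSW \le \bSW$ comes from the Cauchy--Schwarz estimate $\W^2(\pi_{v,\sharp}\alpha,\pi_{v,\sharp}\beta;\R)\le \lVert v\rVert^2 \W^2(\alpha,\beta;L_2([0,1]))$ combined with the quantile isometry (the paper rederives this inline via a coupling $\tilde\bgamma$ rather than citing the appendix estimate, but it is the same computation), and the bound $\bSW \le \bW$ comes from the pointwise fibre inequality $\W(\pi_{\theta,\sharp}\mu,\pi_{\theta,\sharp}\nu;\R)\le\W(\mu,\nu;\R^d)$, i.e.\ the non-expansiveness of $\bpi_{\theta,\sharp}$, exactly as you describe. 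The constant $C_\xi = M_2(\xi)^{-1/2}$ also matches.
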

\begin{proof}
% Given some metric $d: \mathcal{X} \times \mathcal{X} \to \R_{\geq 0}$,
% two measures $\mu, \nu \in \Prob_2(\mathcal{X})$ and a fixed 
% $\bar{\gamma} \in \Gamma(\mu, \nu)$, it generally holds that
% \begin{equation}
% \inf_{\gamma \in\Gamma(\mu, \nu)}
% \int_{\mathcal X \times \mathcal X}
% d(x_1, x_2)^p
% \, \d \gamma(x_1, x_2) 
% \leq 
% \int_{\mathcal X \times \mathcal X}
% d(x_1, x_2)^p
% \, \d \bar{\gamma}(x_1, x_2).
% \end{equation}
DSW is essentially a double integral. For $v \in L_2([0, 1])$, $\theta \in \Sph^{d-1}$ and $\tilde{\bgamma} \in \Gamma(\bmu, \bnu)$, the DSW integrand can be estimated using the Cauchy-Schwarz inequality by
    \begin{align}
        \W^2(\pi_{v, \sharp}(q_\sharp(\bpi_{\theta, \sharp}\bmu)), \pi_{v, \sharp}(q_\sharp(\bpi_{\theta, \sharp}\bnu); \, \R) 
        &\leq 
        \int_{\Prob_2(\R^d) \times \Prob_2(\R^d)} | \langle q(\pi_{\theta, \sharp} \mu) - q(\pi_{\theta, \sharp} \nu), v \rangle |^2 \, \d \tilde{\bgamma}(\mu,\nu)\\
        &\leq \int_{\Prob_2(\R^d) \times \Prob_2(\R^d)} \|q(\pi_{\theta, \sharp} \mu) - q(\pi_{\theta, \sharp} \nu)\|^2 \|v\|^2  \, \d \tilde{\bgamma}(\mu,\nu).
    \end{align}
    %This is based on the Cauchy-Schwarz inequality
    %$|\langle v, x \rangle| \leq \|v\| \|x\|$. 
    Subsequently, it holds
    \begin{equation}
        \W^2(\pi_{v, \sharp}(q_\sharp(\bpi_{\theta, \sharp}\bmu)), \pi_{v, \sharp}(q_\sharp(\bpi_{\theta, \sharp}\bnu); \, \R) 
        \leq
        \|v\|^2 \bW^2_2(\bpi_{\theta, \sharp}\bmu,\bpi_{\theta, \sharp}\bnu; \, \R)
                \leq 
        \|v\|^2 \bW^2_2(\bmu,\bnu; \, \R^d ).
    \end{equation}
    %In particular, the first inequality is based on the Cauchy-Schwarz inequality
    %$|\langle v, x \rangle| \leq \|v\|^2 \|x\|^2$
    %and the second one 
    The last inequality follows from 
    $\W(\pi_{\theta, \sharp} \mu, \pi_{\theta, \sharp} \nu; \, \R) 
    \leq 
    \W( \mu, \nu ; \, \R^d)$ for $\mu, \nu \in \Prob_2(\R^d)$ and $\theta \in \Sph^{d-1}$.
    Because of $\xi \in \Prob_2(L_2([0, 1]))$, we know that
    %\begin{equation}
        $\int_{L_2([0, 1])} \|v\|^2 \d \xi(v) = M_2(\xi) < \infty$.
    %\end{equation}
    Integration with respect to $\xi$ 
    and the uniform measure on $\Sph^{d-1}$ gives
    the statement with $C_\xi =1/\sqrt{M_2(\xi)}$.
    %that $\DSW$ is $\sqrt{M_2(\xi)}$-Lipschitz continuous.
\end{proof}
From this relation, we can easily see that $\bW(\bmu_n, \bmu;\, \R^d) \to 0$ results in $\bSW(\bmu_n, \bmu; \, \xi) \to 0$ and that $\bSW(\bmu_n, \bmu ; \, \xi) \to 0$ results in $\DSW(\bmu_n, \bmu; \, \xi) \to 0$.

In the following statement, we use indicator functions $\mathbf{1}_{x \in S}$ that take the value $1$ for $x \in S$ and $0$ otherwise.
\begin{lemma}
\label{lemma:l2_convergence_stepfunctions}
    For $\mu_n, \mu \in \Prob_2(L_2([0, 1]))$ only supported on fixed-length step functions, i.e., sums of indicator functions %$\mathbf{1}_{\bullet}: [0, 1] \to \{0, 1\}$ 
    of the form
    \begin{equation}
        S_P = \left\{\sum_{i=1}^{\tilde{n}} f_i \mathbf{1}_{x \in P_i} \, : \, f_i \in \R\right\}
    \end{equation}
    for a fixed partition $\dot{\cup}_{i=1}^{\tilde{n}} P_i = [0, 1]$,
    and a positive Gaussian measure $\xi \in \Prob_2(L_2([0, 1]))$,
    we have
    $$\SW(\mu_n, \mu; \, \xi)  \to 0 \iff \W(\mu_n, \mu; \, L_2([0, 1]) \to 0.$$
\end{lemma}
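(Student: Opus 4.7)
The plan is to reduce the claim to a finite-dimensional statement by identifying the subspace $S_P \subset L_2([0,1])$ with $\R^{\tilde n}$ via an isometric isomorphism, and then to invoke the known equivalence of $\SW$-convergence and $\W$-convergence on the finite-dimensional Euclidean space. This works because both $\mu_n$ and $\mu$ live entirely on the finite-dimensional $S_P$, and $\xi$ only ``sees'' its orthogonal projection onto $S_P$ when slicing such measures.

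\textbf{Step 1 (Identification).} Let $e_i \coloneqq |P_i|^{-1/2} \mathbf{1}_{P_i}$, $i = 1,\dots,\tilde n$, which form an orthonormal basis of $S_P$. Define the isometric isomorphism $T \colon S_P \to \R^{\tilde n}$, $f \mapsto (\langle f, e_i \rangle)_{i=1}^{\tilde n}$. Push-forward yields $\tilde\mu_n \coloneqq T_\sharp \mu_n$ and $\tilde\mu \coloneqq T_\sharp \mu$ in $\Prob_2(\R^{\tilde n})$, and the isometry property of $T$ immediately gives $\W(\mu_n, \mu; L_2([0,1])) = \W(\tilde\mu_n, \tilde\mu; \R^{\tilde n})$.

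\textbf{Step 2 (Transferring the slicing).} For any $v \in L_2([0,1])$ and $f \in S_P$, we have $\langle v, f \rangle = \langle P_{S_P} v, f \rangle = \langle T(P_{S_P} v), T(f) \rangle_{\R^{\tilde n}}$, where $P_{S_P}$ is the orthogonal projection onto $S_P$. Hence $\pi_{v,\sharp} \mu_n = \pi_{T(P_{S_P} v),\sharp} \tilde\mu_n$, and a change of variables yields
\begin{equation}
    \SW(\mu_n, \mu; \, \xi) = \SW(\tilde\mu_n, \tilde\mu; \, \tilde\xi),
    \qquad
    \tilde\xi \coloneqq (T \circ P_{S_P})_\sharp \, \xi.
\end{equation}
Since $T \circ P_{S_P}$ is a continuous linear map and one-dimensional projections of $\tilde\xi$ are one-dimensional projections of $\xi$, the measure $\tilde\xi$ is Gaussian on $\R^{\tilde n}$. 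If $\tilde\xi$ were supported on a proper affine subspace, there would be a nonzero $\eta \in \R^{\tilde n}$ with $\langle \eta, \cdot \rangle$ constant $\tilde\xi$-almost surely, which by pullback gives $\langle T^{-1}\eta, v \rangle$ constant on $\supp \xi = L_2([0,1])$. This forces $T^{-1}\eta = 0$, a contradiction; so $\tilde\xi$ is non-degenerate and has full support on $\R^{\tilde n}$.

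\textbf{Step 3 (Finite-dimensional equivalence).} One direction is immediate from \eqref{eq:ex-rho-SW}, which gives $\SW(\tilde\mu_n, \tilde\mu; \tilde\xi) \le M_2^{1/2}(\tilde\xi) \, \W(\tilde\mu_n, \tilde\mu; \R^{\tilde n})$. For the converse, $\SW(\tilde\mu_n, \tilde\mu; \tilde\xi) \to 0$ implies, along a subsequence, $\W(\pi_{\theta,\sharp}\tilde\mu_n, \pi_{\theta,\sharp}\tilde\mu; \R) \to 0$ for $\tilde\xi$-almost every $\theta$; since $\tilde\xi$ has full support on $\R^{\tilde n}$, this set is dense. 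The Cramér--Wold device then yields weak convergence $\tilde\mu_n \rightharpoonup \tilde\mu$, while second-moment convergence follows from the identity
\begin{equation}
    \int_{\R^{\tilde n}} M_2(\pi_{\theta,\sharp}\tilde\mu) \, \d\tilde\xi(\theta)
    = \int_{\R^{\tilde n}} \langle \theta, \Sigma_{\tilde\mu}\, \theta \rangle \, \d\tilde\xi(\theta)
    + \text{(lower order)},
\end{equation}
combined with the positive-definite covariance of $\tilde\xi$, which allows controlling $M_2(\tilde\mu)$ via $\tilde\xi$-averaged projection moments. Weak plus second-moment convergence on $\R^{\tilde n}$ is equivalent to $\W$-convergence, and together with Step 1 this gives $\W(\mu_n, \mu; L_2([0,1])) \to 0$.

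\textbf{Main obstacle.} The hard part is Step 3: the Gaussian $\tilde\xi$ is non-degenerate but its Radon--Nikodym derivative with respect to the standard Gaussian on $\R^{\tilde n}$ need not be bounded, so Proposition~\ref{corr:euclidean_equivalance:app} does not directly yield metric equivalence. Instead, one must pass through subsequence extraction on the dense set of admissible directions and carefully upgrade weak convergence of projections to $\W$-convergence of $\tilde\mu_n$ via the moment identity.
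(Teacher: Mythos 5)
Your Steps 1--2 coincide with the paper's proof: the paper builds the same isometric identification $\IsoMap_P$ of $S_P$ with $\R^{\tilde n}$ (your $T\circ P_{S_P}$), pushes $\xi$ forward to a nondegenerate Gaussian $\xi_P$ on $\R^{\tilde n}$, and obtains $\SW(\mu_n,\mu;\xi)=\SW(\IsoMap_\sharp\mu_n,\IsoMap_\sharp\mu;\xi_P)$ and the corresponding identity for $\W$. Where you diverge is Step 3, and here you have actually put your finger on a soft spot in the paper's own argument: the paper concludes by invoking Proposition~\ref{corr:euclidean_equivalance} to get equivalence with the classical $\SW$ and then cites the known fact that $\SW$ and $\W$ induce the same topology on $\Prob_2(\R^{\tilde n})$; but, as you observe, the boundedness of both Radon--Nikod\'ym derivatives with respect to $\mathcal N(0,\mathbf I_{\tilde n})$ required by that proposition fails for a generic nondegenerate Gaussian covariance, so the citation does not apply verbatim. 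Your replacement --- subsequence extraction to get $\W(\pi_{\theta,\sharp}\tilde\mu_n,\pi_{\theta,\sharp}\tilde\mu;\R)\to 0$ for $\tilde\xi$-a.e.\ $\theta$, then weak convergence plus second-moment convergence --- is a sound way to close this, but two points in it are stated too loosely to stand as written: (i) ``Cram\'er--Wold'' for a merely dense (full-measure) set of directions is not the textbook statement; you need the characteristic-function route ($\hat{\tilde\mu}_n(\theta)\to\hat{\tilde\mu}(\theta)$ for Lebesgue-a.e.\ $\theta$, since $\tilde\xi$ is equivalent to Lebesgue measure, followed by L\'evy's continuity theorem), which is exactly how the cited $\SW$/$\W$ equivalence is proved; (ii) the ``(lower order)'' moment identity should be replaced by the clean observation that one-dimensional $\W_2$-convergence of the projections gives $\theta^\top M_n\theta\to\theta^\top M\theta$ for the uncentered second-moment matrices on a dense set of $\theta$, whence $M_n\to M$ by polarization and so $M_2(\tilde\mu_n)\to M_2(\tilde\mu)$; weak convergence plus moment convergence then yields $\W$-convergence. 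A shorter repair, closer in spirit to the paper, is to disintegrate the nondegenerate Gaussian $\tilde\xi$ radially: its angular ``energy density'' $\theta\mapsto\int_0^\infty r^2\,\d\tilde\xi_\theta(r)$ is bounded above and below on $\Sph^{\tilde n-1}$, which gives genuine two-sided metric equivalence of $\SW(\cdot,\cdot;\tilde\xi)$ with the classical $\SW$ and lets the rest of the paper's argument go through unchanged.
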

\begin{proof}
    To prove the statement, we want to leverage the established 
    metric equivalence
    between the Wasserstein and the classical sliced Wasserstein distance on 
    $\Prob_2(\R^{\tilde{n}})$.
   Therefore, we construct an isometric mapping $\IsoMap_P: S_P \to \R^{\tilde{n}}$ similar to \citep{piening2025novel}. 
    Instead of considering probability measures on the infinite-dimensional space
    $L_2([0, 1])$, 
    this allows us to consider Euclidean probability measures.
    % In particular, we aim to leverage the properties of 
    % $\W$ and the $\xi$-based 
    % $\SW$ on $\Prob_2(\R^{\tilde{n}})$
    %for a finite-dimensional slicing distribution $\xi_n~\in~\Prob_2(\R^{\tilde{n}})$. 
    We define our $S_P$-isometric mapping on the space $L_2([0, 1])$ via 
    \begin{align}
        \IsoMap_{P}: &\, L_2([0, 1]) \to \R^{\tilde{n}},\\
        &\, f \mapsto \left(|P_i|^{\frac12} \int_{P_i}f(x) \, dx\right)_{i=1}^{\tilde{n}} \in \R^{\tilde{n}}.
    \end{align}
    % For $f \in S_P$, we set
    % \begin{equation*}
    %     \IsoMap: f = \sum_{i=1}^{\tilde{n}} f_i \mathbf{1}_{x \in P_i} \mapsto (f_i |P_i|^{\frac12})_i \in \R^{\tilde{n}}.
    % \end{equation*}
    This is an isometry on $S_p$ because for $f^{(1)}, f^{(2)} \in S_P$ it holds that
    \begin{equation}
        \|f^{(1)} - f^{(2)}\|^2 %_{L_2([0, 1])}^2 
        = \sum_{i=1}^{\tilde{n}} |P_i| (f^{(1)}_i - g_i)^2 
        = \|\IsoMap(f^{(1)}) - \IsoMap(f^{(2)})\|^2.%_{\R^{\tilde{n}}}^2.
    \end{equation}
    Now, this means that
    \begin{equation}
    \label{eq:Relation_L2W_Euclidean}
        \W^2(\mu_n, \mu; \, L_2([0, 1])) = 
        \W^2(\IsoMap_{P, \sharp} \mu_n, \IsoMap_{P, \sharp} \mu; \, \R^{\tilde{n}}).
    \end{equation}
    Now, we aim to construct a similar 
    relation for the $\xi$-based sliced Wasserstein distance. 
    We require $\xi_P \in \Prob_2(\R^{\tilde{n}})$ based on $\xi \in \Prob_2(L_2([0, 1]))$.
    % via the integration mapping
    % \begin{align}
    %     \operatorname{Int}_{P}: &L_2([0, 1]) \to \R^{\tilde{n}},\\
    %     &g \mapsto \left(\int_{P_i}g(x) \, dx\right)_{i=1}^{\tilde{n}} \in \R^{\tilde{n}}.
    % \end{align}
    %Note that this would map $f \in S_P$ to the vector $(f_i)_{i=1}^{\tilde{n}}$ by design. 
    Therefore, we again employ our mapping to link the $L_2$-projection to a projection on $\R^{\tilde{n}}$. For $f\in S_P$, $g \in L_2([0, 1])$, we have
    $\langle f, g\rangle = \langle \IsoMap_P(f), \IsoMap_P(g)\rangle$. 
    % $$(\langle f^{(1)}, g\rangle - \langle f^{(2)}, g\rangle)^2 = (\langle\IsoMap_P( f^{(1)}), \IsoMap_P(g)\rangle - \langle\IsoMap_P( f^{(2)}), \IsoMap_P(g)\rangle)^2$$
    Hence, we define $\xi_P \coloneqq \IsoMap_{P, \sharp} \xi$.
    As a sliced counterpart of \eqref{eq:Relation_L2W_Euclidean}, 
    we get that
    \begin{equation}
        \SW^2(\mu_n, \mu; \xi) = \SW^2(\IsoMap_{\sharp} \mu_n, \IsoMap_{\sharp}\mu; \xi_P).
    \end{equation}
    Now, we want 
    to utilize Proposition~\ref{corr:euclidean_equivalance}
    to connect our $\xi$-based sliced Wasserstein 
    to the classical sliced Wasserstein distance.
    Indeed, since $\xi$ is a Gaussian measure 
    and $\IsoMap_P$ is linear, 
    $\xi_P$ is Gaussian by definition of a Gaussian measure.
    Moreover, it is a nondegenerate Gaussian since $\xi$ is positive.
    Thus, $\SW(\IsoMap_{\sharp} \mu_n, \IsoMap_{\sharp}\mu; \xi_P)$ is topologically equivalent to the classical sliced distance $\SW(\IsoMap_{\sharp} \mu_n, \IsoMap_{\sharp}\mu)$ by Proposition~\ref{corr:euclidean_equivalance}. Since $\SW$ and $\W$ induce the same weak topology on $\Prob_2(\R^{\tilde{n}})$ \citep{nadjahi2020statistical}, we overall conclude that
    \begin{align}
        &\SW(\mu_n, \mu; \, \xi)  \to 0 \\
        \iff  &\SW(\IsoMap_\sharp \mu_n, \IsoMap_\sharp \mu; \xi_P)  \to 0 \\
        \iff  &\SW(\IsoMap_\sharp \mu_n, \IsoMap_\sharp \mu)  \to 0 \\
        \iff  &\W(\IsoMap_\sharp \mu_n, \IsoMap_\sharp \mu; \, \R^{\tilde{n}})  \to 0\\ 
        \iff &\W(\mu_n, \mu; \, L_2([0, 1])) \to 0.
    \end{align}
\end{proof}

\begin{proposition}
\label{prop:double_sliced_equi_single_sliced}
    Given a positive Gaussian 
    $\xi \in \Prob_2(L_2([0, 1]))$
    and empirical meta-measures $\bmu_n, \bmu \in \eProb^N(\Prob^{\tilde{n}}(\mathcal{X}))$, 
    $\mathcal{X} \subset \R^d$ compact,
%        \begin{equation}
%         \bmu_n = \sum_{i=1}^N \frac{1}{N} \delta_{\mu_{n, i}}, \quad 
%          \mu_{n, i} = \sum_{k=1}^{n_i} \frac{1}{n_i} \delta_{x_{n, i, k}}
%     \end{equation}
%     and
%     \begin{equation}
%     \bmu = \sum_{j=1}^N \frac{1}{N} \delta_{\mu_{j}}, \quad 
%      \mu_{j} = \sum_{\ell=1}^{n_j} \frac{1}{n_j} \delta_{x_{j, \ell}},
% \end{equation}
   we have that
   \begin{equation}
        \DSW(\bmu_n, \bmu; \, \xi) \to 0 \iff \bSW(\bmu_n, \bmu; \, \xi) \to 0.
   \end{equation}
\end{proposition}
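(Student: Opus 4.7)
The direction $\bSW(\bmu_n, \bmu; \xi) \to 0 \Rightarrow \DSW(\bmu_n, \bmu; \xi) \to 0$ is immediate from Proposition~\ref{lemma:continuity_dsw}, which already provides the pointwise domination $C_\xi \DSW \leq \bSW$. My plan for the harder reverse implication is to reduce the infinite-dimensional Gaussian $L_2([0,1])$-slicing to a finite-dimensional one via Lemma~\ref{lemma:l2_convergence_stepfunctions} for each fixed direction $\theta$, and then lift the resulting pointwise-in-$\theta$ statement back to the spherical integral by dominated convergence, leveraging the compactness of $\mathcal{X}$.

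Concretely, assume $\DSW(\bmu_n, \bmu; \xi) \to 0$. Since $\DSW^2$ is precisely the $\Sph^{d-1}$-integral of the nonnegative integrand $\SQW^2(\bpi_{\theta, \sharp}\bmu_n, \bpi_{\theta, \sharp}\bmu; \xi)$, this is $L^1$-convergence to zero on the sphere, so along a suitable subsequence $(n_k)$ we obtain $\SQW^2(\bpi_{\theta, \sharp}\bmu_{n_k}, \bpi_{\theta, \sharp}\bmu; \xi) \to 0$ for $\Sph^{d-1}$-almost every $\theta$. For any such $\theta$, observe that every inner atom $\mu_i$ of $\bmu_n$ and $\bmu$ is a uniformly weighted empirical measure with exactly $\tilde n$ points, so each quantile $q(\pi_{\theta, \sharp}\mu_i)$ is a step function with jumps only at the fixed grid $\{j/\tilde n : 1 \leq j \leq \tilde n - 1\}$. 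Consequently $q_\sharp \bpi_{\theta, \sharp} \bmu_{n_k}$ and $q_\sharp \bpi_{\theta, \sharp} \bmu$ are both supported on the common finite-dimensional step-function space $S_P$ of Lemma~\ref{lemma:l2_convergence_stepfunctions}. That lemma then yields $\W(q_\sharp \bpi_{\theta, \sharp} \bmu_{n_k}, q_\sharp \bpi_{\theta, \sharp} \bmu; L_2([0,1])) \to 0$, and since $q$ is an isometric embedding of $(\Prob_2(\R), \W)$ into $L_2([0,1])$, this is equivalent to $\bW(\bpi_{\theta, \sharp}\bmu_{n_k}, \bpi_{\theta, \sharp}\bmu; \R) \to 0$.

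To pass from this pointwise-$\theta$ convergence back to $\bSW$, I use that $\mathcal{X} \subset \overline{B_R(0)}$ implies $\pi_{\theta, \sharp}\mu$ is supported in $[-R,R]$ uniformly in $\theta \in \Sph^{d-1}$, so that $\bW^2(\bpi_{\theta, \sharp}\bmu_{n_k}, \bpi_{\theta, \sharp}\bmu; \R) \leq 4R^2$ uniformly in $k$ and $\theta$. Dominated convergence on the finite-measure sphere then gives $\bSW(\bmu_{n_k}, \bmu; \xi) \to 0$ along the extracted subsequence, and a standard subsequence-of-subsequence argument (every subsequence of $\bSW(\bmu_n, \bmu; \xi)$ admits a further sub-subsequence converging to zero) promotes this to convergence of the full sequence.

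The main obstacle is the clean application of Lemma~\ref{lemma:l2_convergence_stepfunctions}: it requires both sequences of pushed-quantile measures to live on a common fixed-partition step-function subspace $S_P$. This is exactly where the discretized-meta-measure hypothesis $\bmu_n, \bmu \in \eProb^N(\eProb^{\tilde n}(\mathcal{X}))$ with a \emph{fixed} inner atom count $\tilde n$ is used essentially; without it the quantile partitions would vary with $n$, blocking the finite-dimensional reduction. Compactness of $\mathcal{X}$ plays an auxiliary but necessary role in the final dominated-convergence step, and could presumably be weakened to a uniform finite-$p$-moment assumption on the inner measures at the cost of a more delicate tightness argument.
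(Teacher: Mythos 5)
Your proof is correct and follows essentially the same route as the paper's: the easy direction comes from the lower bound in Proposition~\ref{lemma:continuity_dsw}, and the converse reduces the fixed-$\tilde{n}$ quantile step functions to the finite-dimensional setting of Lemma~\ref{lemma:l2_convergence_stepfunctions} pointwise in $\theta$ and then applies dominated convergence using the compact support. You are in fact slightly more careful than the paper, which glosses over the subsequence extraction needed to pass from $L^1(\Sph^{d-1})$-convergence to almost-everywhere pointwise convergence and the closing subsequence-of-subsequences argument.
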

\begin{proof}
%`$\Rightarrow$': 
We assume that
\begin{equation}
     \DSW^2(\bmu, \bnu; \, \xi) = 
    \int_{ \Sph^{d-1}} \SW^2 \big(q_\sharp(\bpi_{\theta, \sharp}\bmu_n),q_\sharp(\bpi_{\theta, \sharp}\bmu); \, \xi \big) 
    \d \Sph^{d-1}(\theta)
    \to 0.
    \end{equation}
    It follows that
    \begin{equation}
        \SW(q_\sharp\bpi_{\theta, \sharp}\bmu_n,q_\sharp\bpi_{\theta, \sharp}\bmu; \, \xi) 
        %= \DSW^2(\bpi_{\theta, \sharp}\bmu_n, \bpi_{\theta, \sharp}\bmu; \, \xi)  
        \to 0
    \end{equation}
    for almost any $\theta \in \Sph^{d-1}$. 
    Since we are dealing with fixed, uniform weights, all quantile functions 
    are piecewise constant step functions with a fixed step length.
    By Lemma~\ref{lemma:l2_convergence_stepfunctions}, it thus follows
    that
    \begin{equation}
        \W(q_{\sharp} \bpi_{\theta, \sharp} \mu_n, q_{\sharp} \bpi_{\theta, \sharp} \mu_n; \, L_2([0, 1])
        =\bW(\bpi_{\theta, \sharp}\bmu_n, \bpi_{\theta, \sharp}\bmu; \, \xi)  
        \to 0
    \end{equation}
    for almost every $\theta \in \Sph^{d-1}$. Since the compact support results in boundedness, it thus
    follows from the dominated convergence theorem that
    \begin{equation}
        \bSW^2(\bmu_n, \bmu; \, \xi)  
        = \int_{ \Sph^{d-1}} \bW^2(\bpi_{\theta, \sharp}\bmu,\bpi_{\theta, \sharp}\bnu; \, \R) \d \Sph^{d-1}(\theta)
        \to 0.
    \end{equation}
\end{proof}

Now, we state the last result.
Note that our proof is inspired by a proof in \citep{piening2025slicing_gaussian}.
\begin{proposition}
\label{prop:single_sliced_equi_wow}
    Given $\bmu_n, \bmu \in \eProb^N(\eProb^{\tilde{n}}(\mathcal{X}))$, $\mathcal{X} \subset \R^d$ compact, 
    % with
    % \begin{equation}
    %     \bmu_n = \sum_{i=1}^N \frac{1}{N} \delta_{\mu_{n, i}}, \quad 
    %      \mu_{n, i} = \sum_{k=1}^{n_i} \frac{1}{n_i} \delta_{x_{n, i, k}}
    % \end{equation}
    % and
    % \begin{equation}
    % \bmu = \sum_{j=1}^N \frac{1}{N} \delta_{\mu_{j}}, \quad 
    %  \mu_{j} = \sum_{\ell=1}^{n_j} \frac{1}{n_j} \delta_{x_{j, \ell}},
% \end{equation}
it holds 
that
\begin{equation}
    \bSW(\bmu_n, \bmu; \, \xi) \to 0 
    \iff 
    \bW(\bmu_n, \bmu; \, \R^d) \to 0.
\end{equation}
%for $n \to \infty$.
\end{proposition}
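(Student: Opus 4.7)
The implication $\bW(\bmu_n,\bmu;\R^d)\to 0 \Rightarrow \bSW(\bmu_n,\bmu;\,\xi)\to 0$ is immediate from the upper bound $\bSW \le \bW$ already established in Proposition~\ref{lemma:continuity_dsw}. The real content lies in the converse, and my plan is to handle it via a compactness argument combined with the subsequence principle.

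For the forward direction, I would first observe that each empirical meta-measure in $\eProb^N(\eProb^{\tilde{n}}(\mathcal{X}))$ is parameterized (up to permutations of atoms on each level) by an element of $\mathcal{X}^{N\tilde{n}}$. Since $\mathcal{X}\subset\R^d$ is compact, $\mathcal{X}^{N\tilde{n}}$ is compact, and the induced quotient space equipped with $\bW$ is compact as well; alternatively, one may argue directly that $\Prob(\Prob(\mathcal X))$ is compact in the weak topology by Prokhorov and note that the subset of meta-measures with exactly $N$ atoms each of mass $1/N$ supported on measures with exactly $\tilde n$ atoms each of mass $1/\tilde n$ is closed in this topology, and that on compactly supported measures this topology coincides with the $\bW$-topology.

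Now suppose $\bSW(\bmu_n,\bmu;\,\xi)\to 0$. To prove $\bW(\bmu_n,\bmu;\R^d)\to 0$, it suffices to show that every subsequence of $(\bmu_n)$ admits a further subsequence converging to $\bmu$ in $\bW$. Pick an arbitrary subsequence; by the compactness above, extract a further subsequence $(\bmu_{n_k})$ with $\bW(\bmu_{n_k},\bmu^*;\R^d)\to 0$ for some $\bmu^*\in\eProb^N(\eProb^{\tilde{n}}(\mathcal{X}))$. By the easy direction, $\bSW(\bmu_{n_k},\bmu^*;\,\xi)\to 0$, so the triangle inequality together with the hypothesis yields $\bSW(\bmu,\bmu^*;\,\xi)=0$. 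Invoking once more the chain $C_\xi\,\DSW \le \bSW$ from Proposition~\ref{lemma:continuity_dsw}, we get $\DSW(\bmu,\bmu^*;\,\xi)=0$, and by Proposition~\ref{prop:metric_double_sliced_appendix} the $\DSW$ distance is a metric on empirical meta-measures, so $\bmu=\bmu^*$. This closes the subsequence argument.

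I expect the main obstacle to be the compactness claim for the discrete meta-measure space in the $\bW$ topology. It is conceptually routine because of the fixed cardinalities $N$ and $\tilde{n}$ and the compact ground space, but some care is needed to ensure that the $\bW$-limit of empirical meta-measures of the prescribed form remains of the same form. Everything else reduces to combining the two sandwich inequalities of Proposition~\ref{lemma:continuity_dsw} with the fact that $\DSW$ separates points on this discrete space.
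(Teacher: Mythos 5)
Your proof is correct, and it takes a genuinely different route from the paper's for the nontrivial direction. Both arguments share the same skeleton (the easy direction via $\bSW\le\bW$ from Proposition~\ref{lemma:continuity_dsw}, then compactness of $\mathcal X$ plus the subsequence principle), but they diverge in how the limit is identified. The paper extracts a sub-subsequence along which the sliced integrand converges pointwise for a.e.\ $\theta$, argues that the optimal plans $\bgamma^{n,*}_\theta$ of the projected problems may be taken to be permutation matrices, analyzes their entrywise limits, and feeds the resulting plan back into an upper bound for $\bW^2(\bmu_n,\bmu)$; the Cram\'er--Wold input (Lemma~\ref{lemma:zero_set_slicing_empirical}) enters implicitly when passing from ``projections agree for a positive-measure set of $\theta$'' to ``$\W(\mu_{n_m,k},\mu_\ell)\to 0$''. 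You instead extract a $\bW$-convergent sub-subsequence and identify its limit $\bmu^*$ with $\bmu$ via the triangle inequality for the pseudo-metric $\bSW$ together with the already-proved positive-definiteness of $\DSW$ on empirical meta-measures (Proposition~\ref{prop:metric_double_sliced_appendix}); since $\bSW$ does not actually depend on $\xi$, you are free to instantiate a positive Gaussian $\xi$ there. Your approach is shorter and sidesteps the somewhat delicate plan-limit claims in the paper's version, at the cost of needing the class $\eProb^N(\eProb^{\tilde n}(\mathcal X))$ to be compact and closed under $\bW$-limits --- but this is immediate, since it is the image of the compact set $\mathcal X^{N\tilde n}$ under the map $(x_{i,k})\mapsto \frac1N\sum_i\delta_{\frac{1}{\tilde n}\sum_k\delta_{x_{i,k}}}$, which is Lipschitz because $\bW^2(\Phi(x),\Phi(y))\le \frac{1}{N\tilde n}\sum_{i,k}\lVert x_{i,k}-y_{i,k}\rVert^2$, so the worry you flag at the end resolves itself. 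One small simplification: you could skip the detour through $\DSW$ entirely and conclude $\bmu=\bmu^*$ directly from $\bSW(\bmu,\bmu^*)=0$ by applying Lemma~\ref{lemma:zero_set_slicing_empirical} at the meta level, exactly as in the proof of Proposition~\ref{prop:metric_double_sliced_appendix}.
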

\begin{proof}
We write     
\begin{equation}
    \bmu_n = \frac{1}{N} \sum_{i=1}^N  \delta_{\mu_{n, i}}, \quad 
     \bmu = \frac{1}{N} \sum_{j=1}^N  \delta_{\mu_{j}}.
\end{equation}
%\begin{equation}
%    \bmu_n = \frac{1}{N} \sum_{i=1}^N  \delta_{\mu_{n, i}}, \quad 
%     \mu_{n, i} = \frac{1}{n_i}  \sum_{k=1}^{n_i} \delta_{x_{n, i, k}}.
%\end{equation}
% and
% \begin{equation}
% \bmu = \frac{1}{N} \sum_{j=1}^N  \delta_{\mu_{j}}, \quad 
%  \mu_{j} = \frac{1}{n_j} \sum_{\ell=1}^{n_j}  \delta_{x_{j, \ell}}.
%  \end{equation}
 
%`$\Rightarrow$': 
Now, we assume that
        \begin{equation}
            \bSW^2(\bmu_n, \bmu; \, \xi)
            =
            \int_{\Sph^{d-1}}
            \sum_{k = 1}^N\sum_{\ell = 1}^N 
            \bgamma^{n,*}_{\theta, k ,\ell}
            \bW^2_2(\pi_{\theta,\sharp} \, \mu_{n, k}, 
            \pi_{\theta,\sharp} \, \mu_{\ell}; \, \R) 
            \d \Sph^{d-1}(\theta) \to 0,
        \end{equation}
        where $\bgamma^{n,*}_\theta \in \Gamma(\bpi_{\theta, \sharp}\bmu_n, \bpi_{\theta, \sharp}\bmu) \subset \R_{\geq 0}^{N \times N}$ denotes 
        the optimal projected WoW plan for a fixed $\theta~\in~\Sph^{d-1}$.
        Because of $L_2(\Sph^{d-1})$ convergence,
        for any subsequence of $\bmu_n$,
        we find a further subsequence $(\bmu_{n_m})_{m\in\N}$
        such that
        \begin{equation}
            \label{eq:pw-sml}
           \sum_{k = 1}^N\sum_{\ell = 1}^N 
            \bgamma^{n_m,*}_{\theta, k ,\ell}
            \bW^2_2(\pi_{\theta,\sharp} \, \mu_{n_m, k}, 
            \pi_{\theta,\sharp} \, \mu_{\ell}; \, \R)
            \to 0
            \quad
            \text{for almost every } \theta \in \Sph^{d-1}
        \end{equation}
        pointwisely.
        Since $\mathcal{X}$ is compact,
        $(\bmu_{n_m})_{m\in\N}$ can be chosen
        such that 
        $\mu_{n_m, k}$ converges to some $\tilde\mu_{k}~\in~\Prob_2(\R^d)$.
        Moreover,   since $\bmu_n$ and $\bmu$ are both empirical meta-measures with $N$ support points,
        we can assume without loss of generality 
        that $\bgamma^{n,*}_{\theta, k ,\ell}$ is
        a permutation matrix, i.e., $\bgamma^{n,*}_{\theta, k ,\ell}~\in~\{0, \frac1N\}^{N \times N}$  with only one positive value per row or column
        \citep[Prop. 2.1]{PeyreCuturi2019}.

        For any $\theta \in \Sph^{d-1}$
        such that 
        \eqref{eq:pw-sml} holds true,
        it follows that $\bgamma_{\theta, k,\ell}^{n_m,*} \to \frac1N$ in the case of 
        $\bpi_{\theta, \sharp} \, \tilde \bmu_{k}~=~\bpi_{\theta, \sharp} \, \bmu_{\ell}$
        and $\bgamma_{\theta, k,\ell}^{n_m,*} \to 0$ otherwise.
        It follows that either $ \bgamma_{\theta, k,\ell}^{\theta,*} \to 0$
        or $\W^2(\mu_{n_m, k}, \mu_{\ell}) \to 0$.
        Due to the compactness assumption,
        we know that $\bgamma_{\theta, k,\ell}^{n_m,*}$ 
        and $\W^2(\mu_{n_m, k}, \mu_{\ell})$
        are bounded and thus
        \begin{equation}
            \bW^2(\mu_n, \mu; \, \R^d)
            \le
            \sum_{k=1}^K \sum_{\ell=1}^K
            \bgamma_{\theta, k,\ell}^{n_m,*}
            \W^2(\mu_{n_m, k}, \mu_{\ell}; \, \R^d)
            \to 0
            \qquad\text{as}\qquad
            n \to \infty,
        \end{equation}
        Because this holds true for any subsequence of $(\bmu_n)_{n\in\N}$,
        the statement follows. 
        %`$\Leftarrow$': This follows from Lemma~\ref{lemma:continuity_dsw}.
    \end{proof}

Combining Proposition~\ref{prop:double_sliced_equi_single_sliced} and Proposition~\ref{prop:single_sliced_equi_wow} gives the second statement of Theorem~\ref{thm:dsw_properties}.

\clearpage
\section{Additional Details and Experiments}
\label{sec:experiment_section_supp}

\subsection{Implementation Details}
\label{subsection:implementation}
All experiments
were conducted with \emph{Python} on a system equipped with a 13th Gen Intel Core i5-13600K CPU and an NVIDIA GeForce
RTX 3060 GPU with 12 GB of memory. 
For all experiments concerning the s-OTDD\footnote{Code: \url{https://github.com/hainn2803/s-OTDD}.}, OTDD, and STLB\footnote{Code: \url{https://github.com/MoePien/slicing_fused_gromov_wasserstein}.}, 
we employ the
official implementations for algorithms and experiments
based on the corresponding public GitHub repositories.  Unless stated in the experimental description, we use the default hyperparameters for algorithms, including the entropic regularization parameter for OTDD computation. 
For all other Euclidean Wasserstein computations, we employ the \emph{geomloss} package \citep{feydy2019interpolating} to estimate
entropically regularized Wasserstein distances, 
where we set the entropic `\emph{blur}' parameter to $0.01$.
All WoW distances are calculated by estimating a pairwise Wasserstein cost matrix using 
this \emph{geomloss} geomloss and finally solving the unregularized OT problem given this cost matrix using the \emph{POT} package. 

For our own implementation, we employ linear interpolation between the two closest support points for our quadrature grid points.
Moreover, instead of sampling the Gaussian processes $v$ and the unit directions $\theta$ completely independently, 
we sample $10$ or $100$ random Gaussian processes $v$ for each sampled unit direction 
$\theta$
to reduce the number of quantile computations. For our low-dimensional point cloud experiment and our mid-dimensional patch experiment, we employ $100$ random `outer projections' $\theta$ and $100$ random `inner projections' $v$ per $\theta$ ($10, 000$ in total). 
For the high-dimensional (s-)OTDD experiments, we employ $1000$ random `outer projections' $\theta$ and $10$ random `inner projections' $v$ per $\theta$ (also $10, 000$ in total).

\subsection{\revise{Practical Guide on Parameter Choice}}
\label{app:prac-guide}
\revise{The numerical implementation depends on multiple parameters, namely the kernel parameter $\sigma$, the grid size $R$, and the projection number $S$. We conduct multiple parameter studies that can be found in Sections \ref{subsection:slicing_functions}, 
\ref{sec:otdd_supp}, and \ref{subsec:point_cloud_supp}.

For the parameters $S$ and $R$, we note that both of them are essentially integration parameters of a Monte Carlo integration. The parameter $R$ is the size of the quantile integration grid, and $S$ is the number of Monte Carlo steps.
Thus, we would generally expect better results for higher choices of $R$ and $S$.
However, the runtime scales linearly with the number of Monte Carlo steps $S$ and Monte Carlo typically converges with a rate of $\mathcal{O}(S^{-1/2})$. 
Thus, choosing $S$ is mainly a question of balancing performance and runtime, where the marginal value of an additional Monte Carlo projection decreases for high $S$. As for the grid size $R$, we note that the quantile functions we aim to integrate are rather simple. In particular, they are monotonically increasing step functions. In line with this observation, we only found small marginal performance increases for $R>10$ in most experiments. 
Moreover, we note that high choices of $R$ result in sampling from highly correlated, high-dimensional Gaussians, 
which might become numerically unstable for some implementations.

During our experiments regarding SQW 
for the shape experiment in Section~\ref{subsection:gromov_shapes}, we only found a small impact of $\sigma$. This is in line with Proposition~\ref{prop:equivalance} showing that variations of $\sigma$ result in an equivalent metric.
While $\sigma=0$ works for our discretization in practice 
and essentially reduces to the slicing approach discussed in \citep{piening2025novel}, 
it corresponds to sampling a white noise process that is not contained in $L_2([0, 1])$. 
Therefore, $\sigma=0$ is not covered by our theory.
%On the other hand, SQW and DSW remain well-defined for arbitrary meta-measures for $\sigma>0$.
The limit $\sigma \to \infty$ leads to an increasing correlation and
to sampling constant test functions.
However, as constant functions are not dense in $L_2([0, 1])$,
the conditions of Theorem~\ref{thm:SW-metric} would not be met anymore. Hence, SQW might lose its metric properties. Thus, we generally recommend choosing $\sigma$ as a value from $[0.001, 0.1]$.
}
\clearpage

\subsection{Sliced Functional Optimal Transport on $L_2([0, 1])$}
\label{subsection:slicing_functions}
\revise{Both of our introduced sliced distances, SQW and DSW, rely on the $\xi$-based SW distance on  $\Prob_2(L_2([0, 1]))$ and its numerical implementation. We study the impact of our parameter choices on this distance
by looking at increasingly finer function discretizations $R$ for different kernel parameters $\sigma$.}
We consider two empirical measure pair in $\Prob_2(L_2([0, 1]))$ defined via
\begin{equation}
    f^{(1)}_i(x) = \cos(i x), 
    \hspace{2mm}
    {h}^{(1)}_j(x) = \sin(j x + j \pi),
    \hspace{2mm}
    f^{(2)}_i(x) = \cos(i x + i) + \sin(x),
    \hspace{2mm}
    h^{(2)}_j(x) = \sin(j x)^j
\end{equation}
% $f^{(1)}_i(x) = \cos(i x)$, 
% $g^{(1)}_j(x) = \sin(j x + j \pi)$
% $f^{(1)}_i(x) = \cos(i x + i) + \sin(x)$
% $g^{(2)}_j(x) = \sin(j x)^j$
and 
\begin{equation}
    \mu^{(k)} = \frac15 \sum_{i=1}^5 \delta_{f_i^{(k)}},
    \quad
    \nu^{(k)} = \frac{1}{10} \sum_{j=1}^{10} \delta_{h_j^{(k)}}, \quad k=1,2.
\end{equation}
In Figure~\ref{fig:l2_slicing_discretization}, we plot our Monte Carlo estimate for the $\xi$-based SW between $\mu^{(1)}$ and $\nu^{(1)}$ resp. $\mu^{(2)}$ and $\nu^{(2)}$  estimates for different Gaussian bandwidth parameters $\sigma$ and  equispaced discretization grids with varying size $R$ ($20 \leq R \leq 100$).
To investigate the limit case for $\sigma \to 0$, we also include the case of isotropic Gaussian slicing directions $\theta_l \sim \mathcal{N}(\mathbf{0}, \mathbf{I}_N)$ 
due
to $(k_\sigma(y_i, y_j))_{1 \leq i, j \leq N} \to \mathbf{I}_N$. 
While this limit case is well-defined for finite discretization, the limit process would be a white noise process, i.e., a process with sample paths outside of $L_2([0, 1])$.
On the one hand, we observe that the SW estimate depends heavily on the grid resolution $R$, especially for isotropic Gaussian directions and for $\sigma$ small. On the other hand, we see that our numerical estimate is less sensitive to the discretization for larger $\sigma$ and remains stable given a sufficiently large grid. To get accurate estimates for our comparison, we employ $S=10, 000$.  We plot all employed functions in Figure~\ref{fig:plots_in_L2}.

In the main paper, we only employed the radial basis function (RBF) kernel $k_\sigma$.
However, we point out that we might use other kernels, such as the Brownian motion (BM) kernel $k(s, t) = \min(\{s, t\})$.
As an example, we repeat the experiment portrayed in Figure~\ref{fig:l2_slicing_discretization}
with the BM kernel and plot the results in the last column of Figure~\ref{fig:l2_slicing_discretization}.
Similar to an RBF kernel with high $\sigma$, 
we see that the resulting sliced distance is invariant to the discretization.

\begin{figure}[hb]
  \centering
  % First subplot
    \centering
    \hspace{.05\textwidth}
    \begin{subfigure}{.4\textwidth}
    \centering
    \includegraphics[width=\linewidth]{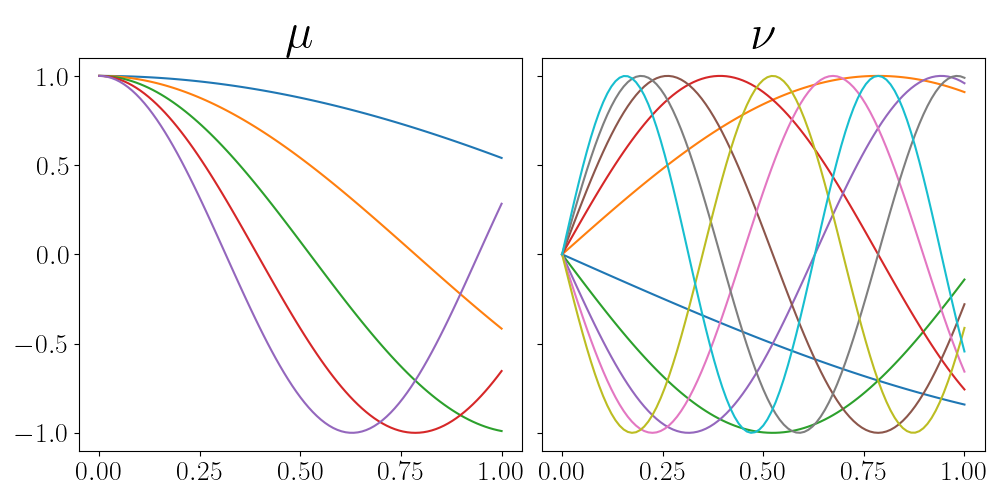}
    \caption*{$\mu^{(1)}$ and $\nu^{(1)}$}
    \end{subfigure}
    \hfill
        \begin{subfigure}{.4\textwidth}
    \centering
    \includegraphics[width=\linewidth]{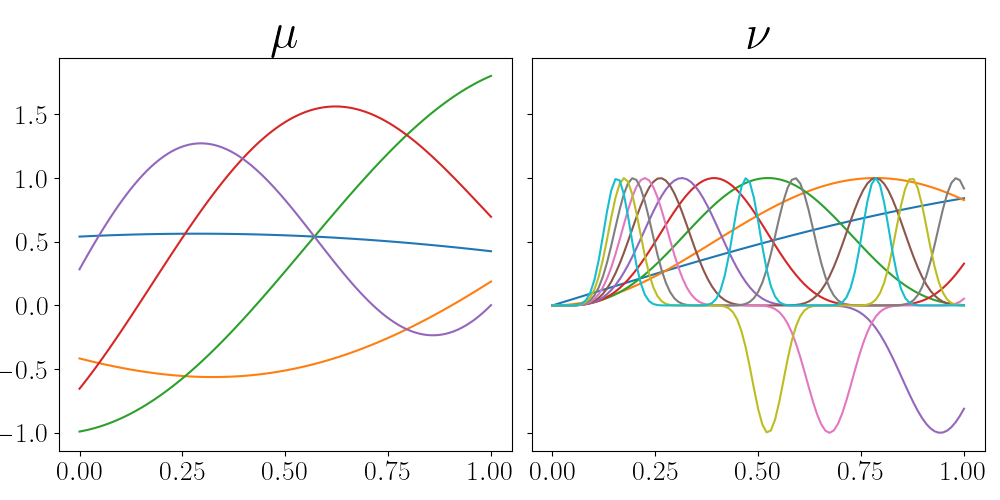}
    \caption*{$\mu^{(2)}$ and $\nu^{(2)}$}
    \end{subfigure}
    \hspace{.05\textwidth}
    \caption{Plots of the support functions of the two empirical measure pairs from Section~\ref{subsection:slicing_functions}.}
    \label{fig:plots_in_L2}
\end{figure}

\begin{figure*}[hb]
    \centering
    % Row 1
    \begin{minipage}[t]{1.\linewidth}
        \centering
        \begin{subfigure}[t]{0.2\linewidth}
            \centering
        \includegraphics[width=\linewidth]{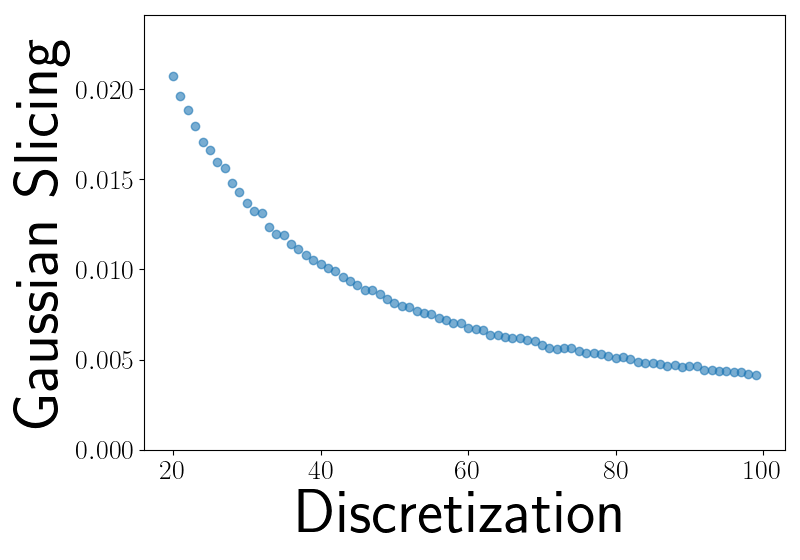}
        \end{subfigure}
        \hspace{-2mm}
        \begin{subfigure}[t]{0.2\linewidth}
            \centering
        \includegraphics[width=\linewidth]{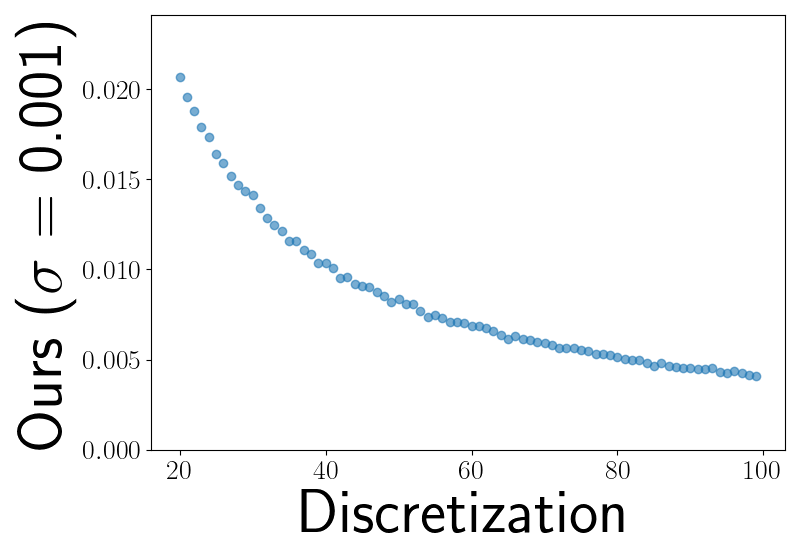}
        \end{subfigure}
        \hspace{-2mm}
        \begin{subfigure}[t]{0.2\linewidth}
            \centering
        \includegraphics[width=\linewidth]{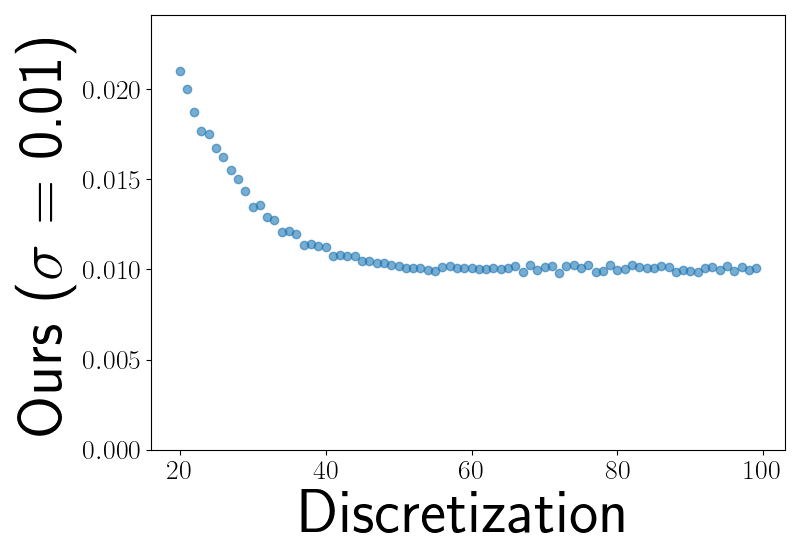}
        \end{subfigure}
        \hspace{-2mm}
        \begin{subfigure}[t]{0.2\linewidth}
            \centering
        \includegraphics[width=\linewidth]{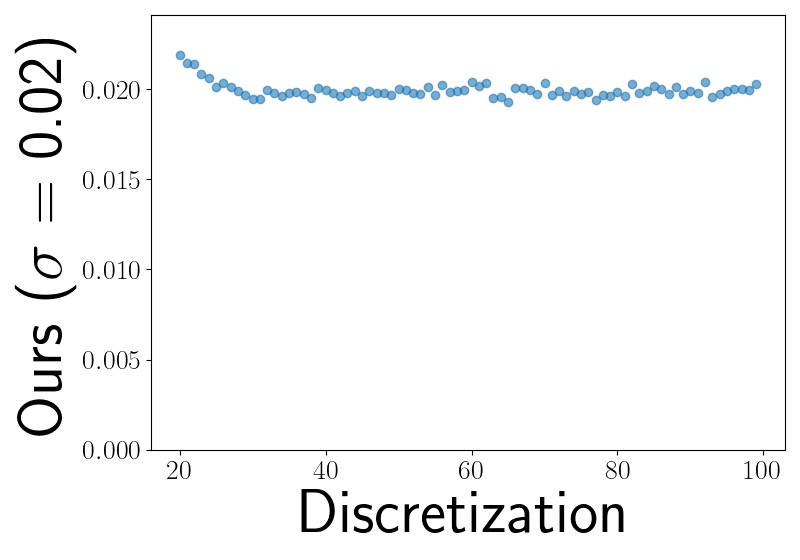}
        \end{subfigure}
                \hspace{-2mm}
        \begin{subfigure}[t]{0.2\linewidth}
            \centering
        \includegraphics[width=\linewidth]{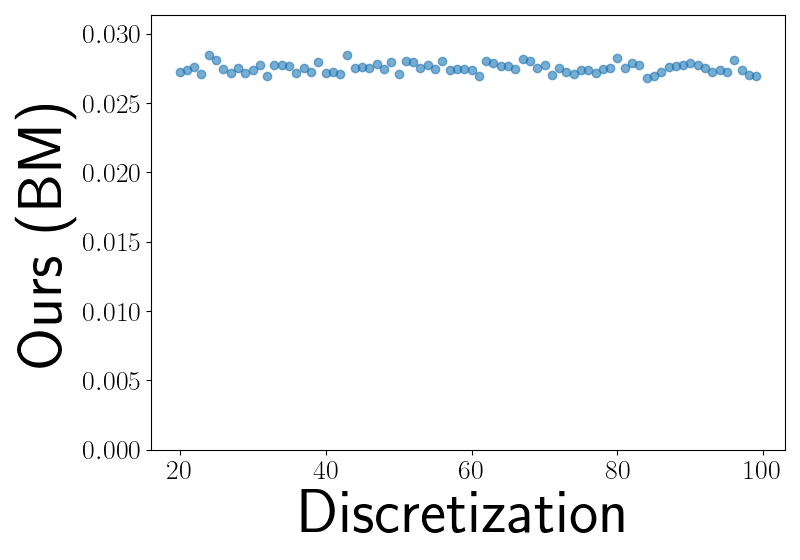}
        \end{subfigure}
        %\caption*{Central caption for row 1. This caption describes the collective content of the four subfigures in this row.}
        \label{fig:row1_l2_sliced}
    \end{minipage}
    \begin{minipage}[t]{1\linewidth}
        \centering
        \begin{subfigure}[t]{0.2\linewidth}
            \centering
        \includegraphics[width=\linewidth]{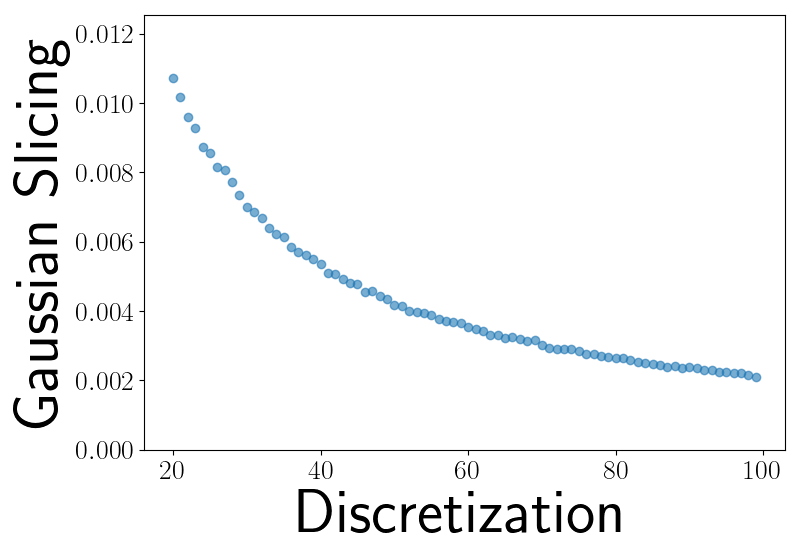}
        \end{subfigure}
        \hspace{-2mm}
        \begin{subfigure}[t]{0.2\linewidth}
            \centering
        \includegraphics[width=\linewidth]{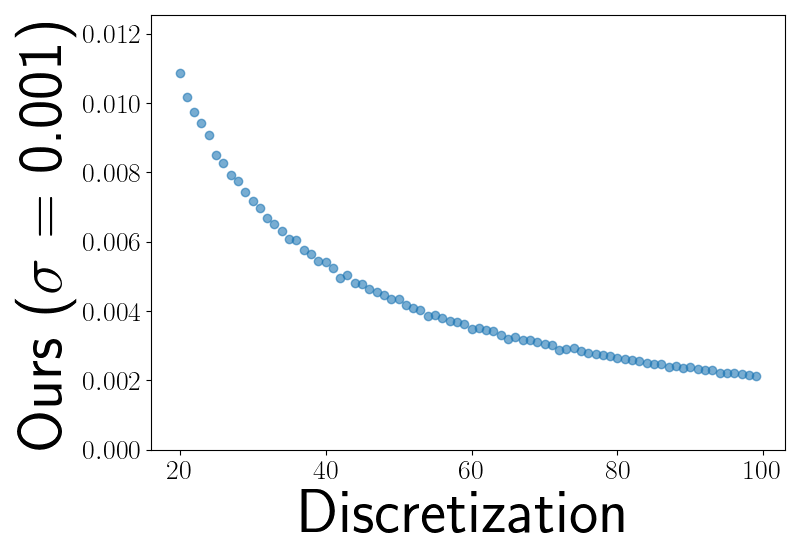}
        \end{subfigure}
        \hspace{-2mm}
        \begin{subfigure}[t]{0.2\linewidth}
            \centering
        \includegraphics[width=\linewidth]{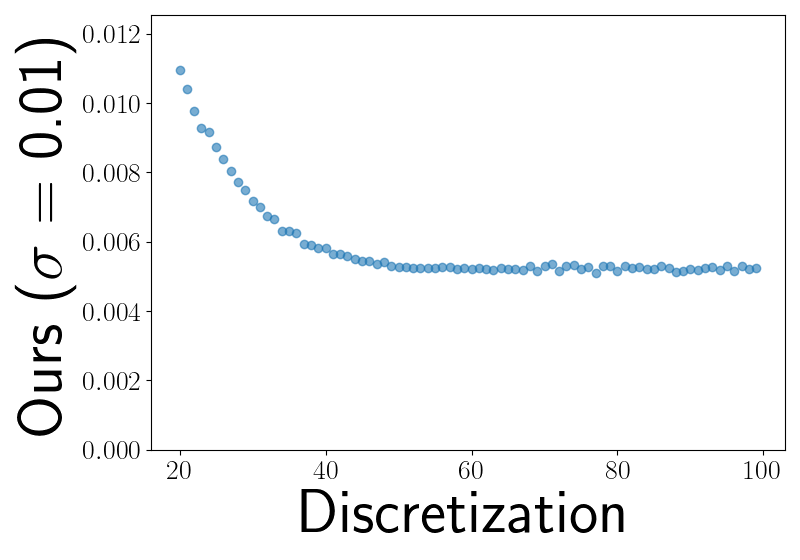}
        \end{subfigure}
        \hspace{-2mm}
        \begin{subfigure}[t]{0.2\linewidth}
            \centering
        \includegraphics[width=\linewidth]{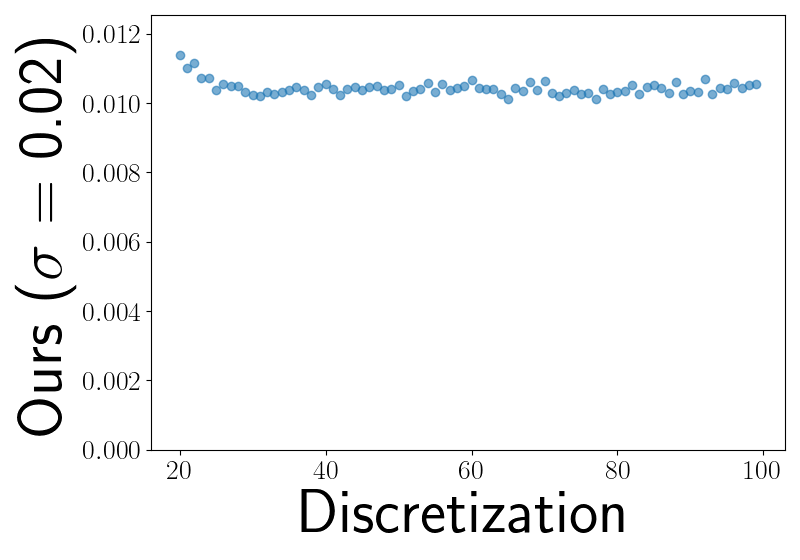}
        \end{subfigure}
                \hspace{-2mm}
        \begin{subfigure}[t]{0.2\linewidth}
            \centering
        \includegraphics[width=\linewidth]{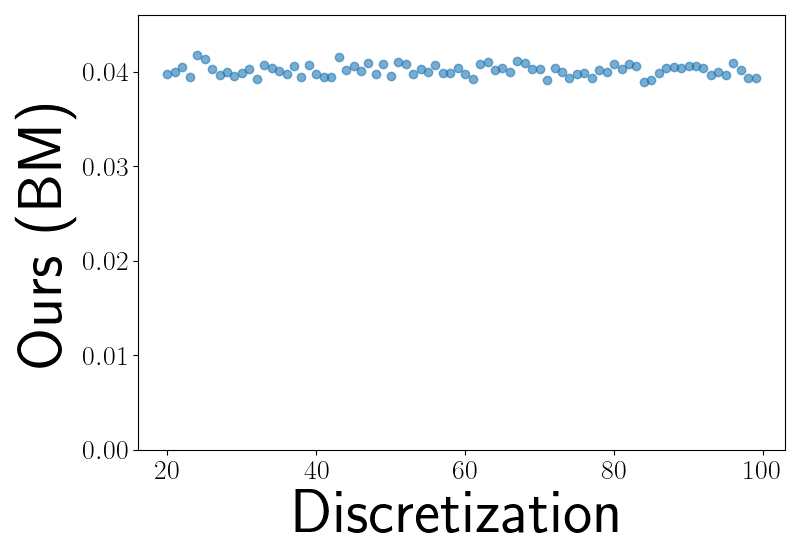}
        \end{subfigure}
        %\caption*{Central caption for row 2. This caption describes the collective content of the four subfigures in this row.}
        \label{fig:row2_l2_sliced}
    \end{minipage}
    \caption{Dependence between function discretization and SW estimates between $\mu^{(1)}$ and $\nu^{(1)}$ (upper row) and $\mu^{(2)}$ and $\nu^{(2)}$ (bottom row) for different kernel choice. %larger $\sigma$
    Smaller $\sigma$ values lead to less correlated Gaussian slicing directions for the $k_\sigma$ kernel, where we include the limit case of fully uncorrelated Gaussian slicing directions in the first column. 
    For such small $\sigma$, the numerical estimator depends heavily on the discretization.}
    \label{fig:l2_slicing_discretization}
\end{figure*}
% \begin{figure}[ht]
%   \centering
%   % First subplot
%     \centering
%     \begin{subfigure}{.4\textwidth}
%     \centering
%     \includegraphics[width=\linewidth]{images/L2_Slicing_Viz/ex1_sliced_bm.png}
%     \caption*{First empirical measure pair.}
%     \end{subfigure}
%     \hfill
%         \begin{subfigure}{.4\textwidth}
%     \centering
%     \includegraphics[width=\linewidth]{images/L2_Slicing_Viz/ex2_sliced_bm.png}
%     \caption*{Second empirical measure pair.}
%     \end{subfigure}
%     \caption{Plots of the support functions of the two empirical measure pairs from Section~\ref{subsection:slicing_functions}.}
%     \label{fig:plots_in_L2_brownian}
% \end{figure}

\clearpage
\subsection{\revise{Extension of Section~\ref{subsection:gromov_shapes} on Shape Classification}}
\label{subsection:supp_gromov_shapes}
\revise{
\subsubsection{\revise{MNIST-2000}}
\label{app:mnist}

MNIST-2000 used in Section~\ref{subsection:gromov_shapes}
is a synthetic dataset,
which we build based on the training samples of MNIST.
For each of the five digits 0--4, 
we first compute the mean image. 
From each digit,
we generate 20 empirical point clouds consisting each of 2000 points. 
For this, we sample the mean image proportionally to pixel intensity.
Afterwards, we disturb the point clouds by Gaussian noise
and apply a random rotation.
The points clouds are equipped with Euclidean distance,
which are represented by matrices.
In total, 
this yields 20 almost isometric shapes per digit.
We show one sample per class in Figure~\ref{fig:mnist-2000}.
\begin{figure}[h]
    \centering
    \includegraphics[width=\linewidth]{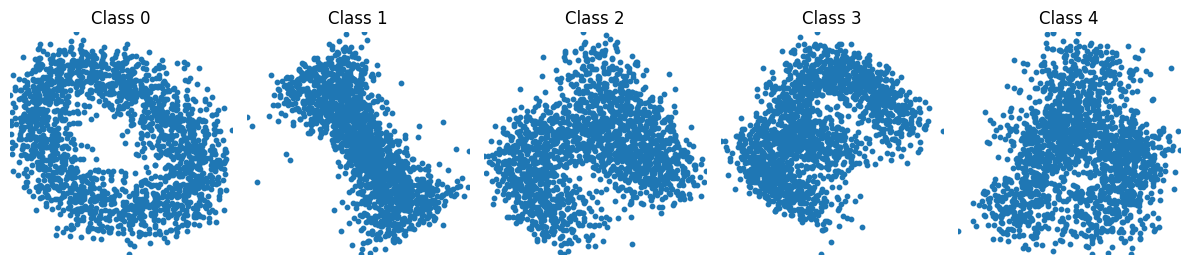}
    \caption{\revise{MNIST-2000 Samples.}}
    \label{fig:mnist-2000}
\end{figure}
}
\subsubsection{\revise{Stability Analysis of Classification Pipeline}}
\label{app:knn_shapes_ablation}
\revise{The shape classification experiment presented in Table~\ref{tab:knn_shapes} depends on a random initialization determined by a random seed 
and on the choice of the $k$-nearest-neighbor ($k$-NN) algorithm.
Since each dataset includes only about 100 shapes, cross-validation offers too little statistical support for selecting $k$. 
For this reason, we fix $k$ a priori.
We perform an additional experiment to demonstrate the impact of $k$ and the random seed.

For our main experiment, we choose $k=3$ instead of $k=1$ to mitigate the impact of small outliers. Additionally, we aim for a small $k$ due to the limited number of samples per class, e.g., 10 samples per class for the `2D Shapes' dataset.
To illustrate the impact of the choice of $k$ and the random initialization,
we rerun our classification pipeline for the `2D Shapes' and the `Animals' dataset with a different initialization and report the resulting accuracies for $k=1, 3, 5$. The results are presented in Table~\ref{tab:knn_ablation} and are in line with those presented in Table~\ref{tab:knn_shapes} in the main text. 
We observe a drop in performance for $k=5$, presumably due to limited data per class.
}
\begin{table}[h]
\centering
\scriptsize
\begin{subtable}[t]{0.48\textwidth}
\centering
\begin{tabular}{lccc}
\revise{k} & \revise{1-NN Acc. (\%)} & \revise{3-NN Acc. (\%)} & \revise{5-NN Acc. (\%)} \\
\hline
\hline
\revise{SQW (Ours)} & \revise{$99.9 \pm 0.6$}  & \revise{$99.4 \pm 1.3$}  & \revise{$98.6 \pm 2.1$} \\
\hline
\revise{TLB}             & \revise{$100.0 \pm 0.1$} & \revise{$100.0 \pm 0.3$} & \revise{$99.9 \pm 0.4$} \\
\revise{STLB}     & \revise{$99.8 \pm 0.6$}  & \revise{$99.4 \pm 1.3$}  & \revise{$98.3 \pm 2.2$} \\
\revise{AE}          & \revise{$99.9 \pm 0.4$}  & \revise{$99.7 \pm 1.0$}  & \revise{$99.0 \pm 1.9$} \\
\revise{GW}              & \revise{$100.0 \pm 0.2$} & \revise{$99.8 \pm 0.6$}  & \revise{$98.9 \pm 1.1$} \\
\end{tabular}
\caption{\revise{2D Shapes}}
\end{subtable}
\hfill
\begin{subtable}[t]{0.48\textwidth}
\centering
\begin{tabular}{ccc}
 \revise{1-NN Acc. (\%)} & \revise{3-NN Acc. (\%)} & \revise{5-NN Acc. (\%)} \\
\hline
\hline
\revise{$99.5 \pm 0.9$}  & \revise{$99.1 \pm 1.3$}  & \revise{$68.5 \pm 2.1$} \\ % functional_sgw
\hline
\revise{$100.0 \pm 0.0$} & \revise{$100.0 \pm 0.0$} & \revise{$70.9 \pm 0.0$} \\ % tlb
\revise{$99.8 \pm 0.7$}  & \revise{$99.3 \pm 1.1$}  & \revise{$69.2 \pm 1.7$} \\ % sgw_numint
\revise{$98.9 \pm 1.4$}  & \revise{$97.8 \pm 1.9$}  & \revise{$67.6 \pm 2.3$} \\ % energy
\revise{$100.0 \pm 0.0$} & \revise{$100.0 \pm 0.0$} & \revise{$70.9 \pm 0.0$} \\ % gw
\end{tabular}
\caption{\revise{Animals}}
\end{subtable}
\caption{\revise{Classification accuracies for the nearest-neighbor algorithm
(`$k$-NN Acc.') for varying neighborhood definitions $k$ and a random seed that is different from the one used in Table~\ref{tab:knn_shapes}.}}
\label{tab:knn_ablation}
\end{table}

\clearpage
\subsection{\revise{Extension of Section~\ref{subsection:otdd} on Dataset Comparison}}
\label{sec:otdd_supp}
\subsubsection{\revise{Relation between Classification Accuracy and Dataset Distances}}
\label{app:transfer_learning_otdd_mnist}
\revise{Originally, the OTDD and the s-OTDD have been introduced to estimate the difficulty of transfer learning.
Therefore, we extend the experiment presented in Figure~\ref{fig:dataset_distance_correlation} to consider the connection between the different dataset distances and classifier accuracy. 
As a simple experiment, we consider the accuracy of a 5-nearest-neighbor (5-NN) MNIST classifier as our target.
In particular, for each of our 100 MNIST splits, we average the two accuracies of classifying the first split based on the second split and vice versa. Now, this averaged accuracy is our target quantity. 

Again, we estimate the correlation between this target accuracy and various dataset distances. Here, we would expect a negative correlation since the dataset splits that are further away from each other should lead to worse training results. Beyond the previously reported OTDD, s-OTDD ($S=10^5$), and our DSW ($S=10^5$, 1000 outer projections, 10 inner ones), we additionally report results for the Gaussian OTDD, the s-OTDD with $S=10^4)$, our DSW with $S=1000$ (100 outer projections, 10 inner ones). Note that the Gaussian OTDD approximates all inner pairwise Wasserstein distances via a Gaussian approximation \citep{alvarez2020geometric}. Essentially, this distance reduces to the mixture Wasserstein distance between Gaussian mixtures \citep{delon2020wasserstein}, since we compute all distances without feature costs, i.e, on $\Prob(\Prob(\R^d))$ instead of $\Prob(\mathcal{Y} \times \Prob(\R^d))$.

Looking at the results in Figure~\ref{fig:dataset_distance_correlation_mnist_classifcation}, 
we observe the strongest correlation for the OTDD and the Gaussian OTDD with values of $\approx -0.5$. Regarding the sliced metrics, we observe a stronger correlation for our DSW ($\approx -0.45$) compared to the s-OTDD ($\approx -0.35$). As for the number of projections, both distances seem to be stable with respect to the number of random projections $S$, and we observe only small differences between $S=10^4$ and $S=10^5$.

Generally, we observe a clearly negative correlation of approximately $-0.5$ across metrics. While these correlations are overall weaker than in other related experiments \citep{alvarez2020geometric,nguyen2025sotdd}, 
 we attribute this to the small variability of the 5-NN classifier accuracy (ranging from 88\% to 94\%) and the omission of feature costs. Thus, we hypothesize that the inclusion of the feature cost via hybrid hierarchical slicing \citep{nguyen2024hierarchical} would further strengthen these correlations. 
}
\begin{figure}[ht]
  \centering
  % First subplot
  \begin{subfigure}[b]{0.335\textwidth}
    \centering
    \begin{subfigure}{\textwidth}
    \centering
    \includegraphics[width=\linewidth]{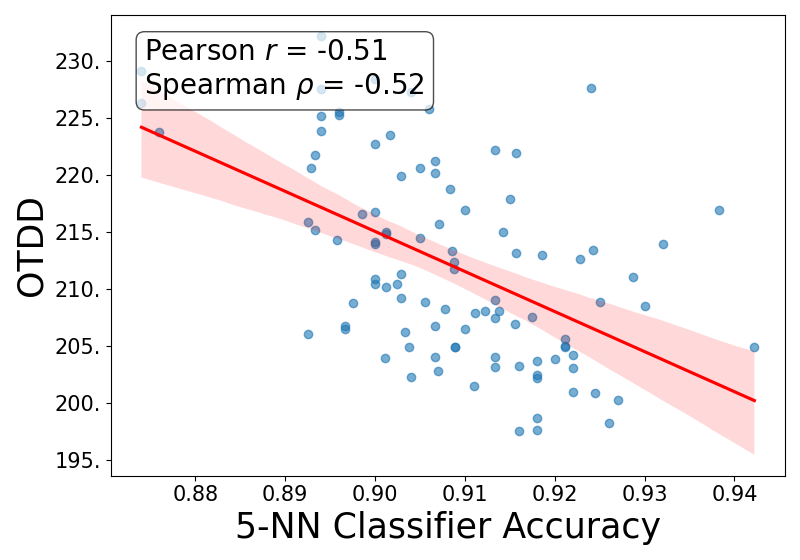}
    \end{subfigure}
    \vspace{-5mm}
    \caption{OTDD}
        \label{subfig:otdd_mnist_acc}
    \hfill
    \begin{subfigure}{\textwidth}
    \centering
    \includegraphics[width=\linewidth]{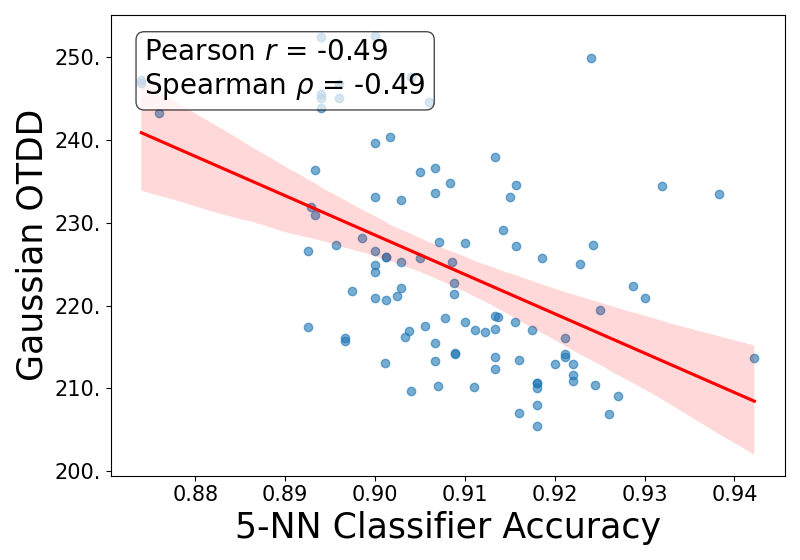}
    \end{subfigure}
    \vspace{-5mm}
    \caption{Gaussian OTDD}
    \label{subfig:gotdd_mnist_acc}
  \end{subfigure}
  \hspace{-2mm}
  % Second subplot
    \begin{subfigure}[b]{0.335\textwidth}
    \centering
    \begin{subfigure}{\textwidth}
    \centering
    \includegraphics[width=\linewidth]{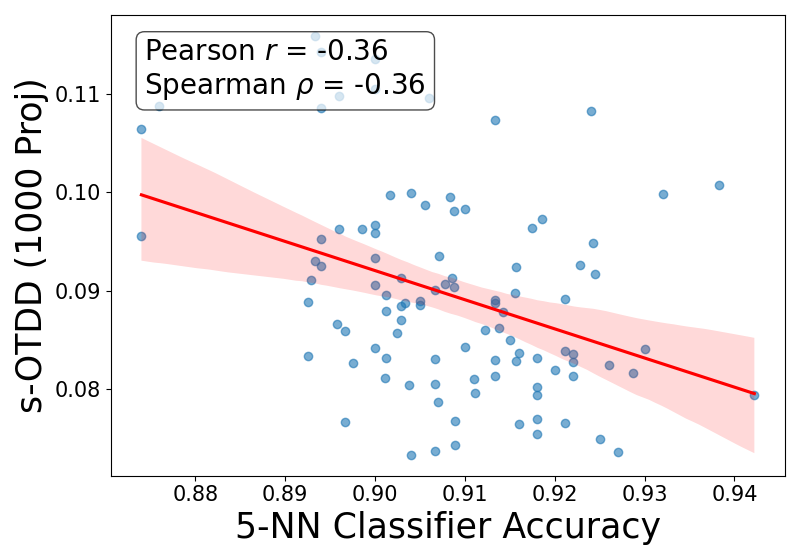}
        \vspace{-5mm}
        \caption{s-OTDD ($S=10^4$)}
    \label{subfig:sotdd_10000_mnist}
    \end{subfigure}
    \hfill
    \begin{subfigure}{\textwidth}
    \centering
    \includegraphics[width=\linewidth]{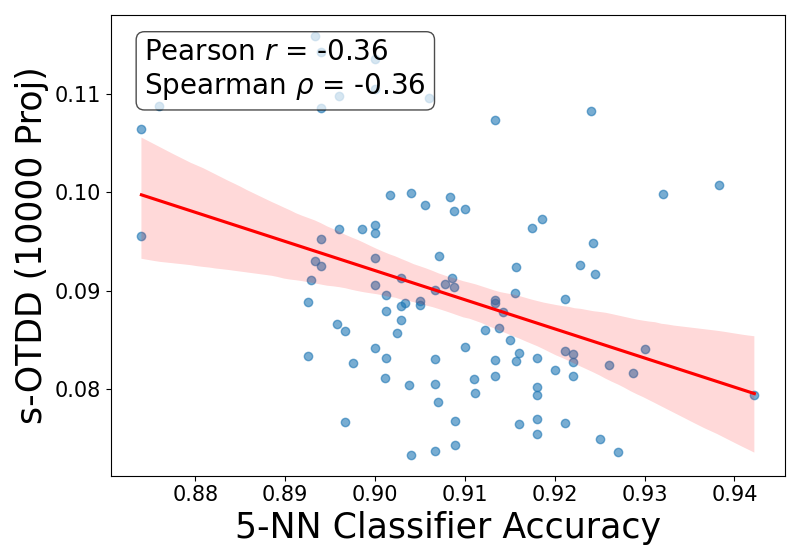}
    \end{subfigure}
    \vspace{-5mm}
    \caption{s-OTDD ($S=10^5$)}
    \label{subfig:sotdd_100000_mnist}
  \end{subfigure}
  \hspace{-2mm}
      \begin{subfigure}[b]{0.335\textwidth}
    \centering
    \begin{subfigure}{\textwidth}
    \centering
    \includegraphics[width=\linewidth]{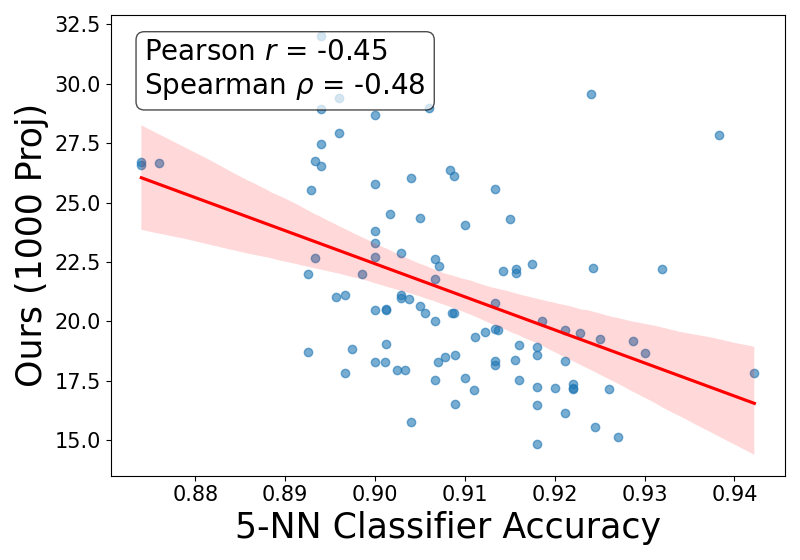}
        \vspace{-5mm}
            \caption{Ours ($S=10^4$)}
    \label{subfig:our_1000_mnist}
    \end{subfigure}
    \hfill
    \begin{subfigure}{\textwidth}
    \centering
    \includegraphics[width=\linewidth]{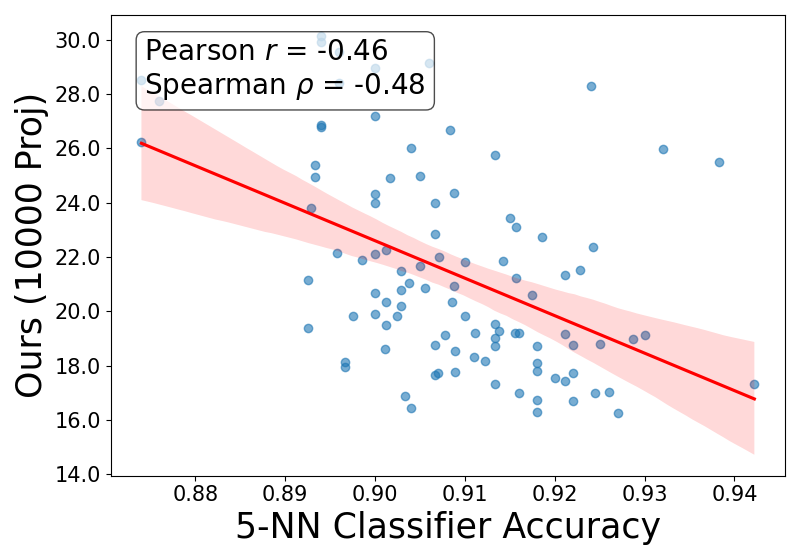}
    \end{subfigure}
    \vspace{-5mm}
            \caption{Ours ($S=10^5$)}
    \label{subfig:our_10000_mnist}
  \end{subfigure}
  \caption{\revise{Scatter plots and correlations ($\uparrow$) between the accuracy of a 5-NN MNIST classifier and the 
  OTDD (\ref{subfig:otdd_mnist_acc}), 
  the Gaussian-OTDD (\ref{subfig:gotdd_mnist_acc}), 
    the s-OTDD with $S=10^4, 10^5$ 
      (\ref{subfig:sotdd_10000_mnist}, \ref{subfig:sotdd_100000_mnist}) 
    and  DSW (`Ours') with $S=10^4, 10^5$ 
       (\ref{subfig:our_1000_mnist}, \ref{subfig:our_10000_mnist}). All distances are computed without a feature cost on the labels.
  }}
  \label{fig:dataset_distance_correlation_mnist_classifcation}
\end{figure}
\revise{
\subsubsection{\revise{Stability Analysis of Slicing Parameters}}
\label{app:otdd_ablation_mnist}
We investigate the impact of the projection number $S$, grid size $R$ and the kernel parameter $\sigma$ by considering the correlation between DSW and OTDD on MNIST displayed in Figure~\ref{subfig:MNIST}. As the projection number $S$ is a product of the number of outer and inner projections, see Section~\ref{subsection:implementation}, we fix the number of inner projections to $10$ and only vary the number of outer projections.
In particular, we simulate the DSW distance for all combinations
of $S=10^3, 10^4, 10^5$, $R=10, 100, 200$, and $\sigma=0.01, 0.1$. 

For each simulation, we compute the Pearson and Spearman correlation coefficients and present the results in Table~\ref{tab:mnist_ablation}.
All coefficients are approximately in the range from $0.9$ to $0.95$.
We see that the number of slices $S$ is by far the most important parameter, whereas the impact of the grid size $R$ and the kernel parameter $\sigma$ is more subtle. In particular, increasing $S$ from $10^3$ to $10^4$ increases the correlation significantly (from $0.9$ to $0.94$), whereas the increase from $10^4$ to $10^5$ only has a slight impact (raising the correlation from $0.94$ to $0.95$). 
}
\begin{table}[h]
\centering
\scriptsize
\begin{tabular}{c c c c c}
\hline
\revise{$S$} & \revise{$R$} & \revise{$\sigma$} & \revise{Pearson} & \revise{Spearman} \\
\hline
\revise{1000}   & \revise{10}  & \revise{0.1}  & \revise{0.9182} & \revise{0.9187} \\
\revise{1000}   & \revise{10}  & \revise{0.01} & \revise{0.8966} & \revise{0.8908} \\
\revise{1000}   & \revise{100} & \revise{0.1}  & \revise{0.9365} & \revise{0.9385} \\
\revise{1000}   & \revise{100} & \revise{0.01} & \revise{0.9270} & \revise{0.9214} \\
\revise{1000}   & \revise{200} & \revise{0.1}  & \revise{0.9265} & \revise{0.9283} \\
\revise{1000}   & \revise{200} & \revise{0.01} & \revise{0.9063} & \revise{0.9133} \\
\revise{10000}  & \revise{10}  & \revise{0.1}  & \revise{0.9422} & \revise{0.9369} \\
\revise{10000}  & \revise{10}  & \revise{0.01} & \revise{0.9453} & \revise{0.9424} \\
\revise{10000}  & \revise{100} & \revise{0.1}  & \revise{0.9400} & \revise{0.9402} \\
\revise{10000}  & \revise{100} & \revise{0.01} & \revise{0.9513} & \revise{0.9474} \\
\revise{10000}  & \revise{200} & \revise{0.1}  & \revise{0.9424} & \revise{0.9392} \\
\revise{10000}  & \revise{200} & \revise{0.01} & \revise{0.9536} & \revise{0.9534} \\
\revise{100000} & \revise{10}  & \revise{0.1}  & \revise{0.9463} & \revise{0.9451} \\
\revise{100000} & \revise{10}  & \revise{0.01} & \revise{0.9524} & \revise{0.9510} \\
\revise{100000} & \revise{100} & \revise{0.1}  & \revise{0.9444} & \revise{0.9438} \\
\revise{100000} & \revise{100} & \revise{0.01} & \revise{0.9508} & \revise{0.9494} \\
\revise{100000} & \revise{200} & \revise{0.1}  & \revise{0.9442} & \revise{0.9427} \\
\revise{100000} & \revise{200} & \revise{0.01} & \revise{0.9533} & \revise{0.9515} \\
\hline
\end{tabular}
\caption{\revise{Correlation between our DSW and OTDD (without feature cost) for various configurations on MNIST, extending the results in Figure~\ref{subfig:MNIST}}.}
\label{tab:mnist_ablation}
\end{table}

\clearpage
\subsection{Extension of Section~\ref{subsection:point_clouds} on Point Cloud Comparison}
\label{subsec:point_cloud_supp}
We extend the point cloud experiments from Section~\ref{subsection:point_clouds} 
by adding two experiments as an analysis of the projection number and runtime. 
In particular, we vary only the number of `inner' or `outer' projections per experiment, see Section~\ref{subsection:implementation}.
In our point cloud experiments, the runtime hinges on the number of shapes ($N$ and $M$)
and the discretization of the shapes ($n$ and $m$).
For this analysis, we sample only from the `chair' class without Gaussian noise and $R=10$. All results are averaged over five runs.

For our first experiment, we set $n=m=50$ and vary only ${N~=~M=~10, 20, 30, 40, 50, 60, 70, 80, 90, 100}$. For each pair of sampled shape sets, 
we then compute the WoW distance and our DSW distance. To analyze the impact of the projection number, we calculate it with $S=100$ ($10$ outer, $10$ inner projections),
$S=1000$ ($10$ outer, $100$ inner p.), and
$S=5000$ ($10$ outer, $500$ inner p.). The results are visualized in Figure~\ref{fig:shapenum_runtime_experiment_inner}. 
Note that we observed a rather high variance for WoW runtime in this experiment, generally. As a result, the plotted WoW runtime estimates in Figure~\ref{fig:shapenum_runtime_experiment_inner} vary rather drastically.
Nevertheless, we observe a seemingly polynomial runtime increase for WoW in terms of the number of shapes $N=M$, 
whereas we only we observe a quasi-linear runtime increase for DSW in terms of $N=M$. 
As for the projection number, 
we observe a linear runtime increase in terms of $S$. Moreover, we observe a (small) reduction in the variance of the distance estimate for higher $S$.

For our second experiment, we set $N=M=10$ and vary only ${n~=~m=~100, 200, 300, 400, 500, 600, 700, 800, 900, 1000}$. Again, we compute WoW and DSW. For this experiment, we calculate DSW with $S=1000$ ($100$ outer, $10$ inner projections),
$S=10, 000$ ($100$ outer, $100$ inner p.),
and
$S=50, 000$ ($100$ outer, $500$ inner p.). The results are visualized in Figure~\ref{fig:point_runtime_experiment_outer}. We observe similar results as before, i.e., polynomial runtime increase for WoW and quasi-linear runtime increase for DSW.

\begin{figure}
  \centering
  % First subplot
  \begin{subfigure}[b]{0.45\textwidth}
    \centering
    \begin{subfigure}{\textwidth}
    \centering
    \includegraphics[width=\linewidth]{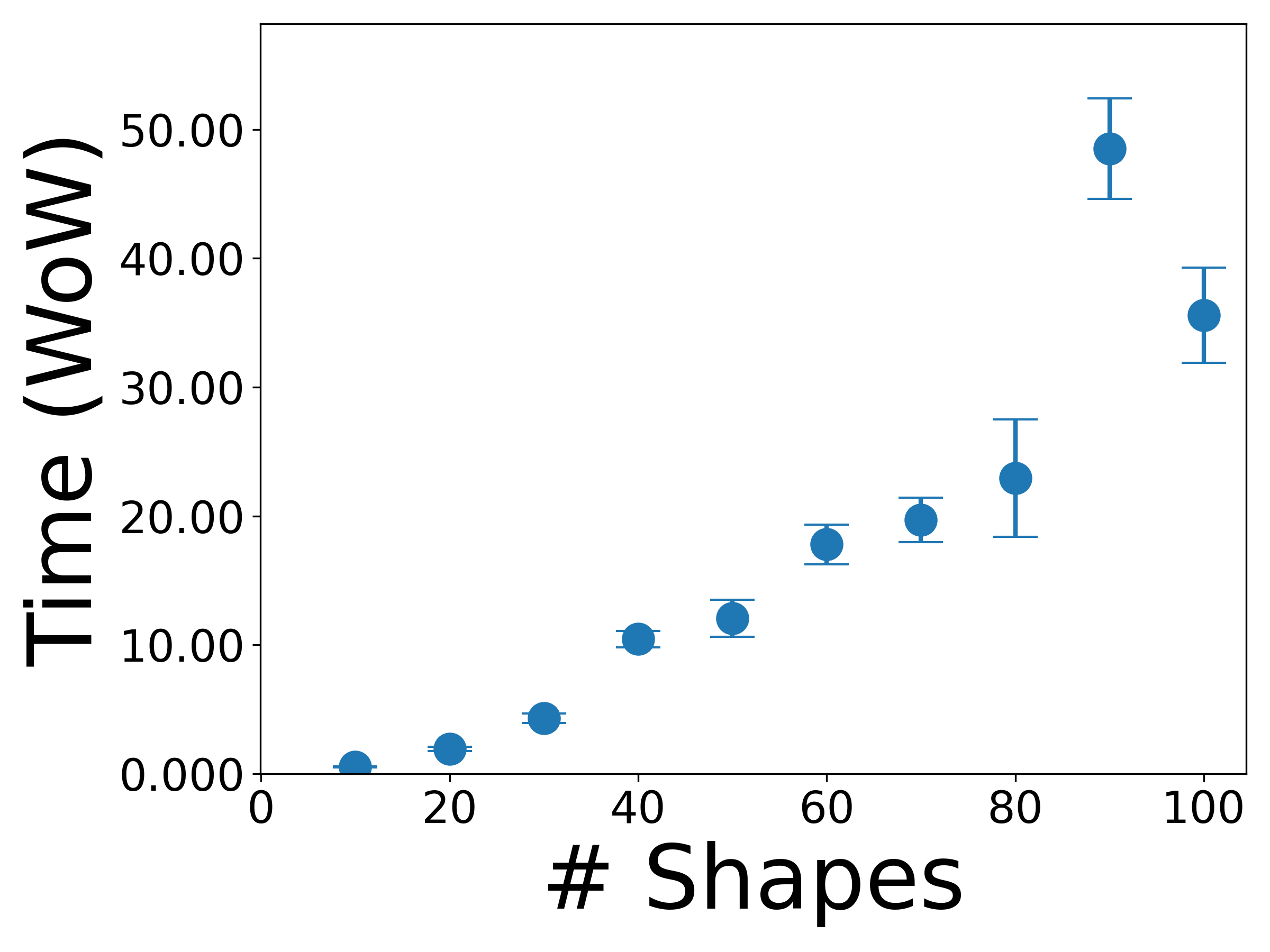}
    \end{subfigure}
    \hfill
        \begin{subfigure}{\textwidth}
    \centering
    \includegraphics[width=\linewidth]{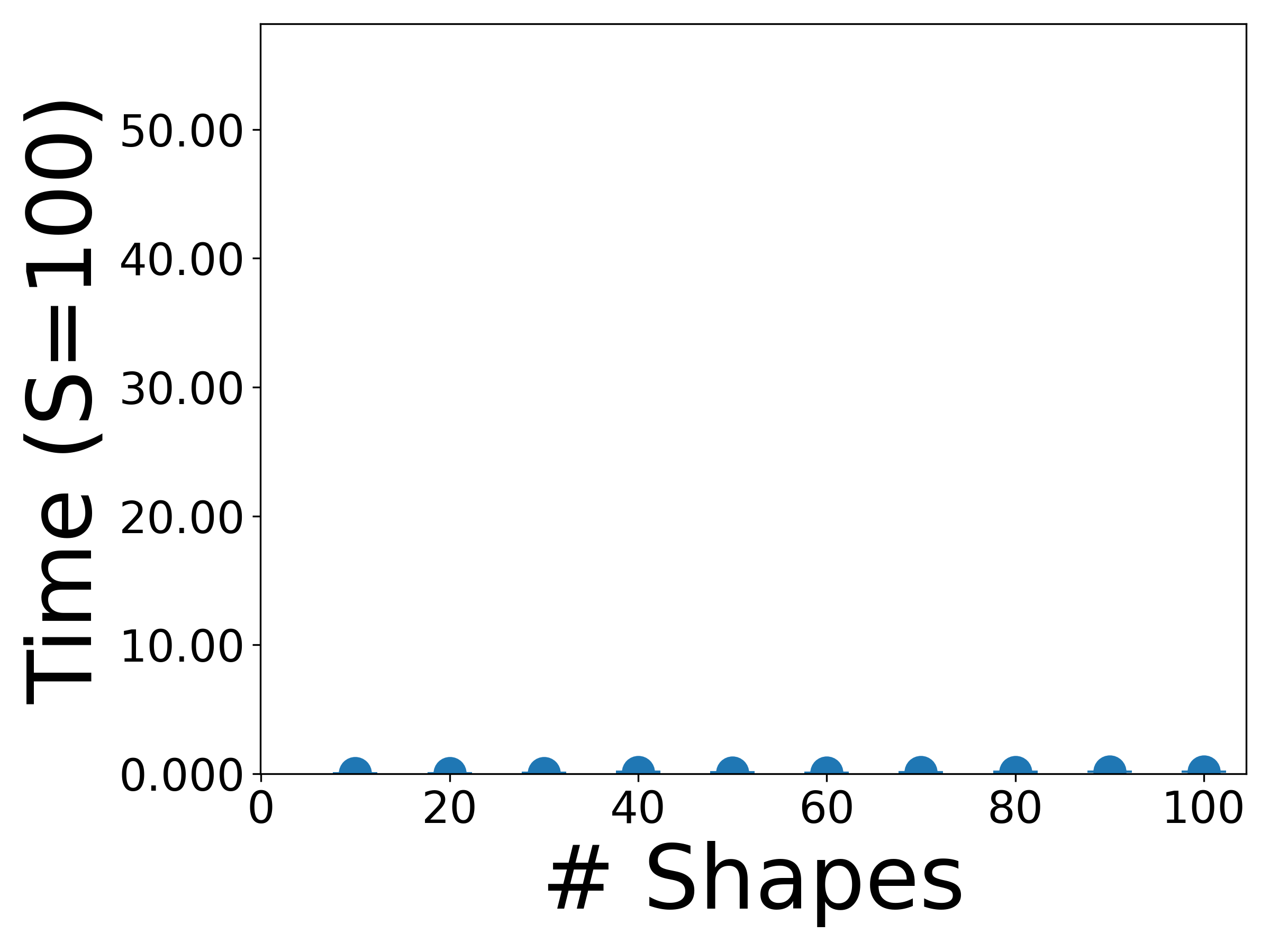}
    \end{subfigure}
    \hfill
        \begin{subfigure}{\textwidth}
    \centering
    \includegraphics[width=\linewidth]{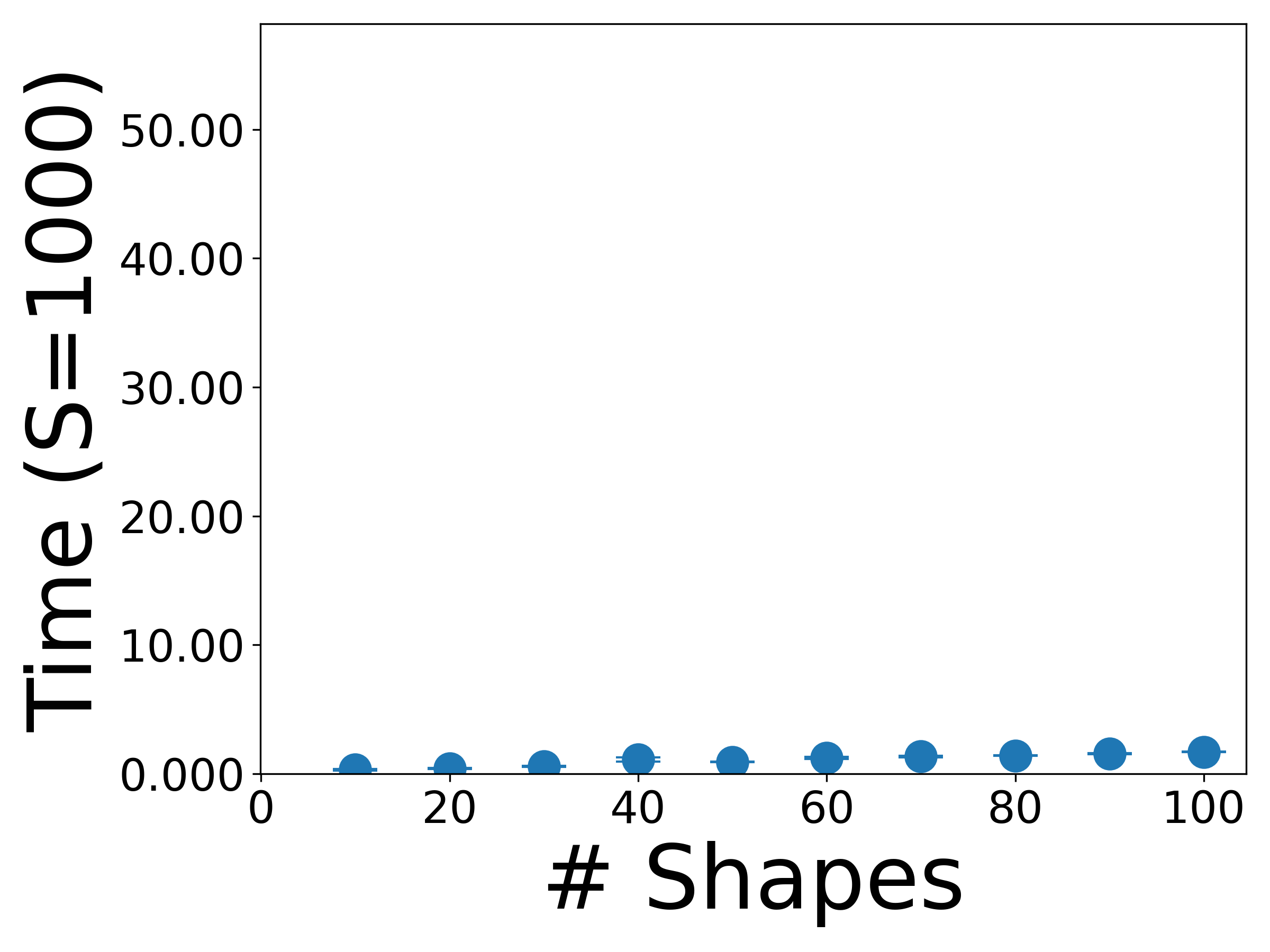}
    \end{subfigure}
    \hfill
    \begin{subfigure}{\textwidth}
    \centering
    \includegraphics[width=\linewidth]{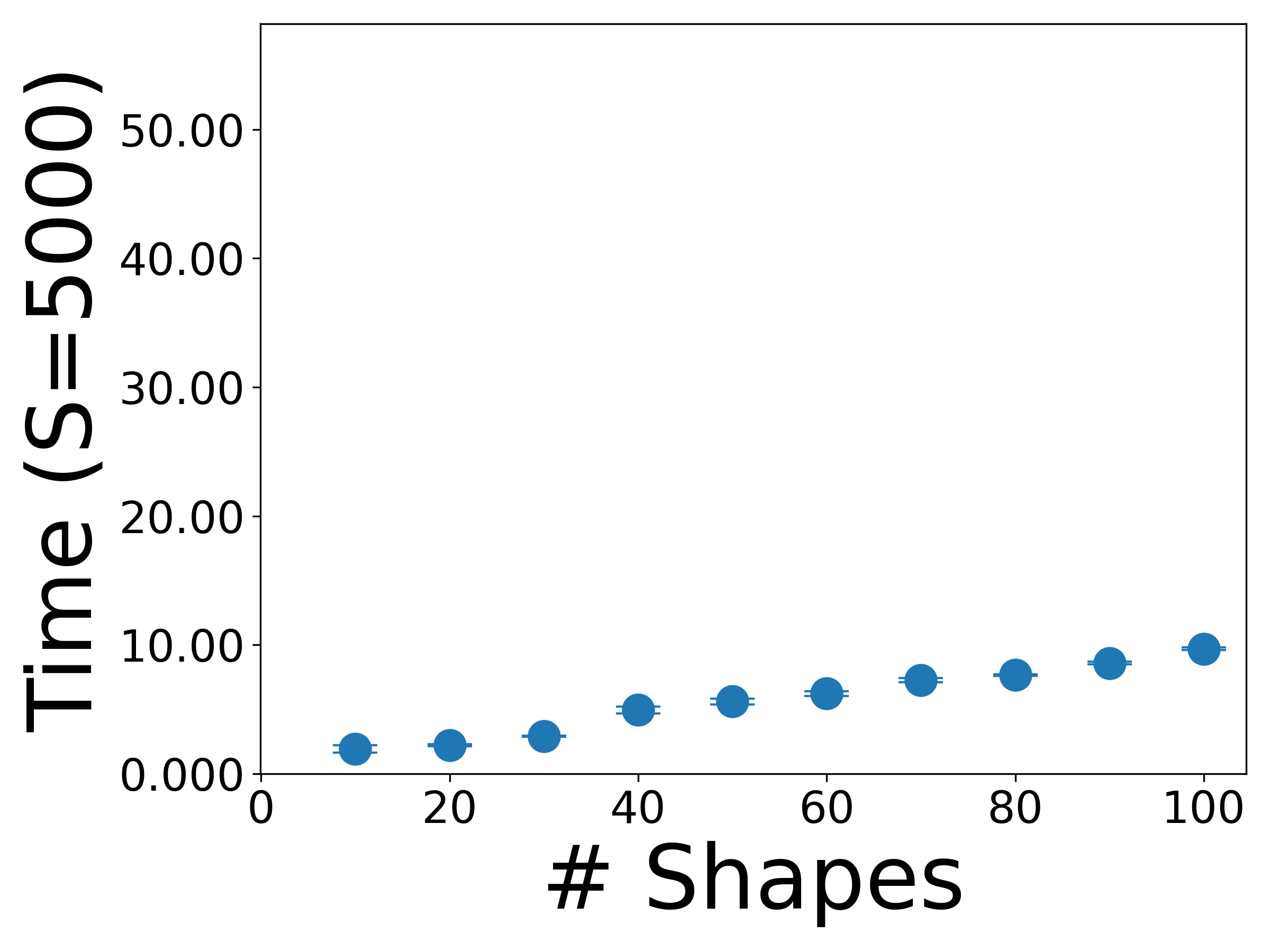}
    \end{subfigure}
    \vspace{-4mm}
    \caption*{Runtime in seconds.}
    \label{subfig:target_shapes_supp}
  \end{subfigure}
  \hfill
  % Second subplot
    \begin{subfigure}[b]{0.45\textwidth}
    \centering
    \begin{subfigure}{\textwidth}
    \centering
    \includegraphics[width=\linewidth]{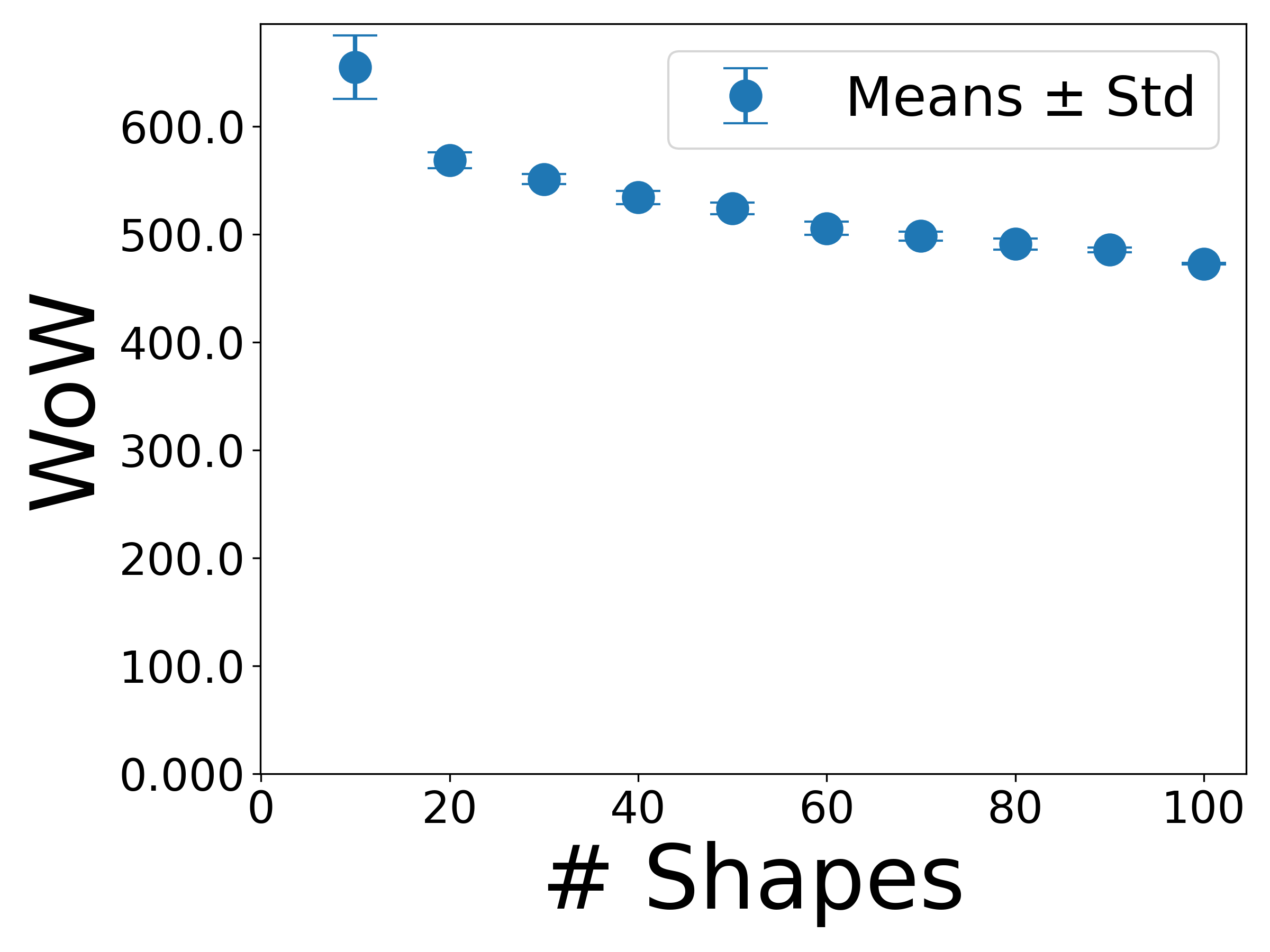}
    \end{subfigure}
    \hfill
    \begin{subfigure}{\textwidth}
    \centering
    \includegraphics[width=\linewidth]{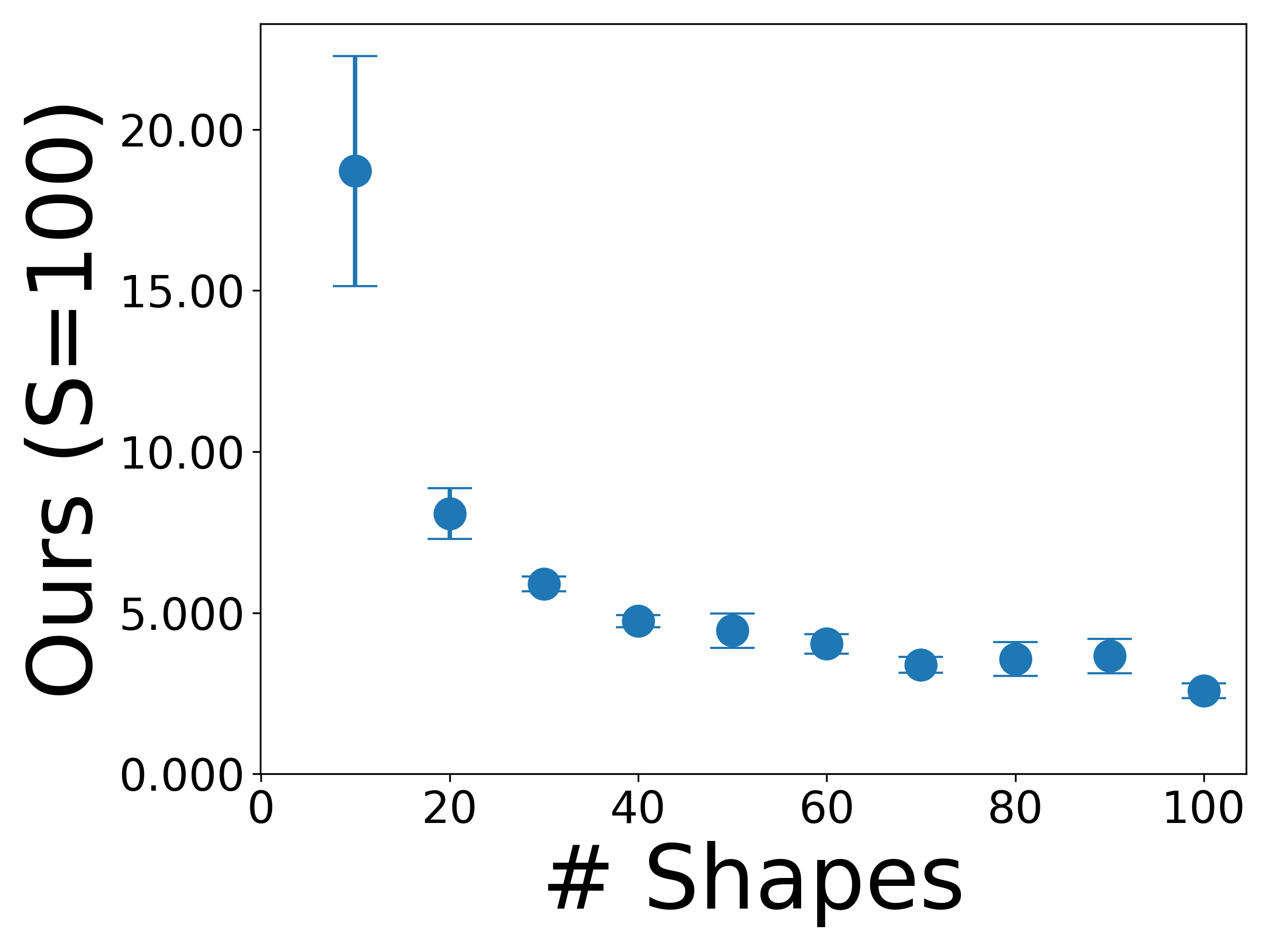}
    \end{subfigure}
     \hfill
    \begin{subfigure}{\textwidth}
    \centering
    \includegraphics[width=\linewidth]{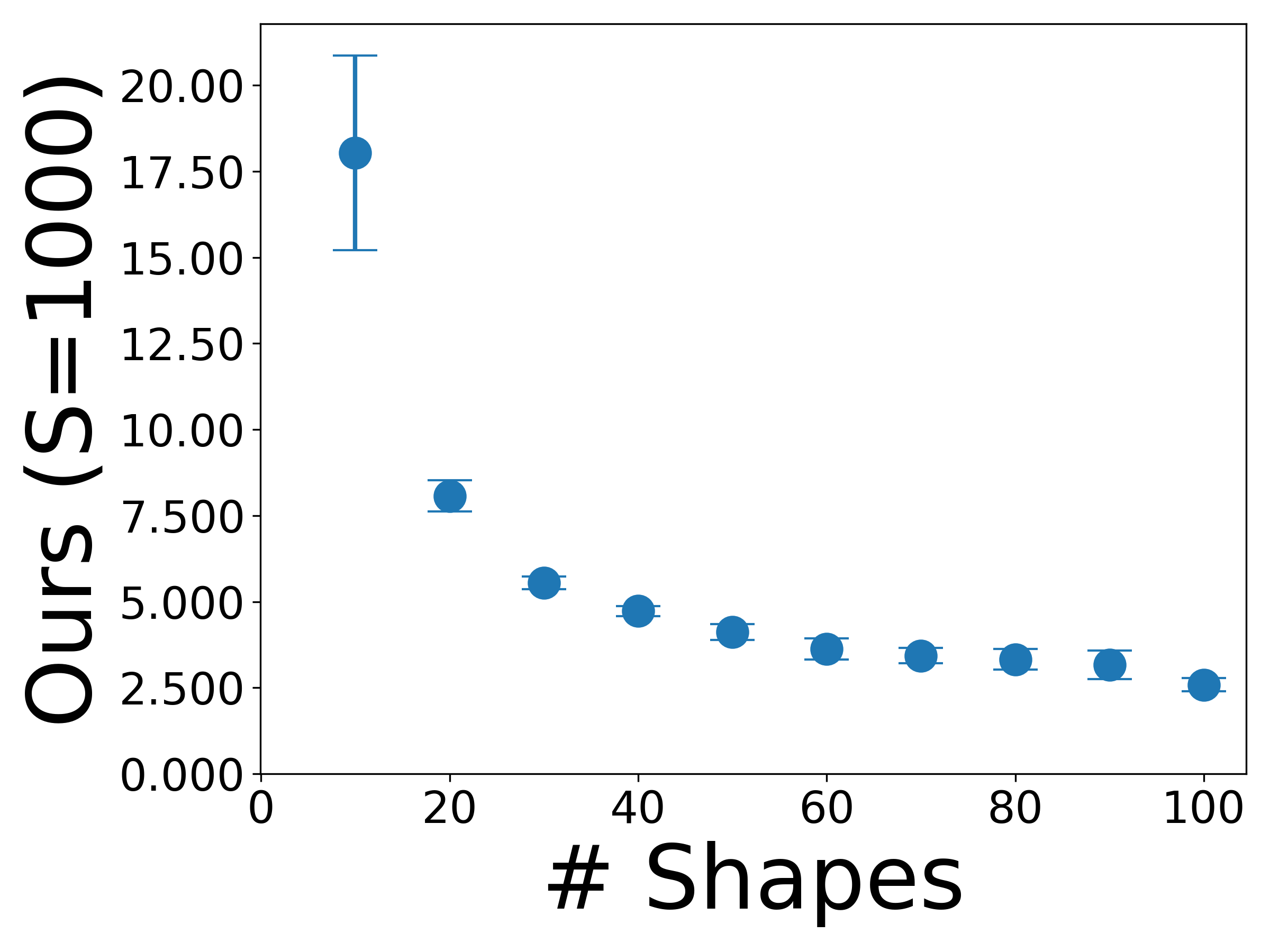}
    \end{subfigure}
     \hfill
    \begin{subfigure}{\textwidth}
    \centering
    \includegraphics[width=\linewidth]{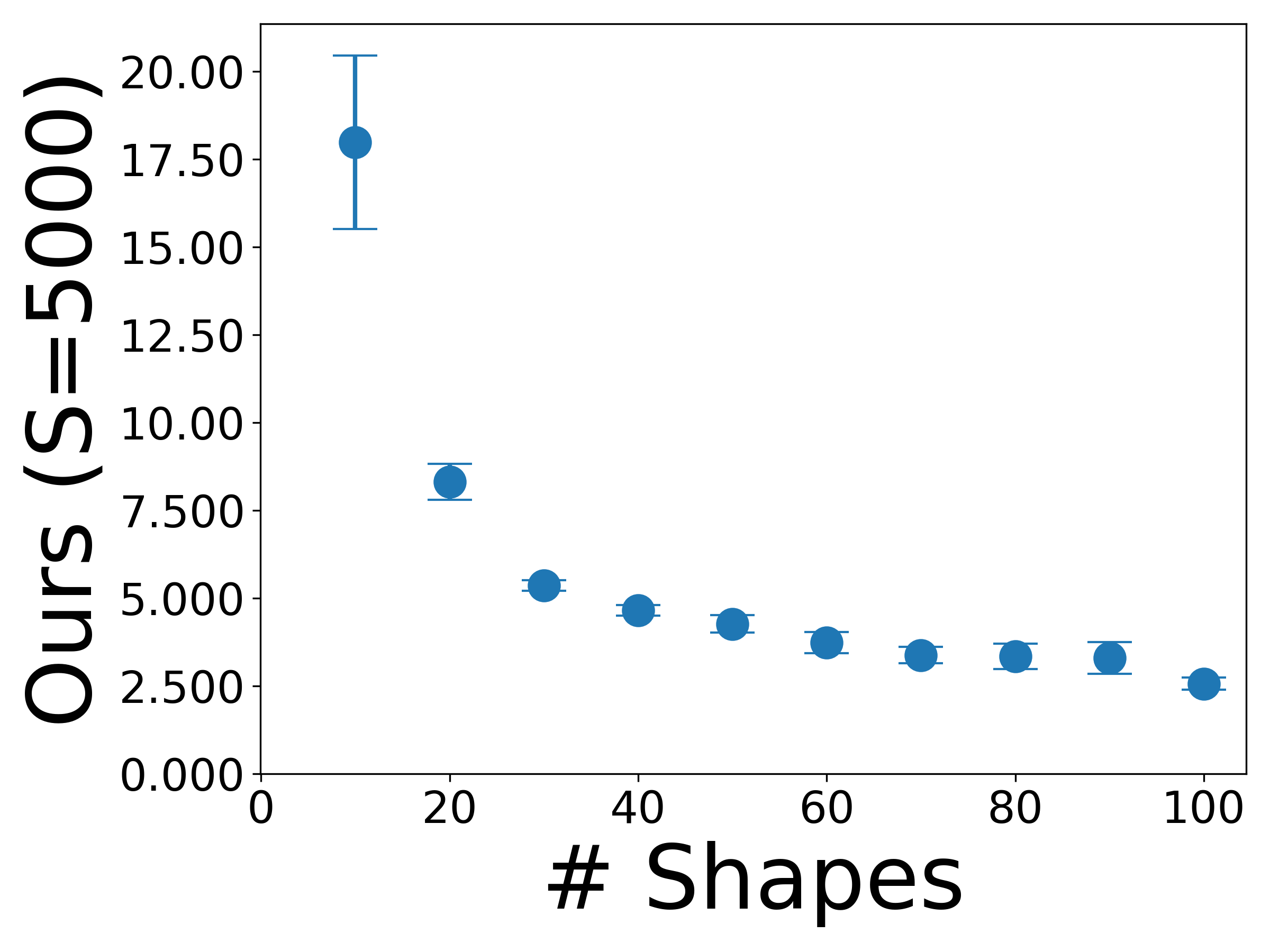}
    \end{subfigure}
    \vspace{-4mm}
    \caption*{Distance estimates.}
  \end{subfigure}
  \caption{Averaged WoW and DSW \revise{(`Ours')} estimates between sets of point clouds for $10$ to $100$ shapes and projection number $S=100$, $S=1,000$, $S=5,000$. 
  }
  \label{fig:shapenum_runtime_experiment_inner}
\end{figure}

\begin{figure}[ht]
  \centering
  % First subplot
  \begin{subfigure}[b]{0.45\textwidth}
    \centering
    \begin{subfigure}{\textwidth}
    \centering
    \includegraphics[width=\linewidth]{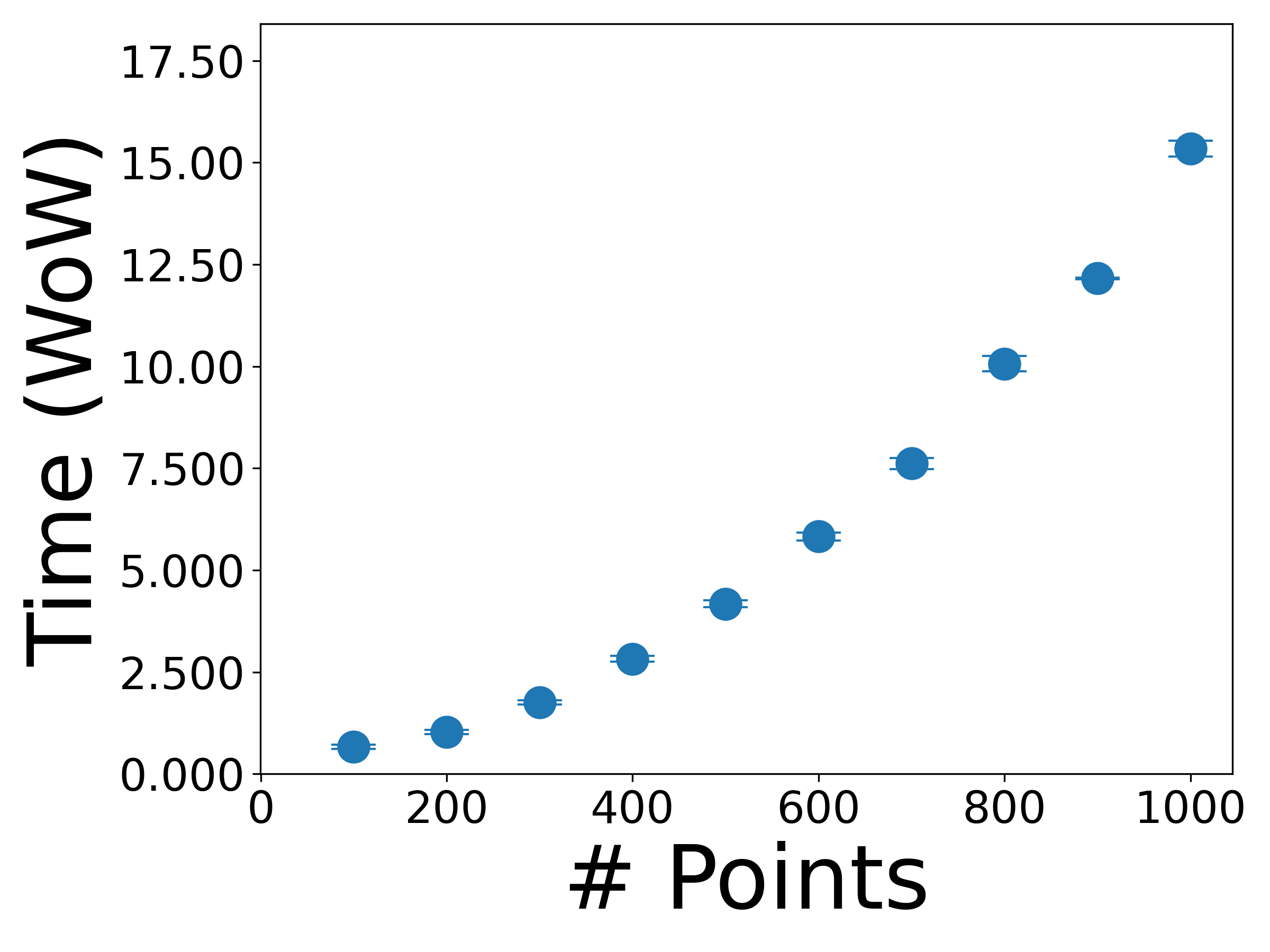}
    \end{subfigure}
    \hfill
        \begin{subfigure}{\textwidth}
    \centering
    \includegraphics[width=\linewidth]{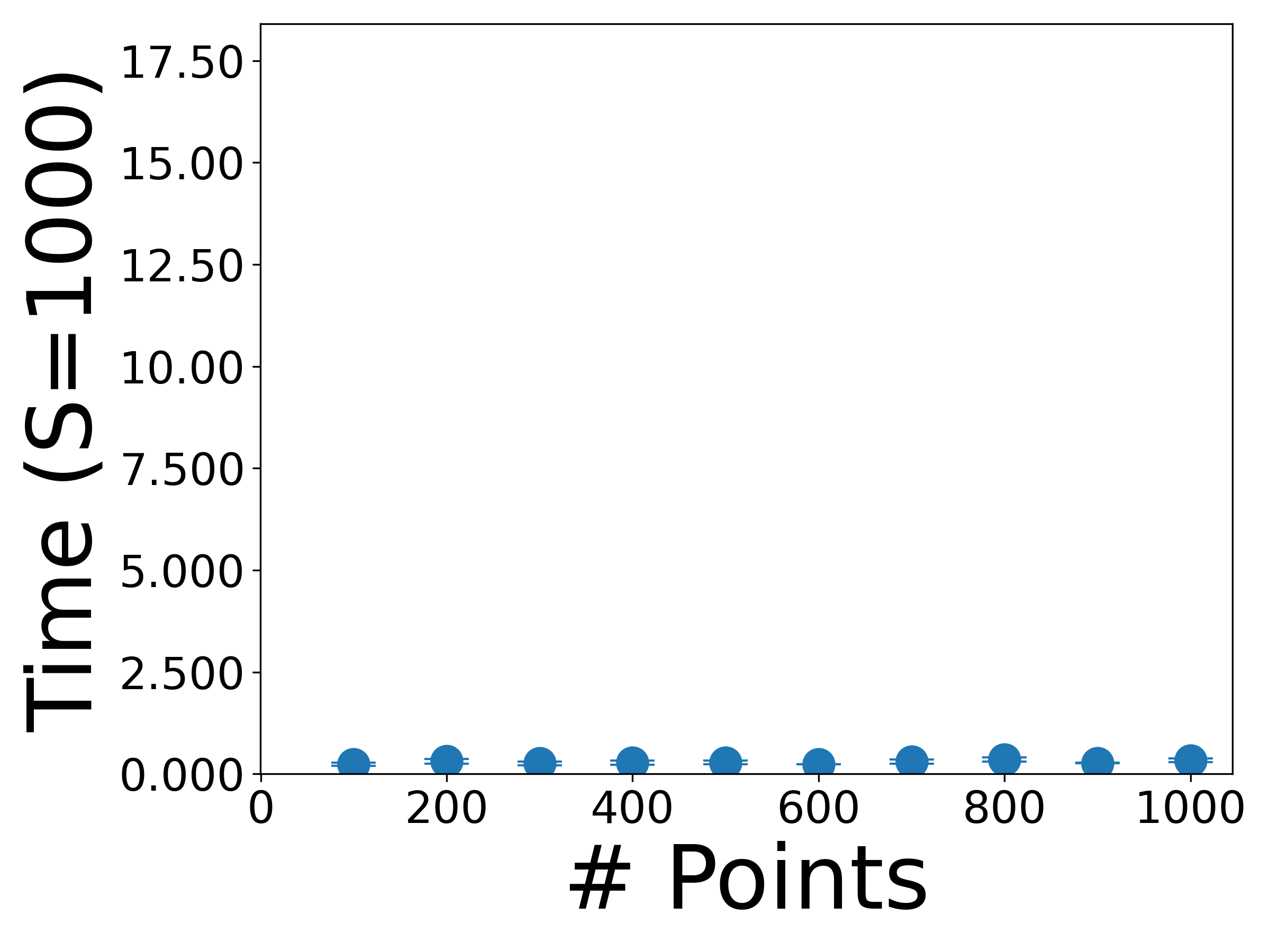}
    \end{subfigure}
    \hfill
        \begin{subfigure}{\textwidth}
    \centering
    \includegraphics[width=\linewidth]{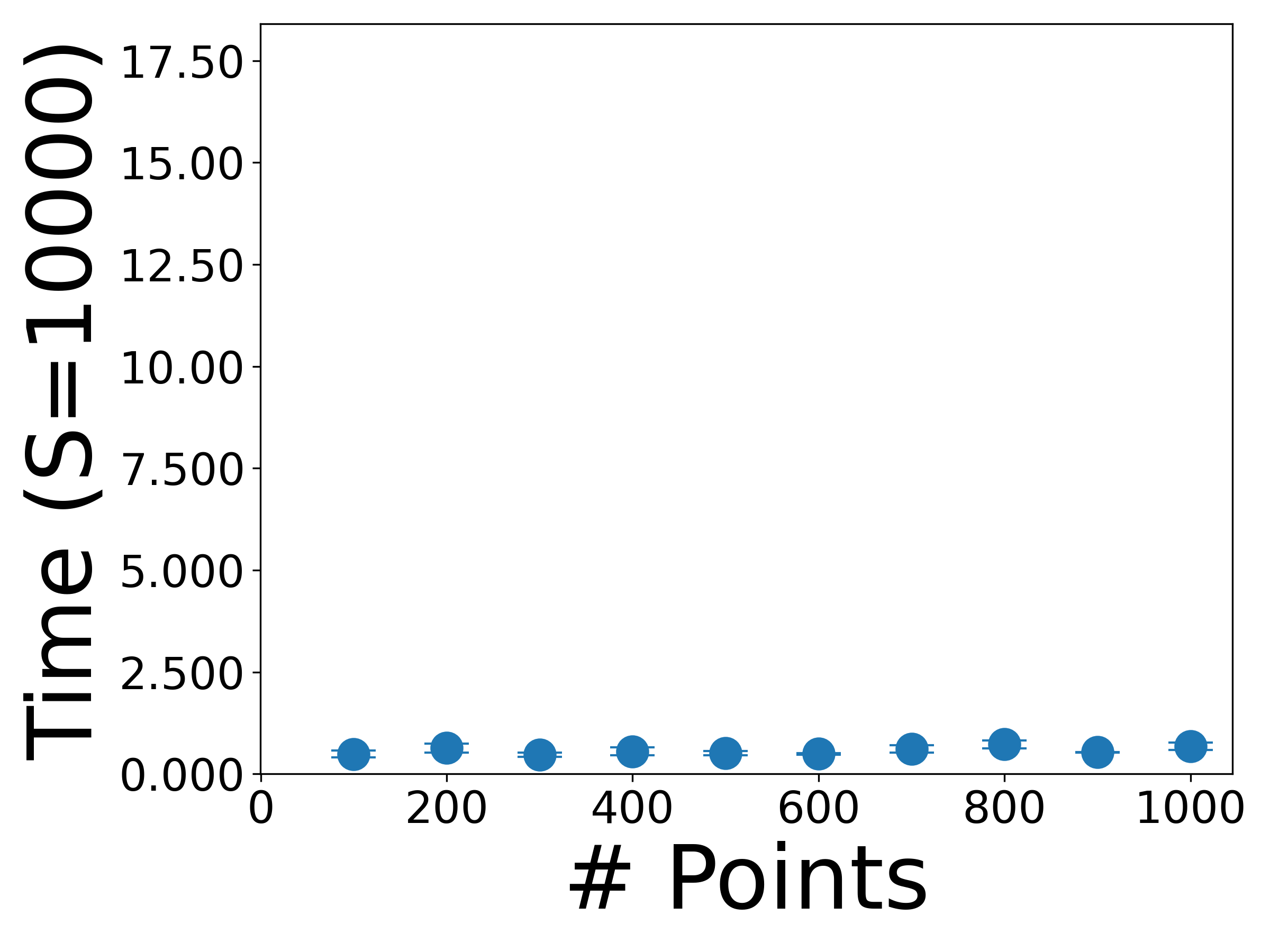}
    \end{subfigure}
    \hfill
    \begin{subfigure}{\textwidth}
    \centering
    \includegraphics[width=\linewidth]{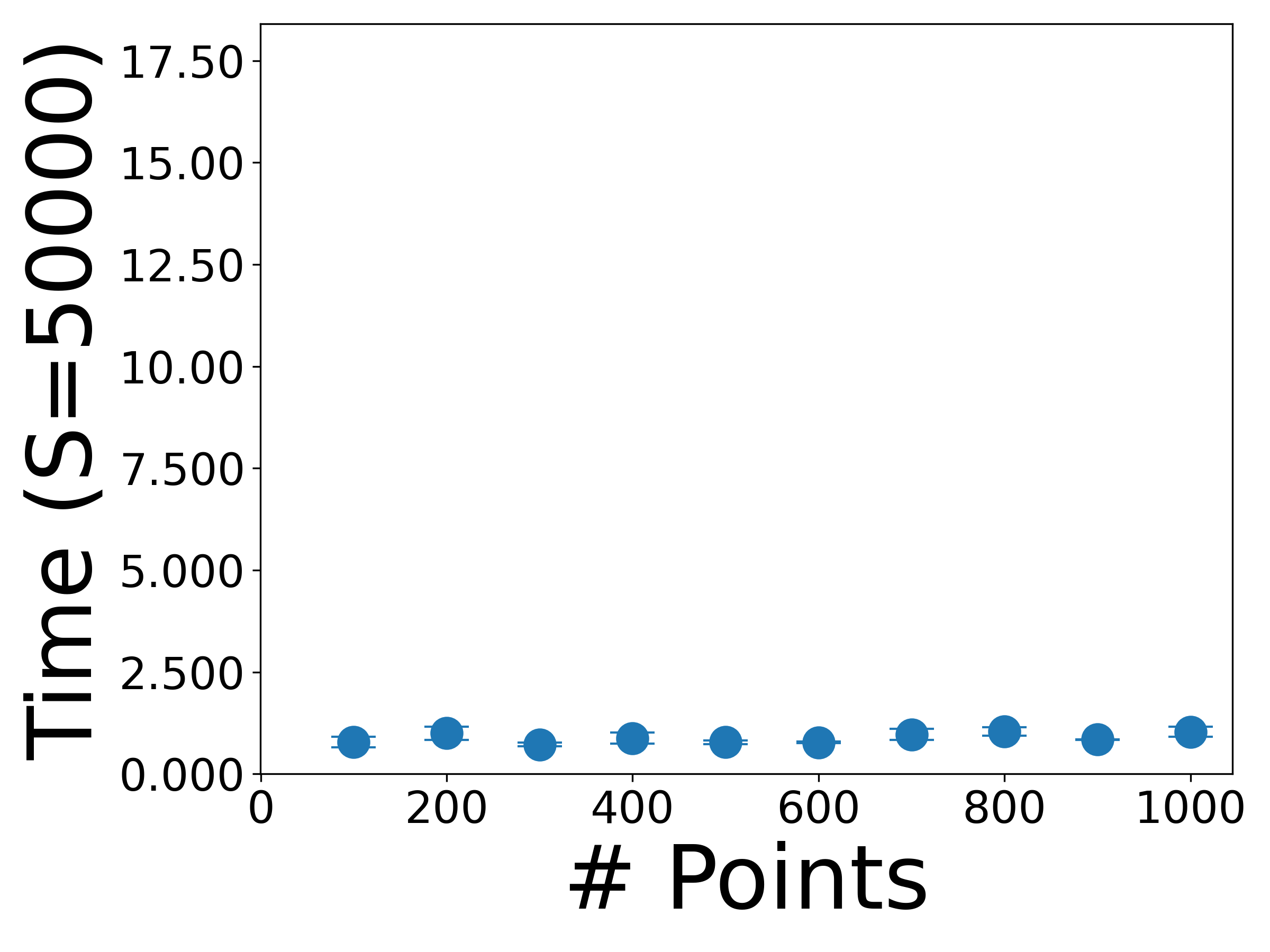}
    \end{subfigure}
    \vspace{-4mm}
    \caption*{Runtime in seconds.}
    \label{subfig:target_shapes_supp2}
  \end{subfigure}
  \hfill
  % Second subplot
    \begin{subfigure}[b]{0.45\textwidth}
    \centering
    \begin{subfigure}{\textwidth}
    \centering
    \includegraphics[width=\linewidth]{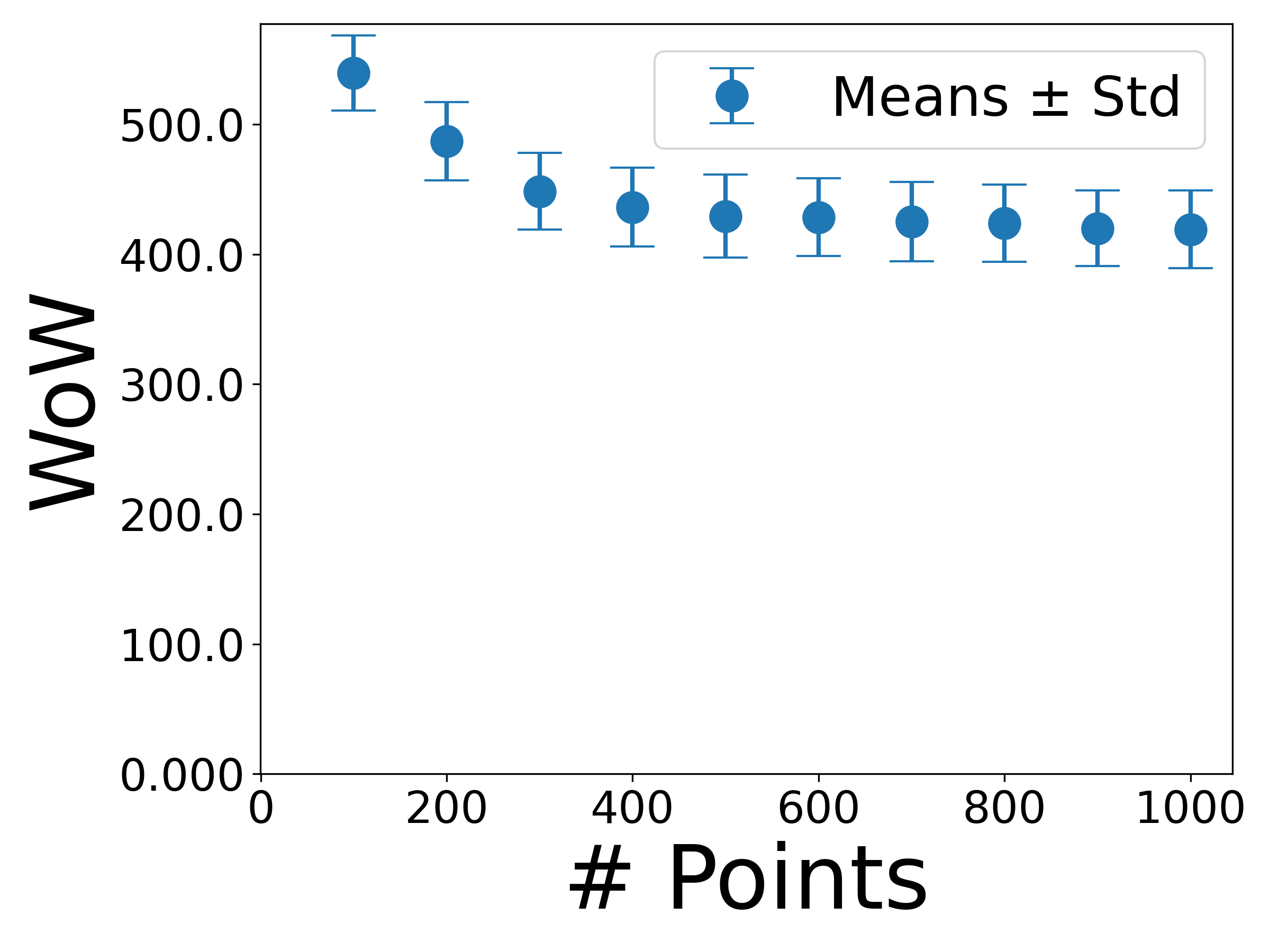}
    \end{subfigure}
    \hfill
    \begin{subfigure}{\textwidth}
    \centering
    \includegraphics[width=\linewidth]{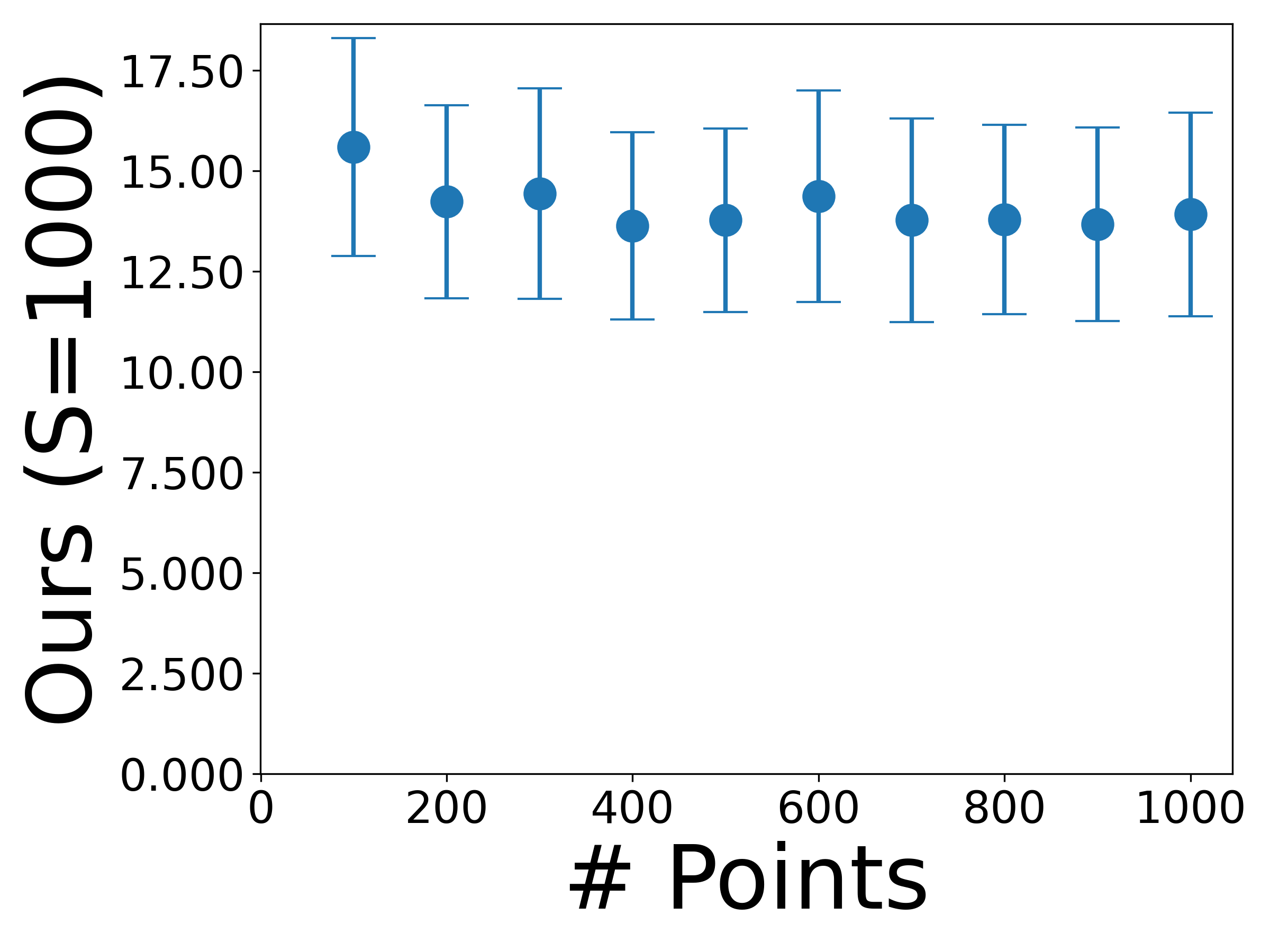}
    \end{subfigure}
     \hfill
    \begin{subfigure}{\textwidth}
    \centering
    \includegraphics[width=\linewidth]{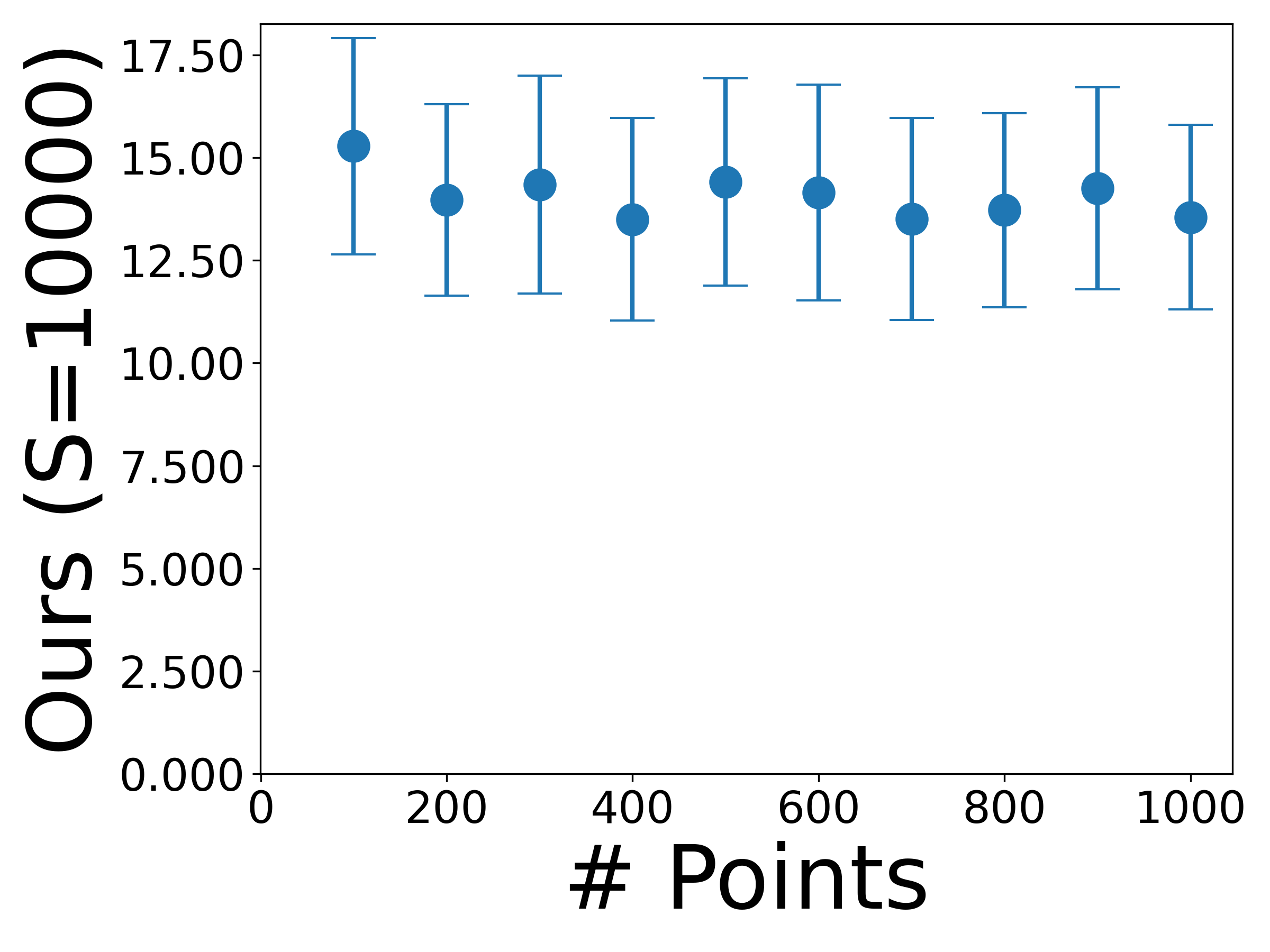}
    \end{subfigure}
     \hfill
    \begin{subfigure}{\textwidth}
    \centering
    \includegraphics[width=\linewidth]{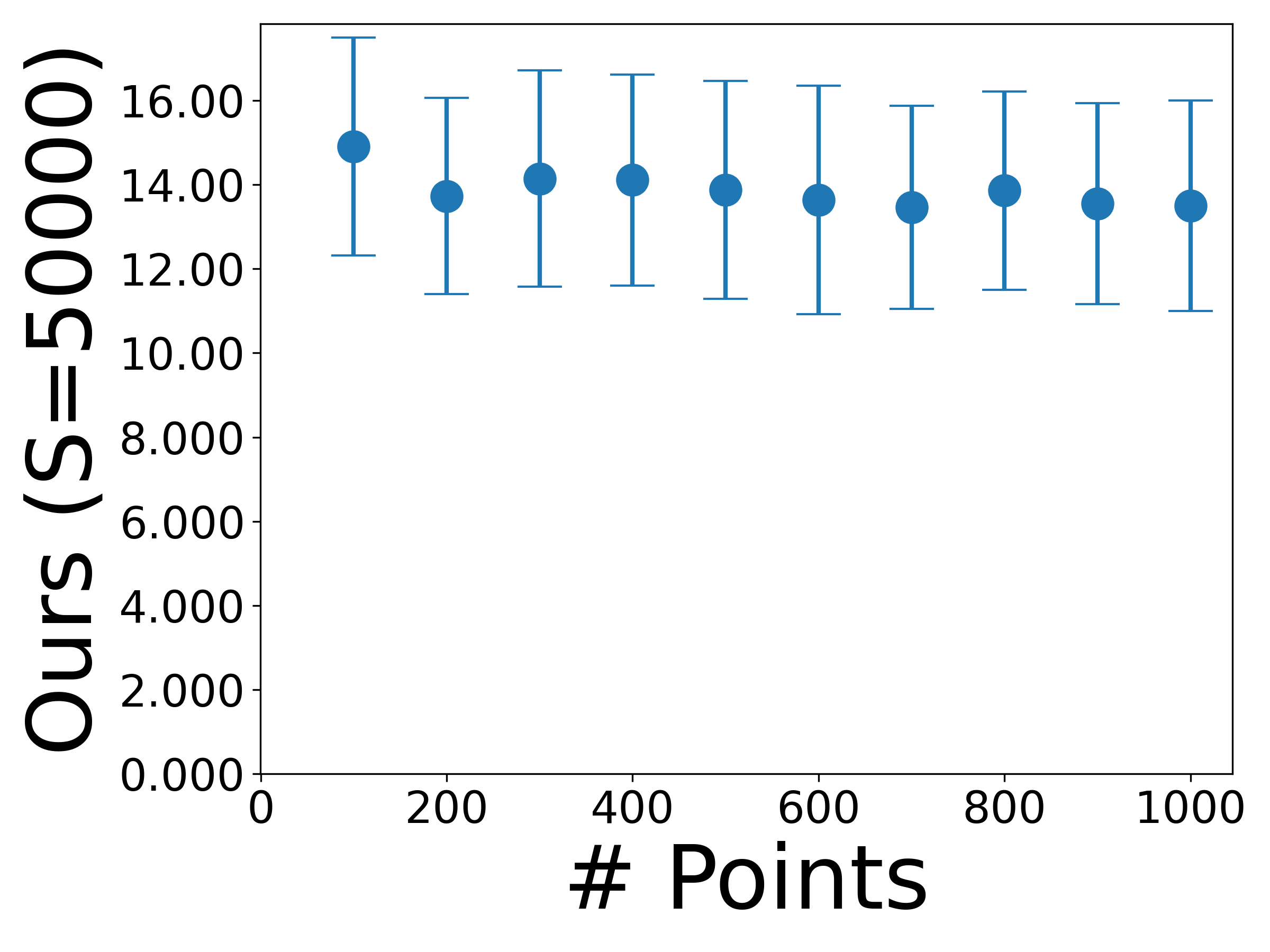}
    \end{subfigure}
    \vspace{-4mm}
    \caption*{Distance estimates.}
  \end{subfigure}
  \caption{Averaged WoW and DSW \revise{(`Ours')} estimates between sets of point clouds with $100$ to $1000$ points per shape and projection number $S=1000$, $S=10,000$, $S=50,000$. 
  }
  \label{fig:point_runtime_experiment_outer}
\end{figure}

\clearpage

\subsection{Extension of Section~\ref{subsection:patches} on Image Comparison via Patches}
\label{app:patch_stuff}
\subsubsection{Representing Images via Patches}
We formalize the patch extraction. For a grayscale image $\operatorname{Img}\in\R^{h\times w}$ define the patch extractor
\[
\operatorname{Patch}_k^p:\R^{h\times w}\to\R^{p^2},\qquad k=1,\dots,n_p,
\]
with $n_p=(h-p+1)(w-p+1)$. Write
\[
z_k:=\operatorname{Patch}_k^p(\operatorname{Img})\in\R^{p^2},
\]
so that the empirical patch distribution is
\[
\mu_{\operatorname{Img}}=\frac{1}{n_p}\sum_{k=1}^{n_p}\delta_{z_k}\in\eProb(\R^{p^2}).
\]
Its support is
\[
\operatorname{supp}(\mu_{\operatorname{Img}})=\{\operatorname{Patch}_k^p(\operatorname{Img}) : k=1,\dots,n_p\}\subset\R^{p^2},
\]
and for a batch $\{\operatorname{Img}_i\}_{i=1}^B$ the meta-measure is
\[
\boldsymbol{\mu}=\frac{1}{B}\sum_{i=1}^B\delta_{\mu_{\operatorname{Img}_i}}\in\eProb(\eProb(\R^{p^2})).
\]
\subsubsection{\revise{The Relevance of Patch Distributions}}
\revise{
Many advances in image processing rely on the importance of local image features \citep{zontak2011internal}. Indeed, convolutional neural networks in computer vision repeatedly apply the same filter to a small receptive field, and this receptive field can be understood as a patch. Moreover, vision transformers decompose images into smaller patches \citep{dosovitskiy2021an_vit_visiontransformer}. The advantage of this approach can be motivated by the relevance of small-range dependencies within images.

Notably, a key disadvantage of the standard MSE in imaging is its vulnerability to `small' image operations. A small shift of all pixels can lead to the explosion of the MSE. However, such operations have only a small effect on the patch distributions. Indeed, the same goes for the translation of an object within an image, see visualizations in \citep{he2024multiscale_perceptual_sliced_patch}.
In addition, the perceived style of an image seems to be inherently linked to certain localized image features.
As an example, style transfer algorithms successfully capture certain artistic aspects of painting via such features \citep{gatys2016image}. Moreover, texture images are characterized by a certain type of stationarity, 
where a model can generate texture images by simply matching the patch distribution of a single exemplary texture image \citep{houdard2023generative}. 
This could explain why patch-based WoW-type methods lead to clearer discriminiation than Euclidean Wasserstein methods, cf.\ Figure~\ref{fig:image_eval_ot_supp}. 
}
\subsubsection{Additional Experimental Details}
In the experiment from Section~\ref{subsection:patches}, 
we compare distributions over synthetic texture images. 
We visualize samples from our random Perlin texture model \citep{perlin1985image}
in Figure~\ref{fig:perlin_viz}.
Note that our images with varying lacunarity (\ref{subfig:lac_viz_supp}) are all generated with the following Perlin parameters: persistence of $1$, scale of $100$, $6$ octaves.
The generation model will be released with the code.
For our images with varying persistence (\ref{subfig:persis_viz_supp}), we use different Perlin parameters: lacunarity of $2.5$, scale of $100$, $5$ octaves. Note that while the resulting images in Figure~\ref{subfig:lac_viz_supp} and Figure~\ref{subfig:persis_viz_supp} look rather similar, the ones from Figure~\ref{subfig:lac_viz_supp} display a higher blur and less high-frequency artifacts.

Moreover, we extend Figure~\ref{fig:image_eval_ot}.
In addition to the Wasserstein distance between images represented as Euclidean points 
and our patch-based DSW distance plotted in the original Figure~\ref{fig:image_eval_ot}, 
we present the extended Figure~\ref{fig:image_eval_ot_supp} by adding the patch-based WoW distance and the `\emph{Kernel Inception Distance}' (KID) between the distributions of texture images. The patch-based WoW distance is computed on the same patch meta-measures as our patch-based DSW distance. The KID is based on the latent space of a pretrained neural network, see \citep{sutherland2018demystifying_kid}.
We see that the DSW and the WoW distance lead to similar results. Also, both are aligned with the KID.

\begin{figure}[ht]
  \centering
  % First subplot
  \begin{subfigure}[b]{0.45\textwidth}
    \centering
    \begin{subfigure}{\textwidth}
    \centering
    \includegraphics[width=\linewidth]{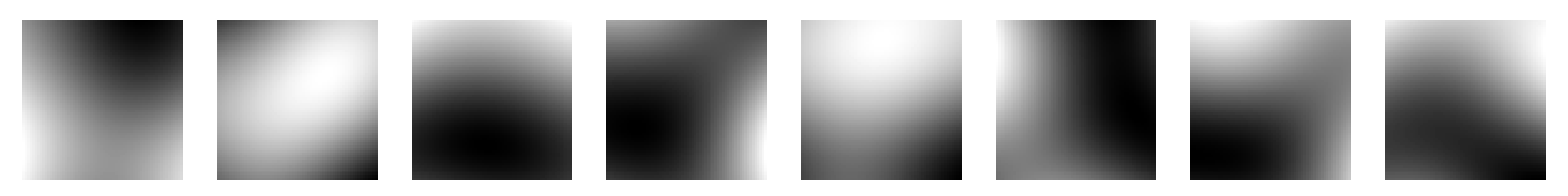}
    \caption*{Lacunarity $1.0$}
    \end{subfigure}
    \hfill
        \begin{subfigure}{\textwidth}
    \centering
    \includegraphics[width=\linewidth]{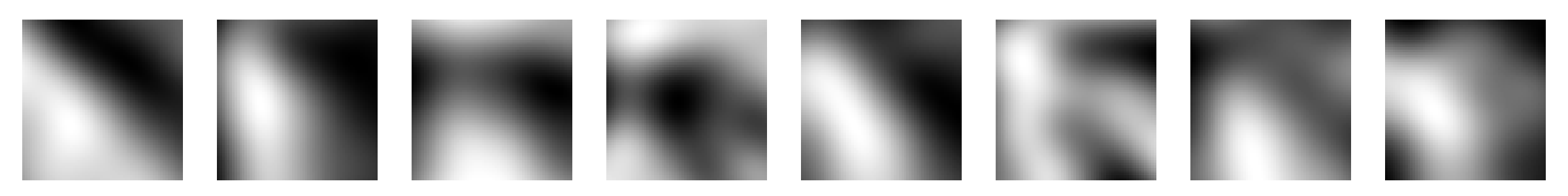}
    \caption*{Lacunarity $1.25$}
    \end{subfigure}
    \hfill
        \begin{subfigure}{\textwidth}
    \centering
    \includegraphics[width=\linewidth]{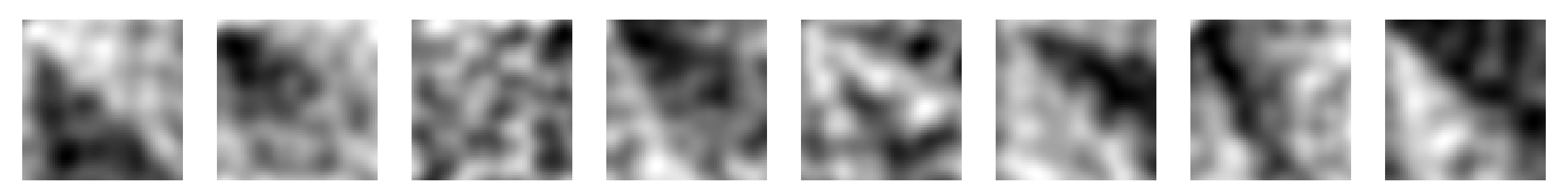}
    \caption*{Lacunarity $1.5$}
    \end{subfigure}
    \hfill
    \begin{subfigure}{\textwidth}
    \centering
    \includegraphics[width=\linewidth]{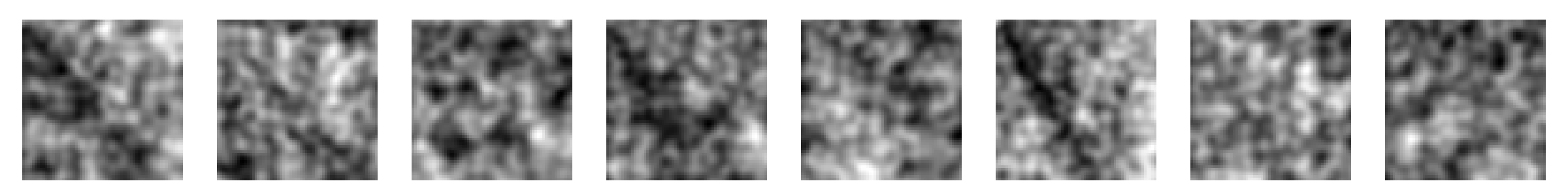}
    \caption*{Lacunarity $1.75$}
    \end{subfigure}
        \hfill
    \begin{subfigure}{\textwidth}
    \centering
    \includegraphics[width=\linewidth]{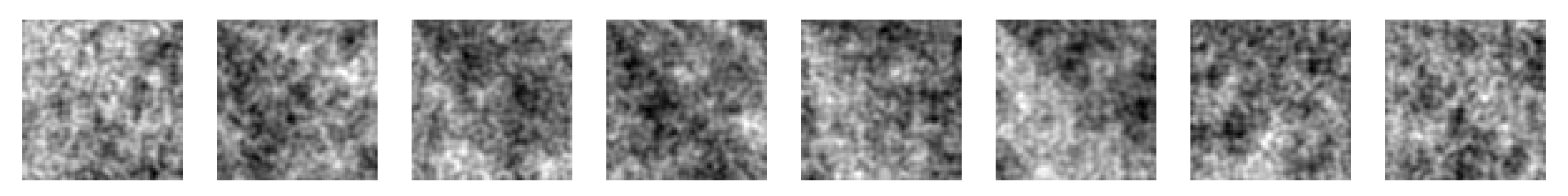}
    \caption*{Lacunarity $2.0$}
    \end{subfigure}
    \vspace{-4mm}
    \caption{Varying Lacunarity in Perlin Noise.}
    \label{subfig:lac_viz_supp}
  \end{subfigure}
  \hfill
   \begin{subfigure}[b]{0.45\textwidth}
    \centering
    \begin{subfigure}{\textwidth}
    \centering
    \includegraphics[width=\linewidth]{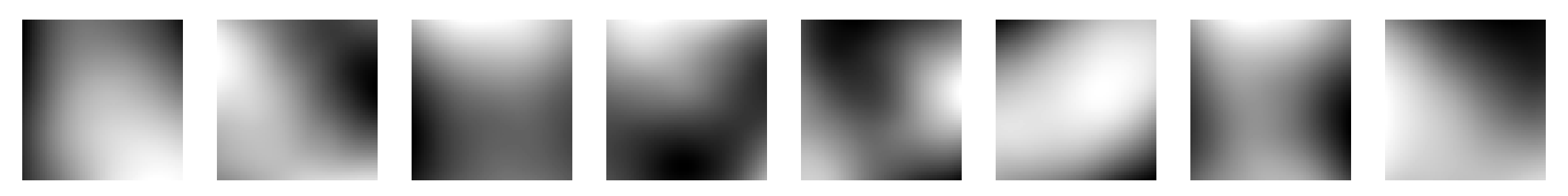}
    \caption*{Persistence $0.1$}
    \end{subfigure}
    \hfill
        \begin{subfigure}{\textwidth}
    \centering
    \includegraphics[width=\linewidth]{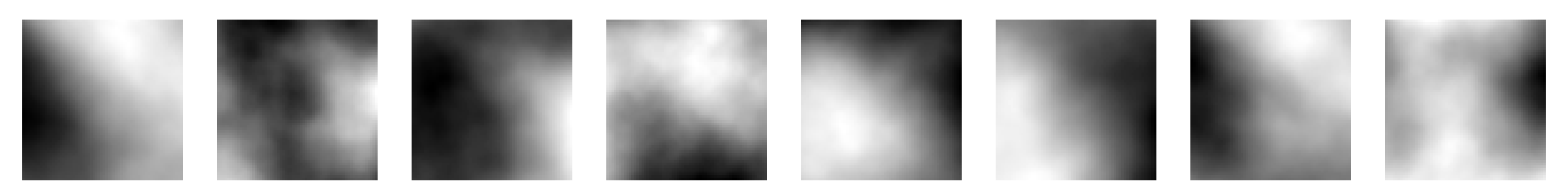}
    \caption*{Persistence $0.25$}
    \end{subfigure}
    \hfill
        \begin{subfigure}{\textwidth}
    \centering
    \includegraphics[width=\linewidth]{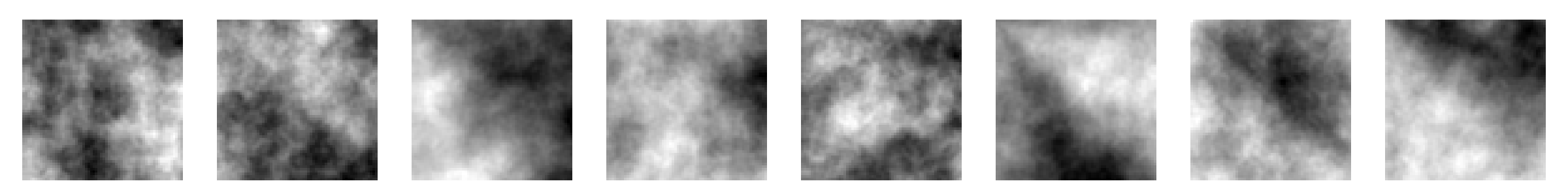}
    \caption*{Persistence $0.5$}
    \end{subfigure}
    \hfill
    \begin{subfigure}{\textwidth}
    \centering
    \includegraphics[width=\linewidth]{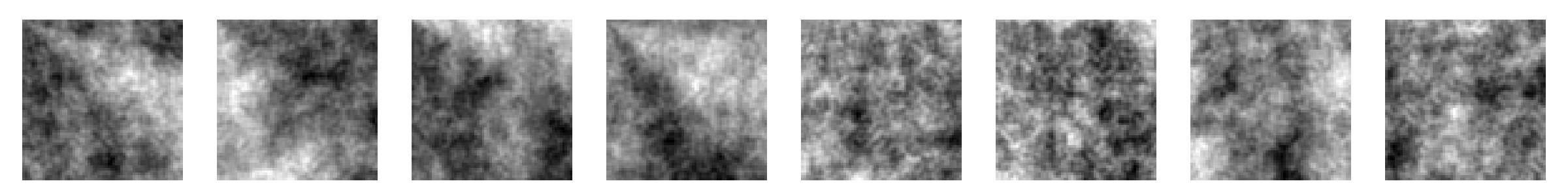}
    \caption*{Persistence $0.75$}
    \end{subfigure}
        \hfill
    \begin{subfigure}{\textwidth}
    \centering
    \includegraphics[width=\linewidth]{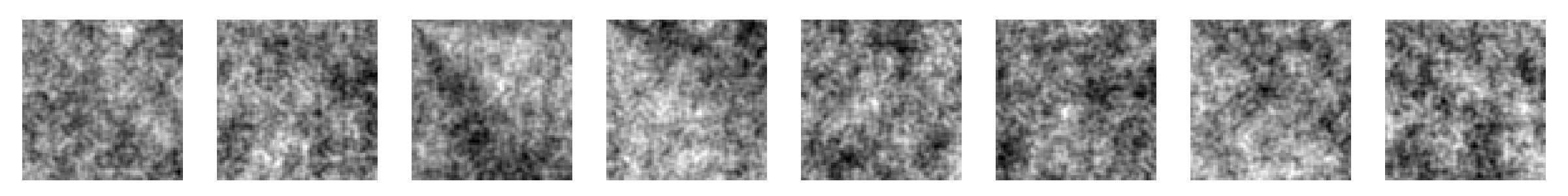}
    \caption*{Persistence $1.0$}
    \end{subfigure}
    \vspace{-4mm}
    \caption{Varying Lacunarity in Perlin Noise.}
    \label{subfig:persis_viz_supp}
  \end{subfigure}
  \caption{Samples from our Perlin texture noise for varying lacunarity (\ref{subfig:lac_viz_supp})
  and
  `persistence' (\ref{subfig:persis_viz_supp}).
  }
  \label{fig:perlin_viz}
\end{figure}

\begin{figure}[ht]
  \centering
  % First subplot
  \begin{subfigure}[b]{0.45\textwidth}
    \centering
    \begin{subfigure}{\textwidth}
    \centering
    \includegraphics[width=\linewidth]{images/patch/kid_lac.png}
    \end{subfigure}
    \hfill
        \begin{subfigure}{\textwidth}
    \centering
    \includegraphics[width=\linewidth]{images/patch/ot_lac.png}
    \end{subfigure}
    \hfill
        \begin{subfigure}{\textwidth}
    \centering
    \includegraphics[width=\linewidth]{images/patch/wow_lac.png}
    \end{subfigure}
    \hfill
    \begin{subfigure}{\textwidth}
    \centering
    \includegraphics[width=\linewidth]{images/patch/swow_lac.png}
    \end{subfigure}
    \vspace{-4mm}
    \caption{Varying Lacunarity in Perlin Noise}
    \label{subfig:{subfig:lac_supp}}
  \end{subfigure}
  \hfill
  % Second subplot
    \begin{subfigure}[b]{0.45\textwidth}
    \centering
    \begin{subfigure}{\textwidth}
    \centering
    \includegraphics[width=\linewidth]{images/patch/kid_p.png}
    \end{subfigure}
    \hfill
    \begin{subfigure}{\textwidth}
    \centering
    \includegraphics[width=\linewidth]{images/patch/ot_p.png}
    \end{subfigure}
     \hfill
    \begin{subfigure}{\textwidth}
    \centering
    \includegraphics[width=\linewidth]{images/patch/wow_p.png}
    \end{subfigure}
     \hfill
    \begin{subfigure}{\textwidth}
    \centering
    \includegraphics[width=\linewidth]{images/patch/swow_p.png}
    \end{subfigure}
    \vspace{-4mm}
    \caption{Varying Persistence in Perlin Noise}
    \label{subfig:persis_supp}
  \end{subfigure}
  \caption{ Comparing synthetic texture image batches via Euclidean Wasserstein, \revise{patch-based DSW (`Ours')}, patch-based WoW, and \revise{KID} for varying `lacunarity' (\ref{subfig:lac}) and `persistence' (\ref{subfig:persis_supp}).
  }
  \label{fig:image_eval_ot_supp}
\end{figure}

\clearpage

\section{Use of LLMs}
LLMs have been used to a limited extent to improve grammar and wording in this paper.
\end{document}